\documentclass{article}
\usepackage{etoolbox}       %

\usepackage{graphicx}  %

\usepackage[parfill]{parskip}
\usepackage{amssymb,amsmath,amsthm,bm}
\usepackage{mathtools}  %
\usepackage[dvipsnames,table]{xcolor}         %
\usepackage{thmtools} %
\usepackage{adjustbox}
\usepackage{arydshln}
\usepackage{colortbl}     %
\usepackage{booktabs}     %
\usepackage{multirow}     %

\newtoggle{arxiv}
\toggletrue{arxiv}

  \usepackage[parfill]{parskip}
  \usepackage[citestyle=authoryear-comp,sorting=nyt,maxbibnames=99,backend=biber]{biblatex}
  \newcommand{\citep}{\parencite}
  \newcommand{\citet}{\textcite}
  \newlength{\defbaselineskip}
  \RequirePackage[T1]{fontenc}
  \RequirePackage[tt=false, type1=true]{libertine}
  \RequirePackage[varqu]{zi4}
  \RequirePackage[libertine]{newtxmath}

\usepackage{bm}

\usepackage[inline]{enumitem}
\usepackage{booktabs}
\usepackage{subcaption}
\usepackage{multirow}
\usepackage{wrapfig,lipsum}
\usepackage{algorithm,algorithmicx,algpseudocode}
\usepackage{nicematrix}

\usepackage[colorlinks=true,linkcolor=blue,citecolor=magenta,urlcolor=red]{hyperref}
\usepackage[capitalise,noabbrev]{cleveref}

\usepackage{import}

\usepackage{pifont}%

\usepackage{siunitx}
\usepackage[normalem]{ulem}
\robustify\bfseries
\robustify\uline
\def\Uline#1{#1\llap{\uline{\phantom{#1}}}}
\newrobustcmd\Bf{\DeclareFontSeriesDefault[rm]{bf}{b}\bfseries}

\definecolor{prune}{rgb}{0.44, 0.11, 0.11}
\definecolor{myblue}{rgb}{0, .5, 1}
\definecolor{maroon}{rgb}{0.5450, 0, 0}
\definecolor{darkred}{rgb}{0.5450, 0, 0}
\definecolor{RoyalBlue}{RGB}{0,100,170}
\definecolor{DarkBlue}{RGB}{20,70,200}
\definecolor{peach}{rgb}{1, 0.56, 0.56}
\definecolor{NotionGreen}{RGB}{15,123,108}
\definecolor{NotionOrange}{RGB}{217,115,13}
\definecolor{NotionRed}{RGB}{224,62,62}
\definecolor{red}{RGB}{224,62,62}
\definecolor{midgray}{RGB}{150,150,150}
\definecolor{lavender}{rgb}{0.75, 0.58, 0.89}
\definecolor{Indigo7}{RGB}{66, 99, 235}
\definecolor{Green7}{RGB}{55, 178, 77}
\definecolor{Yellow7}{RGB}{245, 159, 0}
\definecolor{Red7}{RGB}{240, 62, 62}

\definecolor{TorchCol}{gray}{0.35}
\definecolor{TritonCol}{HTML}{1F77B4}
\definecolor{CuTeCol}{HTML}{FF7F0E}
\definecolor{TileLangCol}{HTML}{FF69B4}

\newcommand{\torch}[1]{\textcolor{TorchCol}{#1}}
\newcommand{\triton}[1]{\textcolor{TritonCol}{#1}}
\newcommand{\cute}[1]{\textcolor{CuTeCol}{#1}}
\newcommand{\tilelang}[1]{\textcolor{TileLangCol}{#1}}

\def\eqref#1{equation~(\ref{#1})}
\def\Eqref#1{Equation~(\ref{#1})}
\def\plaineqref#1{(\ref{#1})}

\def\1{\bm{1}}

\def\vg{{\bm{g}}}
\def\vh{{\bm{h}}}

\def\vv{{\bm{v}}}

\def\vx{{\bm{x}}}
\def\vy{{\bm{y}}}
\def\vz{{\bm{z}}}

\def\mB{{\bm{B}}}

\DeclareMathAlphabet{\mathsfit}{\encodingdefault}{\sfdefault}{m}{sl}
\SetMathAlphabet{\mathsfit}{bold}{\encodingdefault}{\sfdefault}{bx}{n}

\def\bA{{\bm{A}}}
\def\bB{{\bm{B}}}
\def\bC{{\bm{C}}}

\def\bH{{\bm{H}}}

\def\bK{{\bm{K}}}
\def\bL{{\bm{L}}}

\def\bO{{\bm{O}}}

\def\bQ{{\bm{Q}}}
\def\bR{{\bm{R}}}

\def\bU{{\bm{U}}}
\def\bV{{\bm{V}}}
\def\bW{{\bm{W}}}
\def\bX{{\bm{X}}}
\def\bY{{\bm{Y}}}
\def\bZ{{\bm{Z}}}

\theoremstyle{plain}

\theoremstyle{definition}

\theoremstyle{remark}
\newtheorem{remark}{Remark}

\newcommand{\albert}[1]{\textcolor{blue}{}}

\newtoggle{comments}
\toggletrue{comments}
\iftoggle{comments}{
    \newcommand\todo[1]{\textcolor{red}{[TODO: #1]}}
    \newcommand\AG[1]{\textcolor{magenta}{[AG: #1]}}
    \newcommand\AGr[1]{}
    \newcommand\TD[1]{\textcolor{prune}{[TD: #1]}} 
    \newcommand\KYL[1]{\textcolor{orange}{[KL: #1]}}
    \newcommand{\AvB}[1]{\textcolor{cyan}{[AB: #1]}}
    \newcommand{\BC}[1]{\textcolor{peach}{[BC: #1]}}
}{
    \newcommand\todo[1]{}
    \newcommand\AG[1]{}
    \newcommand\TD[1]{}
    \newcommand\AGr[1]{}
    \newcommand\KYL[1]{}
    \newcommand{\AvB}[1]{}
    \newcommand{\BC}[1]{}
}

\NewDocumentCommand\CITE{o}{%
  \IfNoValueTF{#1}
    {\textcolor{red}{[CITE]}}
    {\textcolor{red}{[#1]}}%
}

\newcommand*\samethanks[1][\value{footnote}]{\footnotemark[#1]}

\title{Mamba-3: Improved Sequence Modeling using State Space Principles}
\usepackage{authblk}
\author[$^{1}$]{Aakash Lahoti\,\thanks{Equal contribution.}}
\author[$^{1}$]{Kevin Y. Li\,\samethanks[1]}
\author[$^{2}$]{Berlin Chen\,\samethanks[1]}
\author[$^{2}$]{Caitlin Wang\,\samethanks[1]}
\author[$^{1}$]{Aviv Bick}
\author[$^{1}$]{J. Zico Kolter}
\author[$^{23}$]{\authorcr Tri Dao\thanks{Equal advising.}}
\author[$^{14}$]{Albert Gu\samethanks[2]}

\affil[ ]{\normalsize $^1$Carnegie Mellon University \quad $^2$Princeton University \quad $^3$Together AI \quad $^4$Cartesia AI}
\affil[ ]{{\texttt{\{alahoti, kyl2, abick, zkolter, agu\}@cs.cmu.edu} \quad \texttt{\{bc2188, caitlinwang, tridao\}@princeton.edu}}}

\date{}

\begin{document}
\maketitle

\begin{abstract}
\noindent 
Scaling inference-time compute has emerged as an important driver of LLM performance, making inference efficiency a central focus of model design alongside model quality.
While the current Transformer-based models deliver strong model quality, their quadratic compute and linear memory make inference expensive.
This has spurred the development of sub-quadratic models with reduced linear compute and constant memory requirements.
However, many recent linear models trade off model quality and capability for algorithmic efficiency, failing on tasks such as state tracking. Moreover, their theoretically linear inference remains hardware-inefficient in practice.
Guided by an inference-first perspective, we introduce three core methodological improvements inspired by the state space model (SSM) viewpoint of linear models.
We combine: (1) a more expressive recurrence derived from SSM discretization, (2) a complex-valued state update rule that enables richer state tracking, and (3) a multi-input, multi-output (MIMO) formulation for better model performance without increasing decode latency. 
Together with architectural refinements, our \textbf{Mamba-3} model achieves significant gains across retrieval, state-tracking, and downstream language modeling tasks.
At the 1.5B scale, Mamba-3 improves average downstream accuracy by 0.6 percentage points compared to the next best model (Gated DeltaNet), with Mamba-3's MIMO variant further improving accuracy by another 1.2 points for a total 1.8 point gain.
Across state-size experiments, Mamba-3 achieves comparable perplexity to Mamba-2 despite using half of its predecessor's state size.
Our evaluations demonstrate Mamba-3's ability to advance the performance-efficiency Pareto frontier.
\end{abstract}

\section{Introduction}
Test-time compute has emerged as a key driver of progress in LLMs, with techniques like chain-of-thought reasoning and iterative refinement demonstrating that inference-time scaling can unlock new capabilities~\citep{wu2025inferencescalinglawsempirical,snell2024scalingllmtesttimecompute}.
The rapid rise of parallel, agentic workflows has only intensified the need for efficient inference and deployment of such models~\citep{openai_gpt53_codex_2026,anthropic_claude_opus_46_2026}.
This paradigm shift makes inference efficiency~\citep{kwon2023efficientmemorymanagementlarge,li2024llminferenceservingsurvey} paramount, as the practical impact of AI systems now depends critically on their ability to perform large-scale inference during deployment.
Model architecture design plays a fundamental role in determining inference efficiency, as architectural choices directly dictate the computational and memory requirements during generation.
While Transformer-based models~\citep{vaswani2017attention} are the current industry standard, they are fundamentally bottlenecked by linearly increasing memory demands through the KV cache and quadratically increasing compute requirements through the self-attention mechanism.
These drawbacks have motivated recent lines of work on sub-quadratic models, e.g., state space models (SSMs) and linear attention, which retain constant memory and linear compute while attaining comparable or better performance than their Transformer counterparts.
These models have made it into the mainstream, with layers such as Mamba-2~\citep{dao2024transformersssmsgeneralizedmodels} and Gated DeltaNet (GDN)~\citep{schlag2021deltarule,yang2025parallelizinglineartransformersdelta} recently incorporated into large-scale hybrid models that match the performance of pure Transformer alternatives with much higher efficiency~\citep{nvidia2025nemotronhfamilyaccurateefficient,qwen3technicalreport,kimiteam2025kimilinearexpressiveefficient,tencenthunyuanteam2025hunyuanturbosadvancinglargelanguage}.

Despite the success of linear models, significant progress remains in improving their performance, in particular on advancing the Pareto frontier between model quality and inference efficiency.
For example, Mamba-2 was developed to improve training speed and simplicity over Mamba-1~\citep{gu2024mambalineartimesequencemodeling}, by sacrificing some expressivity and thus performing worse for inference-matched models.
In addition, they have been shown to lack certain capabilities, such as poor state-tracking abilities, e.g., simply determining parity of bit sequences~\citep{grazzi2025unlockingstatetrackinglinearrnns,sarrof2024expressivecapacitystatespace}.
Finally, despite these sub-quadratic models being prized for theoretically efficient inference and thus their widespread adoption, their inference algorithms are not hardware efficient.
In particular, because these algorithms were developed from a training perspective, their decoding phase has low arithmetic intensity (the ratio of FLOPs to memory traffic), resulting in large portions of hardware remaining idle. 

To develop more performant models from an inference-first paradigm, we introduce three core methodological changes on top of Mamba-2, influenced by an SSM-centric viewpoint of sub-quadratic models.

\paragraph{Exponential-Trapezoidal Discretization.} 
We provide a simple technique for discretizing time-varying, selective SSMs.
Through our framework, we can derive several new discretization methods.
One of our instantiations, referred to as ``exponential-Euler,'' formalizes Mamba-1 and Mamba-2's heuristic discretization that previously lacked theoretical justification.
Our new ``exponential-trapezoidal'' instantiation is a more expressive generalization of ``exponential-Euler,'' where the recurrence can be expanded to reveal an implicit convolution applied on the SSM input. 
Combined with explicit $B,C$ bias terms,
Mamba-3 can empirically replace the short causal convolution in language model architectures, which was previously hypothesized to be essential for recurrent models.

\paragraph{Complex-valued State Space Model.} By viewing the underlying SSM of Mamba-3 as complex-valued, we enable a more expressive state update than Mamba-2's. This change in update rule, designed to be lightweight for training and inference, overcomes the lack of state-tracking ability common in many current linear models. We show that our complex-valued update rule is equivalent to a data-dependent rotary embedding and can be efficiently computed~\citep{su2023roformerenhancedtransformerrotary}, and empirically demonstrate its ability to solve synthetic tasks outside the capabilities of prior linear models.

\paragraph{Multi-Input, Multi-Output (MIMO) SSM.} To improve FLOP efficiency during decoding, we switch from an outer-product–based state update to a matrix-multiplication–based state update.
From the view of the signal processing foundations of SSMs, such a transition exactly coincides with the generalization from a single-input single-output (SISO) sequence dynamics to a multiple-input multiple-output (MIMO) one. Here, we find that MIMO is particularly suitable for inference, as the extra expressivity enables more computation during the memory-bound state update during decoding, without increasing the state size and compromising speed.

Put together, these improvements form the core of our \textbf{Mamba-3} layer.
Methodologically, we note that these all arise naturally from an SSM-centric perspective but are not immediate from other popular viewpoints of modern linear layers such as linear attention or test-time regression;
we discuss these connections further in~\Cref{sec:related}.
Empirically, we validate our new model's abilities and capabilities on a suite of synthetic state-tracking and language-modeling tasks.

\begin{itemize}[itemsep=0.1pt,topsep=0pt,leftmargin=*]
    \item \textbf{Better Quality.} 
    At 1.5B scale, Mamba-3 (MIMO) improves downstream language modeling accuracy by \textbf{+2.2} over Transformers, \textbf{+1.9 points} over Mamba-2, and \textbf{+1.8} over GDN, while Mamba-3 (SISO) improves over the next best model, GDN, by \textbf{+0.6} points.
    Furthermore, across state size experiments, Mamba-3 (MIMO) with state size 64 matches the perplexity of Mamba-2 with state size 128, effectively achieving the \textbf{same language modeling performance with half the latency}.
    \item \textbf{New Capabilities.} Mamba-3's complexification of the SSM state enables it to \textbf{solve synthetic state-tracking tasks that Mamba-2 cannot}. We empirically demonstrate that the efficient RoPE-like calculation is able to near perfectly solve arithmetic tasks, while Mamba-3 without RoPE and Mamba-2 perform no better than random guessing.
    \item \textbf{Inference Efficiency.} Mamba-3 (MIMO) improves hardware utilization. It increases decoding FLOPs by up to \textbf{4$\times$} relative to Mamba-2 at fixed state size, while maintaining \textbf{similar wall-clock decode latency}, and simultaneously improving perplexity and downstream performance. We release fast training and inference kernels for Mamba-3.\footnote{\url{https://github.com/state-spaces/mamba}.}
\end{itemize}

Mamba-3 (SISO) improves quality and capability over prior linear models, and Mamba-3 (MIMO) further improves performance over Mamba-3 (SISO) and other strong baselines while matching inference speed with Mamba-2.
Both of our Mamba-3 variants advance the performance-latency Pareto frontier through their strong modeling capabilities and hardware-efficient design.

\section{Preliminaries}
\subsection{Notation}
Scalars are denoted by plain-text letters (e.g., $x, y$). 
Tensors, including vectors and matrices, are denoted by bold letters (e.g., $\vh, \bC$). 
The shape of the tensor can be inferred from the context.
We denote the input sequence length as $T$, the model dimension as $D$, and the SSM state size as $N$. %
For time indices, we use subscripts (e.g., $x_t$ for the input at time $t$).
The Hadamard product between two tensors is denoted by $\odot$.
For a vector $\vv \in \mathbb{R}^d$, we denote $\mathrm{Diag}(\vv) \in \mathbb{R}^{d \times d}$ as the diagonal matrix with the vector $\vv$ as the diagonal, and for products of scalars across time steps, we use the notation $\alpha_{t \cdots s} = \alpha^\times_{t:s} = \prod_{i=s}^{t} \alpha_i$.

\subsection{SSM Preliminaries}

State Space Models (SSMs) describe continuous-time linear dynamics via
\begin{align*}
\dot{\vh}(t) &= \bA(t)\,\vh(t) + \bB(t)\,x(t), &
y(t) &= \bC(t)^\top \vh(t),
\end{align*}
where $\vh(t)\!\in\!\mathbb{R}^N$ is the hidden state, $x(t)\!\in\!\mathbb{R}$ the input, and $\bA(t)\!\in\!\mathbb{R}^{N\times N}$, $\bB(t),\bC(t)\!\in\!\mathbb{R}^N$.
We will occasionally refer to $\bA(t)$ as the \emph{state-transition} and $\bB(t)x(t)$ as the \emph{state-input}; this also extends to their discretized counterparts. 
For discrete sequences with step size $\Delta_t$,
Mamba-1 and Mamba-2 \emph{discretized} the system to the following recurrence
\begin{align*}
\vh_t &= e^{\Delta_t \bA_t}\,\vh_{t-1} + \Delta_t\,\bB_t\,x_t, &
y_t &= \bC_t^\top \vh_t .
\end{align*}

\paragraph{Mamba-2's Parameterization.}
The core of the Mamba-2 layer~\citep{dao2024transformersssmsgeneralizedmodels} is a \emph{data-dependent} and hardware-efficient SSM.
Both the state-transition and state-input are made data-dependent through the projection of $\Delta_t \in \mathbb{R}_{>0}$ and $\bB, \bC\in\mathbb{R}^N$ from the current token.
By parameterizing the state-transition $\bA_t$ as a scalar times identity ($\bA_t = A_t \bm{I}_{N\times N}$, where 
$A_t \in \mathbb{R}_{<0}$), the SSM recurrence can be efficiently computed with the matrix multiplication tensor cores of GPUs.
Defining $\alpha_t \coloneqq e^{\Delta_t A_t}\in(0,1)$ and $\gamma_t \coloneqq \Delta_t$, the update becomes
\begin{equation}
\label{eq:mamba2-recurrence-prelim}
\vh_t = \alpha_t\,\vh_{t-1} + \gamma_t\,\bB_t\,x_t, \qquad
y_t = \bC_t^\top \vh_t .
\end{equation}

The data-dependent state-transition $\alpha_t$ controls the memory horizon of each SSM within the layer.
$\Delta_t$ in particular modulates both the state-transition and state-input: a larger $\Delta_t$ forgets faster and up-weights the current token more strongly, while a smaller $\Delta_t$ retains the hidden state with minimal contributions from the current token.

\begin{remark}
In Mamba-2, $A_t$ is data-independent, since the overall discrete transition $\alpha_t \coloneqq e^{\Delta_t A_t}$ is data-dependent through $\Delta_t$.
In Mamba-3, we empirically found that data-dependent $A_t$ has similar performance to data-independent $A_t$, and chose the former as a default for consistency so that all SSM parameters are data-dependent.
\end{remark}

\subsection{Structured Masked Representation and State Space Duality}
\label{sec:prelim-sma}
Mamba-2 showed that a large class of SSMs admit a \emph{matrix} form that vectorizes the time-step recurrence.
Through the state space duality (SSD) framework, recurrent SSMs can be represented within a parallel form that incorporates an element-wise mask to model the state-transition decay.

SSD provides a general framework for a duality between linear recurrence and parallelizable (matrix-multiplication-based) computational forms
\begin{align}
    \bY 
    =
    (   \bL
        \odot 
        \bC {\bB}^{\top}
    )\bX
    \label{eq:ssd}
\end{align}
where $\bL \in \mathbb{R}^{T \times T}$ is a structured mask,
$\bB,\bC \in \mathbb{R}^{T \times N}$, $\bX \in \mathbb{R}^{T \times D}$ are the inputs to the SSM and $\bY \in \mathbb{R}^{T \times D}$ is its output.
Different structures on $\bL$ give rise to various instantiations of SSD.

\Eqref{eq:ssd} also draws a general connection between recurrence and attention, 
by setting $\bQ\coloneqq\bC$, $\bK\coloneqq\bB$, $\bV\coloneqq\bX$ and viewing $\bL$ as a data-dependent mask.
In fact, the simplest case of SSD is (causal) linear attention~\citep{linearattention}, where $\bL$ is the causal triangular mask.

Mamba-2 is a generalization where
\begin{align}
    \bL
    =
        \begin{bmatrix}
            1 \\
            \alpha_1 & 1 \\
            \vdots & & \ddots \\
            \alpha_{T...1} & \cdots & \alpha_T & 1
        \end{bmatrix}
        \cdot
        \operatorname*{Diag}(\gamma)
    \label{eq:mamba2-mask}
\end{align}
composed of terms $\alpha_t, \gamma_t$ from~\eqref{eq:mamba2-recurrence-prelim}.%
\footnote{In the original Mamba-2 paper, $\gamma$ does not appear because it is viewed as folded into the $\bB$ term. In this paper, $\bB_t$ represents the continuous parameter, whereas in Mamba-2, $\bB_t$ represents the discretized parameter which is equivalent to $\gamma_t \bB_t$.}

In \Cref{sec:method:trap:ssd}, we show that Mamba-3 is a generalization of Mamba-2 with a more expressive $\bL$, and hence also an instance of SSD.

\section{Methodology}

We introduce Mamba-3, a state space model with three new innovations:
``exponential-trapezoidal'' discretization for more expressive dynamics (\cref{sec:method:trap}),
complex-valued state spaces for state tracking (\cref{sec:method:complex}),
and multi-input multi-output (MIMO) to improve modeling power and inference-time hardware utilization (\cref{sec:method:mimo}).
These advances address the quality, capability, and efficiency limitations of current sub-quadratic architectures.
We combine these together into an updated Mamba architecture block in \cref{sec:method:arch}.

\subsection{Exponential-Trapezoidal Discretization}
\label{sec:method:trap}
Structured SSMs are naturally defined as continuous-time dynamical systems that map input functions, $x(t) \in \mathbb{R}$,  to output functions, $y(t) \in \mathbb{R}$, for time $t > 0$.
The underlying continuous state space system is defined by a first-order ordinary differential equation (ODE) for the state $\dot{\vh}(t)$ and an algebraic equation for the output $y(t)$.
In sequence modeling, however, the data is only observed at discrete time steps, which then requires applying a \emph{discretization step} to the SSM to transform its continuous-time dynamics into a discrete recurrence.
\begin{figure}
    \vspace{-1cm}
    \includegraphics[width=\linewidth]{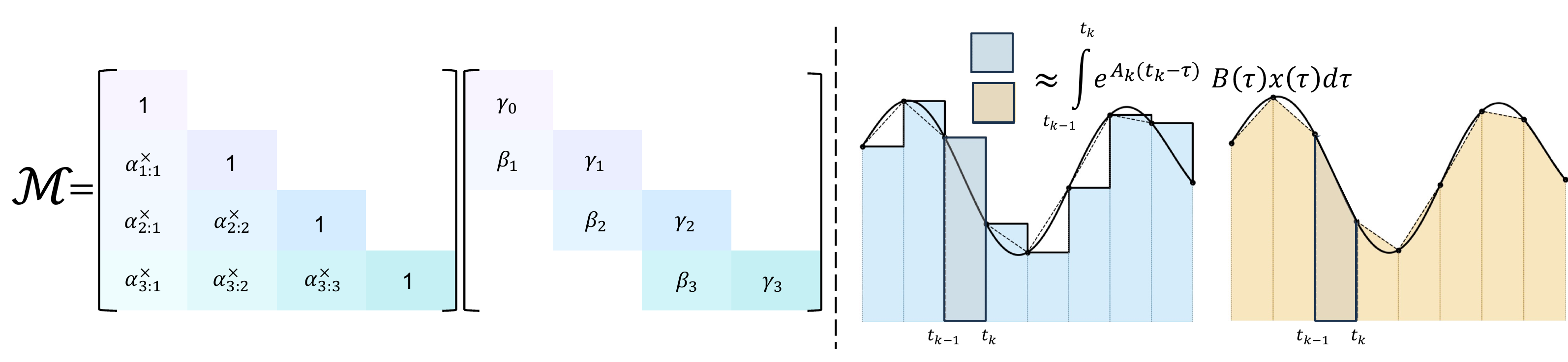}
    \caption{\textbf{Left:} The structured mask induced by the exponential-trapezoidal rule (\Cref{sec:method:trap}) is a product of the decay and two-band convolutional mask. \textbf{Right:} Euler (hold endpoint) versus Trapezoidal (average endpoints) integral approximation.} 
    \label{fig:trap}
\end{figure}

\begin{table*}[t]
  \centering
  \small
  \renewcommand{\arraystretch}{1.2}
  \newcommand{\minitab}[2][l]{\renewcommand{\arraystretch}{1.0}\begin{tabular}[#1]{@{}#1@{}}#2\end{tabular}}
  \caption{
  Table of canonical linear-time invariant discretizations (top) and custom linear-time varying discretizations derived from our exponential-adjusted framework (bottom), along with their appearance in structured SSMs used in deep learning.
  Our theory formalizes the prior Mamba discretization as exponential-Euler and extends it with the more expressive exponential-trapezoidal method. 
  The generalized discretization framework converts a continuous SSM $\dot{\vh(t)} = \bA(t) \vh(t) + \bB(t) x(t)$ into the discrete recurrence $\vh_t = \alpha_t \vh_{t-1} + \beta_t \bB_{t-1} x_{t-1} + \gamma_t \bB_t x_t$, where various discretization methods yield different formulas for $\alpha_t, \beta_t, \gamma_t$.
  }
  \resizebox{\textwidth}{!}{
    \begin{tabular}{@{}lllll@{}}
        \toprule
        \textbf{Discretization Method} &
        $\alpha_t$ &
        $\beta_t$ &
        $\gamma_t$ &
        \textbf{Appearance} \\
        \midrule
        \textbf{Forward Euler} &
        $I + \Delta A$ &
        --- &
        $\Delta$ &
        --- \\
        \textbf{Backward Euler} &
        $(I-\Delta A)^{-1}$ &
        --- &
        $(I-\Delta A)^{-1}\,\Delta$ &
        --- \\
        \textbf{Trapezoidal} &
        $(I-\frac{\Delta}{2}A)^{-1}\,(I+\frac{\Delta}{2}A)$ &
        --- &
        $(I-\frac{\Delta}{2}A)^{-1}\,\Delta$ &
        S4 \\
        \textbf{Zero-Order Hold} &
        $\exp({\Delta A})$ &
        --- &
        $A^{-1}\,\left(\exp({\Delta A})-I\right)$ &
        \minitab{S4D, S5} \\
        \addlinespace[0.6ex]
        \midrule
        \textbf{Zero-Order Hold} &
        $\exp({\Delta_tA_t})$ &
        --- &
        $A_t^{-1}\,\left(\exp({\Delta_tA_t})-I\right)$ & 
        \\
        \textbf{Exponential-Euler} &
        $\exp({\Delta_tA_t})$ &
        --- &
        $\Delta_t$ &
        Mamba-1, -2\footnotemark \\
        \textbf{Exponential-Trapezoidal} &
        $\exp({\Delta_tA_t})$ &
        $(1-\lambda_t)\Delta_t\,\exp({\Delta_tA_t})$ &
        $\lambda_t\,\Delta_t$ &
        Mamba-3 \\
        \bottomrule
      \end{tabular}
    }
  \label{tab:discretizations}
\end{table*}
\footnotetext{While the Mamba-1 paper reports ZOH discretization, the implementation follows \url{https://github.com/state-spaces/mamba/issues/129}.}

Discretization methods are well-studied in classical control theory with several canonical formulas used in earlier SSM works in deep learning~\citep{S4,S4D,S5}.
These mechanisms were traditionally stated and applied to linear-time invariant (LTI) systems, and their derivations do not directly apply to linear-time varying (LTV) systems.
Additionally, while Mamba-1 adapted the zero-order hold (ZOH) method to LTV systems without proof, the complexity associated with selective SSMs prompted the use of an additional heuristic approximation that lacked theoretical justification and did not correspond to any established discretization technique.
In the following subsection, we formalize the previous heuristics used in current LTV SSMs through our discretization framework and utilize it to propose a more expressive discretization scheme.
\subsubsection{Overview of Exponential-Adjusted Discretization}
We introduce a simple derivation that leads to a class of new discretization methods for LTV state space models. The method can be instantiated in various ways; we show that one instantiation results in the heuristic used in Mamba-1/2, thereby theoretically justifying it (exponential-Euler). We also introduce a more powerful discretization (exponential-trapezoidal) used in Mamba-3.

The high-level intuition of our derivation originates from the closed form solution $x(t) = e^{tA}x(0)$ of a simple linear ODE $x'(t) = Ax(t)$, which discretizes to $x_{t+1} = e^{\Delta A}x_t$.
In this example, the exponential dominates the dynamics of the underlying first-order ODE, resulting in imprecise approximations when using low-order methods without significantly constraining $\Delta$.
Thus, we analyze the dynamics of the \emph{exponential-adjusted} system $e^{-At}x(t)$.
The adjusted system yields a discrete recurrent form where the state-transition and the state-input integrals are approximated separately---the state-transition integral is approximated by a right-hand approximation, i.e. $A(s) \coloneqq A(\tau_t)$ for all $s \in [\tau_{t-1},\tau_t]$, yielding,
\begin{align*}
    \vh(\tau_t) &= \underbrace{\exp\left(\int_{\tau_{t-1}}^{\tau_t}A(s)ds\right)\vh(\tau_{t-1})}_\text{via right-hand approximation} + \underbrace{\int_{\tau_{t-1}}^{\tau_t} \exp\left(\int_{\tau}^{\tau_t}A(s)ds\right)\bB(\tau)x(\tau) d\tau}_\text{via different discretization schemes},
    \\
    \vh_t &\approx \exp\left(\Delta_t A_t\right)\vh_{t-1} + \int_{\tau_{t-1}}^{\tau_t} \exp\left((\tau_t -\tau)A_t\right)\bB(\tau)x(\tau) d\tau,
\end{align*}
which serves as the foundation for further discretization techniques for the state-input integral. The full derivation is detailed in \Cref{prop:var-const}.

\paragraph{ZOH.}
The classical zero-order hold discretization method can be derived from the foundation above with a specific approximation of the right-hand side integral.
By treating $A_t, \bB(\tau), x(\tau)$ as constants over the interval $[\tau_{t-1}, \tau_t]$ where the values are fixed to the right endpoint $\tau_t$, the integral results in $A_t^{-1}\,\left(\exp(\Delta_tA_t) - I\right)\bB_tx_t$.

We note that this formally proves that the classical ZOH formula for LTI systems applies to LTV by naively replacing the parameters $A, B, \Delta$ with their time-varying ones.

\paragraph{Exponential-Euler (Mamba-1/-2).}
While Mamba-1 stated to use the time-varying ZOH formula above, Mamba-1 and Mamba-2 actually used an additional approximation in the released implementation.
This discretization can be recovered by approximating the state-input integral with \emph{Euler's rule} \citep{Süli_Mayers_2003} and holding the (right) endpoint constant throughout the interval (Fig. \ref{fig:trap})
\begin{align}
    \vh_t \;&\approx\; e^{\Delta_t A_t}\,\vh_{t-1} 
        \;+\; (\tau_t - \tau_{t-1}) e^{(\tau_t-\tau_t) A_t}\,\bB_t\,x_t \nonumber \\
    &=\; e^{\Delta_t A_t}\,\vh_{t-1} 
        \;+\; \Delta_t\,\bB_t\,x_t.
    \label{eq:mamba2-recurrence}
\end{align}

We call \eqref{eq:mamba2-recurrence}
the \emph{exponential-Euler} discretization method,
stemming from the exponential integration followed by Euler approximation.
This derivation justifies the formulas used in Mamba-1/-2's implementation.

\paragraph{Exponential-Trapezoidal (Mamba-3).}

However, Euler's rule provides only a first-order approximation of the state-input integral and its local truncation error scales as $O(\Delta_t^2)$.
In contrast, we introduce a \emph{generalized trapezoidal rule}, which provides a second-order accurate approximation of the integral, offering improved accuracy over Euler's rule.
Specifically, it approximates the integral with a \emph{data-dependent, convex combination of both interval endpoints}.
This generalization extends the classical trapezoidal rule \citep{Süli_Mayers_2003}, which simply averages the interval endpoints (\Cref{fig:trap}).

\begin{restatable}[Exponential-Trapezoidal Discretization]{proposition}{PropTrap}\label{prop:trap}
Approximating the state-input integral in~\eqref{eq:approx-step} by the general trapezoidal rule yields the recurrence,
\begin{align}
    \vh_t 
    \;&=\; 
    e^{\Delta_t A_t} \vh_{t-1} 
    \;+\; (1-\lambda_t)\Delta_t e^{\Delta_t A_t} \bB_{t-1} x_{t-1} 
    \;+\; \lambda_t \Delta_t \bB_t x_t,
    \\
    &\eqqcolon\; 
    \alpha_t \vh_{t-1} 
    \;+\; \beta_t \bB_{t-1} x_{t-1} 
    \;+\; \gamma_t \bB_t x_t,
    \label{eq:mamba3-recurrence}
\end{align}
where $\lambda_t \in [0,1]$ is a data-dependent scalar, $\alpha_t \coloneqq e^{\Delta_t A_t}$, $\beta_t \coloneqq (1-\lambda_t)\Delta_t e^{\Delta_t A_t}$, $\gamma_t \coloneqq \lambda_t \Delta_t$.

\end{restatable}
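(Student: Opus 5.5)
The plan is to start from the exponential-adjusted variation-of-constants identity summarized above (the full derivation deferred to \Cref{prop:var-const}). After the right-hand approximation $A(s)\coloneqq A_t$ on $[\tau_{t-1},\tau_t]$, it gives
\begin{align*}
\vh_t \approx e^{\Delta_t A_t}\vh_{t-1} + \int_{\tau_{t-1}}^{\tau_t} f(\tau)\,d\tau,
\qquad f(\tau)\coloneqq e^{(\tau_t-\tau)A_t}\bB(\tau)x(\tau),
\end{align*}
with $\Delta_t=\tau_t-\tau_{t-1}$. This is the step referred to as~\eqref{eq:approx-step}. Only the state-input integral remains, and everything reduces to evaluating the integrand $f$ at the two endpoints of the interval.

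First, I would state the generalized trapezoidal rule as a quadrature rule on $[\tau_{t-1},\tau_t]$ with data-dependent weight $\lambda_t\in[0,1]$:
\begin{align*}
\int_{\tau_{t-1}}^{\tau_t} f(\tau)\,d\tau \approx \Delta_t\bigl((1-\lambda_t) f(\tau_{t-1}) + \lambda_t f(\tau_t)\bigr).
\end{align*}
Setting $\lambda_t=1$ recovers the right-endpoint Euler rule used for \eqref{eq:mamba2-recurrence}, and $\lambda_t=\tfrac12$ recovers the classical trapezoidal rule. This makes clear that the new recurrence generalizes exponential-Euler.

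Second, I would evaluate the endpoints, identifying the samples $\bB(\tau_s)x(\tau_s)$ with $\bB_s x_s$ for $s\in\{t-1,t\}$. At the right endpoint the exponential factor is $e^{0\cdot A_t}=I$, so $f(\tau_t)=\bB_t x_t$. At the left endpoint it is $e^{(\tau_t-\tau_{t-1})A_t}=e^{\Delta_t A_t}$, so $f(\tau_{t-1})=e^{\Delta_t A_t}\bB_{t-1}x_{t-1}$. Substituting these into the quadrature rule and adding the transition term $e^{\Delta_t A_t}\vh_{t-1}$ gives exactly the first line of~\eqref{eq:mamba3-recurrence}. Reading off the coefficients gives $\alpha_t=e^{\Delta_t A_t}$, $\beta_t=(1-\lambda_t)\Delta_t e^{\Delta_t A_t}$, and $\gamma_t=\lambda_t\Delta_t$. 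Because $A_t$ is scalar times identity in the Mamba parameterization, $e^{\Delta_t A_t}$ commutes with everything and can be written as the scalar factor in $\beta_t$.

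The only delicate point is a consistency issue rather than a computation. The left-endpoint value must use the exponent $A_t$ frozen by the right-hand approximation, not $A_{t-1}$. Likewise, $\bB(\tau_{t-1})x(\tau_{t-1})$ must be identified with the previous token's discrete $\bB_{t-1}x_{t-1}$. I would state both conventions explicitly, together with the fact that the paper's recurrence uses "$=$" for the defining discretization and "$\approx$" for the underlying ODE. If one wants the claimed second-order accuracy for $\lambda_t=\tfrac12$, one can add the standard remark that the trapezoidal error is $O(\Delta_t^3 \sup|f''|)$ locally. That is not needed for the identity itself.
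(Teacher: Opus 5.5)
Your proposal is correct and follows essentially the same route as the paper. The paper starts from the variation-of-constants step with the right-hand freezing $A(s)\coloneqq A_t$, then applies the convex-combination rule $Q_\lambda=\Delta_t\bigl[(1-\lambda_t)g(\tau_{t-1})+\lambda_t g(\tau_t)\bigr]$ to $g(\tau)=e^{(\tau_t-\tau)A_t}\bB(\tau)x(\tau)$ and evaluates the endpoints as you do; it never writes this out as a standalone proof, and the rule appears explicitly only in its error analysis. Your remark that the left endpoint carries the frozen $A_t$ rather than $A_{t-1}$ is a useful clarification the paper leaves implicit.
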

\begin{remark}[Expressivity]
    The exponential-trapezoidal rule is a generalization of (a) the classical trapezoid rule, which is recovered when $\lambda_t = \tfrac{1}{2}$, and (b) Mamba-2's Euler's rule, which is recovered when $\lambda_t = 1$.
\end{remark}
\begin{remark} [Error Rate]
    This is a second-order discretization of the state-input integral and its error scales as $O(\Delta_t^3)$ 
    under standard stability assumptions, provided that the trapezoidal parameter satisfies $\lambda_t = \tfrac{1}{2} + O(\Delta_t)$.
    However, our ablations indicate that not enforcing this constraint is better for empirical performance. See Appendix~\ref{app:trap-error-proof} and \ref{app:trap-param} for details.
\end{remark}

Our new discretization framework and the two instantiations, exponential-Euler and exponential-trapezoidal, are, to the best of our knowledge, novel for structured SSMs used in deep learning.
\Cref{tab:discretizations} compares and summarizes canonical and commonly used discretization schemes for state space models.

\subsubsection{Exponential-Trapezoidal Recurrence as an Implicit Convolution}

Our generalized exponential-trapezoidal discretization is equivalent to applying a \emph{data-dependent} convolution of size two on the state-input to the SSM.
In particular, a normal SSM in recurrent form materializes the state-input $\vv_t = \bB_tx_t$, then computes a linear recurrence $\vh_t = \alpha_t \vh_{t-1} + \gamma_t \vv_t$. In \eqref{eq:mamba3-recurrence} we instead first apply a width-2 convolution on $\vv_t$ (weighted by $\beta, \gamma$) before passing it into the linear recurrence.

\begin{remark} [Convolution Differences]
    There is a distinct difference between the ``convolution'' induced by exponential-trapezoidal discretization and the standard short convolutions used by sequence models such as Mamba and GDN.
    Standard short convolutions are independent operations applied on $x_t$ (and often $\bB_t, \bC_t$) \textit{outside} the core recurrence, while our new discretization can be interpreted as a convolution on the \emph{state-input} $\bB_t x_t$ \emph{within} the core recurrence.
\end{remark}

\subsubsection{Parallel Representation of Exponential-Trapezoidal Recurrence}
\label{sec:method:trap:ssd}

Our new recurrence can be instantiated as a case of SSD and has a corresponding parallel form to \eqref{eq:ssd}.
Expanding the state recurrence from $\vh_0 = \gamma_0 \bB_0 x_0$ results in $\vh_T = \alpha_{T\cdots 2}(\gamma_0\alpha_1 + \beta_1)\bB_0x_0 + \cdots + \gamma_T \bB_T x_T$, where the SSM output is $\vy_T = \alpha_{T\cdots 2}(\gamma_0\alpha_1 + \beta_1) \bC_T^\top \bB_0 x_0 + \cdots + \gamma_T \bC_T^\top \bB_T x_T$.
Unrolling these rows shows that the mask induced by the trapezoidal update is no longer a fixed averaging of endpoints (as in the classical trapezoid rule), but a \emph{data-dependent convex combination} of the two interval endpoints. 

Under the SSD framework~\plaineqref{eq:ssd} with parallel form $\bY = (\bL \odot \bC {\bB}^{\top})\bX$,
Mamba-3 corresponds to a mask $\bL$ whose structure is a 1-semiseparable matrix composed with a 2-band matrix:\footnote{Incidentally, this is a special case of a 2-semiseparable matrix.}
\begin{align}
\bL =
    \begin{bmatrix}
    \gamma_0 & & & \\
    (\gamma_0\alpha_1 + \beta_1) & \gamma_1 & & \\
    \alpha_2(\gamma_0\alpha_1 + \beta_1) & (\gamma_1\alpha_2+\beta_2) & \gamma_2 & \\
    \vdots & & & \ddots \\
    \alpha_{T\cdots 2}(\gamma_0\alpha_1 + \beta_1) & & & \cdots & \gamma_T
    \end{bmatrix} =
    \begin{bmatrix}
    1 & & & \\
    \alpha_1 & 1 & & \\
    \alpha_2\alpha_1 & \alpha_2 & 1 & \\
    \vdots & & & \ddots \\
    \alpha_{T\cdots 1} & & & \cdots & 1
    \end{bmatrix}
    \begin{bmatrix}
    \gamma_0 & & & \\
    \beta_1 & \gamma_1 & & \\
    0 & \beta_2 & \gamma_2 & \\
    \vdots & & & \ddots \\
    0 & & & \cdots & \gamma_T
    \end{bmatrix}.
    \label{eq:mamba3-mask}
\end{align}

This parallel formulation enables the hardware-efficient matmul-focused calculation of the SSM output for training. 

We note that the convolutional connection of Mamba-3 can also be seen through this parallel dual form, where multiplication by the 2-band matrix in \eqref{eq:mamba3-mask} represents convolution with weights $\beta, \gamma$.
In \Cref{app:proof-tensor}, we use the SSD tensor contraction machinery to prove that the parallel form is equivalent to a vanilla SSM with a state-input convolution.

\begin{remark}
    The structured mask of Mamba-3 can be viewed as generalizing Mamba-2, which instead of the 2-band matrix has a diagonal matrix with $\gamma_t$ only~\plaineqref{eq:mamba2-mask}.
\end{remark}

\subsection{Complex-Valued SSMs}
\label{sec:method:complex}

Modern SSMs are designed with efficiency as the central goal, motivated by the need to scale to larger models and longer sequences.
For instance, successive architectures have progressively simplified the state-transition matrix: S4~\citep{S4} used complex-valued Normal Plus Low Rank (NPLR) matrices, Mamba~\citep{gu2024mambalineartimesequencemodeling} reduced this to a diagonal of reals, and Mamba-2~\citep{dao2024transformersssmsgeneralizedmodels} further simplified it to a single scaled identity matrix.
Although these simplifications largely maintain language modeling performance, recent works~\citep{merrill2025illusionstatestatespacemodels,sarrof2024expressivecapacitystatespace,grazzi2025unlockingstatetrackinglinearrnns} have shown that the restriction to real, non-negative eigenvalue transitions degrades the capabilities of the model on simple state-tracking tasks---here referring primarily to the solvable-group regime ($\text{TC}^0$) such as parity---which can be solved by a one-layer LSTM.
This limitation, formalized in Theorem 1 of \citep{grazzi2024mambacapableincontextlearning}, arises from restricting the eigenvalues of the transition matrix to real numbers, which cannot represent ``rotational'' hidden state dynamics.
For instance, consider the parity function on binary inputs $\{0,1\}$, defined as $\sum_t x_t \bmod 2$.
This task can be performed using update: $\vh_t = \bR(\pi x_t) \vh_{t-1}$, where $\bR(\cdot)$ is a 2-D rotation matrix. 
Such rotational dynamics cannot be expressed with real eigenvalues. 

\subsubsection{Complex SSM with Exponential-Euler Discretization}
To recover this capability, we begin with complex SSMs~\plaineqref{eq:complex-ssm}, which \emph{are} capable of representing state-tracking dynamics. 
We show that, under discretization (\Cref{prop:var-const}), complex SSMs can be formulated as real SSMs with a \emph{block-diagonal transition matrix composed of $2 \times 2$ rotation matrices} (\Cref{prop:complex-to-real-ssm}). 
We then show that this is equivalent to applying \emph{data-dependent rotary embeddings} on both the input and output projections $\bB, \bC$ respectively.
This result establishes a theoretical connection between complex SSMs and data-dependent RoPE embeddings (\Cref{prop:rope-trick}).
Finally, the ``RoPE trick'' used in \citet{su2023roformerenhancedtransformerrotary} allows for an efficient implementation of complex-valued state-transition matrices with minimal computational overhead compared to real-valued SSMs.

\begin{restatable}[Complex-to-Real SSM Equivalence]{proposition}{PropComplexRealSSM}\label{prop:complex-to-real-ssm}
Consider a complex-valued SSM
\begin{align}
\label{eq:complex-ssm}
    \dot{\vh}(t) &= \mathrm{Diag}\big(A(t) + i \boldsymbol{\theta}(t)\big)\,\vh(t) 
    + \big(\bB(t) + i\hat{\bB}(t)\big)\,x(t), \\ \nonumber
    y(t) &= \mathrm{Re}\!\left(
        \big(\bC(t) + i\hat{\bC}(t)\big)^{\top}\vh(t)
    \right),
\end{align}
where $\vh(t) \in \mathbb{C}^{N/2}$, $\boldsymbol{\theta}(t),\bB(t),\hat{\bB}(t),\bC(t),\hat{\bC}(t)\in\mathbb{R}^{N/2}$, and $x(t),A(t)\in\mathbb{R}$.
Under exponential-Euler discretization, this system is equivalent to a real-valued SSM
\begin{align}
\label{eq:real-ssm}
    \vh_t &= e^{\Delta_t A_t}\,\bR_t\,\vh_{t-1} + \Delta_t \bB_t x_t, \\  \nonumber
    y_t   &= \bC_t^{\top}\vh_t,
\end{align}
with state $\vh_t \in \mathbb{R}^N$, projections 
\[
\bB_t \coloneqq \begin{bmatrix} \bB_t \\ \hat{\bB}_t \end{bmatrix} \in \mathbb{R}^N, 
\qquad 
\bC_t \coloneqq \begin{bmatrix} \bC_t \\ -\hat{\bC}_t \end{bmatrix} \in \mathbb{R}^N, 
\]
and a transition matrix 
\[
\bR_t \;\coloneqq\; \text{Block}\Big(\{R(\Delta_t \boldsymbol{\theta_t}[i])\}_{i=1}^{N/2}\Big)
\in \mathbb{R}^{N \times N}
,
\qquad 
R(\theta) \coloneqq 
\begin{bmatrix}
    \cos(\theta) & -\sin(\theta) \\
    \sin(\theta) & \cos(\theta)
\end{bmatrix}.
\]
\end{restatable}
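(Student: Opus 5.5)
The plan is to apply the exponential-Euler discretization of \Cref{prop:var-const} directly to the complex system~\eqref{eq:complex-ssm}, and then read off the real and imaginary parts. The transition $\mathrm{Diag}(A(t)+i\boldsymbol{\theta}(t))$ is diagonal. Every diagonal entry is a scalar ODE, so each complex coordinate $h[j]$ evolves independently. The variation-of-constants formula is purely linear-algebraic and carries over verbatim to $\mathbb{C}$. With the right-hand approximation of the transition on $[\tau_{t-1},\tau_t]$ and Euler's rule for the state-input integral, as in \eqref{eq:mamba2-recurrence}, I expect
\[
\vh_t = e^{\Delta_t A_t}\,\mathrm{Diag}\big(e^{i\Delta_t\boldsymbol{\theta}_t}\big)\,\vh_{t-1} + \Delta_t\big(\bB_t+i\hat{\bB}_t\big)x_t ,
\qquad
y_t=\mathrm{Re}\big((\bC_t+i\hat{\bC}_t)^\top\vh_t\big).
\]
The factorization $e^{\Delta_t(A_t+i\theta_t[j])}=e^{\Delta_tA_t}e^{i\Delta_t\theta_t[j]}$ is valid because the two exponents are commuting scalars.

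Next I realify through the identification $\mathbb{C}^{N/2}\cong\mathbb{R}^N$, writing $\vh_t=\vh^{\mathrm{re}}_t+i\vh^{\mathrm{im}}_t$. Multiplication by $e^{i\phi}$ acts on the pair $(\mathrm{Re},\mathrm{Im})$ as the rotation $R(\phi)$. Taking real and imaginary parts of the state update then gives
\[
\begin{bmatrix}\vh^{\mathrm{re}}_t\\ \vh^{\mathrm{im}}_t\end{bmatrix}=e^{\Delta_tA_t}\bR_t\begin{bmatrix}\vh^{\mathrm{re}}_{t-1}\\ \vh^{\mathrm{im}}_{t-1}\end{bmatrix}+\Delta_t\begin{bmatrix}\bB_t\\ \hat{\bB}_t\end{bmatrix}x_t .
\]
Here $x_t$ and $\Delta_t$ are real, so the input term splits cleanly. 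For the output, $\mathrm{Re}\big((c+i\hat c)(h^{\mathrm{re}}+ih^{\mathrm{im}})\big)=c\,h^{\mathrm{re}}-\hat c\,h^{\mathrm{im}}$, which is exactly $\bC_t^\top\vh_t$ with $\bC_t=[\bC_t;-\hat{\bC}_t]$. This accounts for the minus sign in the statement.

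The main obstacle is bookkeeping the ordering convention. With the stacked layout $[\mathrm{Re};\mathrm{Im}]$, the matrix $\bR_t$ is block-diagonal only after a fixed permutation that interleaves coordinates $j$ and $j+N/2$. In that stacked ordering it is literally the $2\times2$ block matrix $\begin{bmatrix}\cos & -\sin\\ \sin&\cos\end{bmatrix}$ with diagonal blocks. I would state explicitly that $\text{Block}(\cdot)$ is understood up to this permutation, applied consistently to $\vh$, $\bB$ and $\bC$. Since a permutation is an orthogonal change of basis, it leaves the input–output map unchanged. A second point to check is that the right-hand approximation of \Cref{prop:var-const} holds for the complex-valued $A(s)+i\boldsymbol{\theta}(s)$. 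It does, because scalar (diagonal) generators commute, so the time-ordered exponential reduces to the exponential of the integral, and freezing it at $\tau_t$ gives $e^{\Delta_t(A_t+i\boldsymbol{\theta}_t)}$.
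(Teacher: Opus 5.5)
Your proposal is correct and follows the paper's proof essentially step for step. Both discretize the diagonal complex system with the right-hand transition approximation and Euler's rule, factor $e^{\Delta_t(A_t+i\boldsymbol{\theta}_t)}=e^{\Delta_t A_t}e^{i\Delta_t\boldsymbol{\theta}_t}$, identify multiplication by $e^{i\phi}$ with $R(\phi)$ on (real, imaginary) pairs, and get the sign in $\bC_t$ from the real part of the output. Your point that the stacked layout $[\bB_t;\hat{\bB}_t]$ makes $\bR_t$ block-diagonal only after an interleaving permutation is valid, and it fixes a convention that the paper's ``stacking across $N/2$ pairs'' step leaves implicit.
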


The proof is given in \cref{app:proof-complex-to-real-ssm}. 

\Cref{prop:complex-to-real-ssm} shows that the discretized complex SSM of state dimension $N/2$ has an equivalent real SSM with doubled state dimension ($N$), and its transition matrix is a scalar decayed block-diagonal matrix of $2\times2$ data-dependent rotation matrices ($e^{\Delta_t A_t} \bR_t$).

\begin{restatable}[Complex SSM, Data-Dependent RoPE Equivalence]{proposition}{PropRoPETrick}\label{prop:rope-trick} 
Under the notation established in \Cref{prop:complex-to-real-ssm}, consider the real SSM defined in \eqref{eq:real-ssm} unrolled for $T$ time-steps.
The output of the above SSM is equivalent to that of a vanilla scalar transition matrix-based SSM~\plaineqref{eq:mamba2-recurrence} with a data-dependent rotary embedding applied on the $\bB,\bC$ components of the SSM, as defined by:
\begin{equation}
    \vh_t = e^{\Delta_tA_t}\vh_{t-1} + \left( \prod_{i=0}^t \bR^\top_i \right) \Delta_t\bB_tx_t, \qquad \qquad
    y_t = \left[\left( \prod_{i=0}^t \bR^\top_i \right)\bC_t\right]^\top\vh_t \qquad
\end{equation}
where the matrix product represents right matrix multiplication, e.g., $\prod_{i=0}^1\bR_i = \bR_0 \bR_1$. 
We refer to the usage of a transformed real-valued SSM to compute the complex SSM as the ``RoPE trick.''
\end{restatable}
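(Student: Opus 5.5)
The plan is a change of variables that absorbs the cumulative rotation into the state. Write $\bar\bR_t \coloneqq \prod_{i=0}^t \bR_i$ as the right-ordered product $\bR_0\bR_1\cdots\bR_t$, with the convention $\bar\bR_{-1}=\bm{I}$. Each $\bR_i$ is block diagonal with $2\times 2$ rotation blocks. Rotations in the same $2\times2$ plane commute, so the $\bR_i$ commute pairwise, and the ordering of the product is immaterial. Each $\bR_i$ is also orthogonal, so $\bar\bR_t^\top\bar\bR_t=\bm{I}$. Moreover $\bar\bR_t^\top = \prod_{i=0}^t \bR_i^\top$ (reversed order, but by commutativity this does not matter) and $\bar\bR_t=\bR_t\bar\bR_{t-1}$.

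Starting from the real SSM \eqref{eq:real-ssm}, $\vh_t = e^{\Delta_tA_t}\bR_t\vh_{t-1}+\Delta_t\bB_tx_t$ and $y_t=\bC_t^\top\vh_t$, define the rotated state $\tilde\vh_t \coloneqq \bar\bR_t^\top \vh_t$.
\begin{enumerate*}[label=(\roman*)]
\item Multiply the state recurrence on the left by $\bar\bR_t^\top$. The decay term becomes $\bar\bR_t^\top\bR_t\vh_{t-1}$. Since the $\bR_i$ commute and are orthogonal, $\bar\bR_t^\top\bR_t = \bar\bR_{t-1}^\top\bR_t^\top\bR_t=\bar\bR_{t-1}^\top$, so the decay term equals $\bar\bR_{t-1}^\top\vh_{t-1}=\tilde\vh_{t-1}$. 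This gives $\tilde\vh_t = e^{\Delta_tA_t}\tilde\vh_{t-1} + \bar\bR_t^\top\Delta_t\bB_tx_t$, which is exactly the claimed scalar-transition recurrence.
\item For the output, insert $\bar\bR_t\bar\bR_t^\top=\bm{I}$ to get $y_t=\bC_t^\top\bar\bR_t\bar\bR_t^\top\vh_t=(\bar\bR_t^\top\bC_t)^\top\tilde\vh_t$, which is the claimed readout.
\item Since the map $\vh_t\mapsto\tilde\vh_t$ is a bijection, the outputs of the two systems agree for every $t$.
\end{enumerate*}

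For the initial condition, take $\vh_{-1}=0$. Then $\tilde\vh_{-1}=0$ and the base case is immediate. As a cross-check I would also unroll both systems. The original gives $y_t=\sum_{s\le t}e^{\Delta A}\text{-decay}\cdot\bC_t^\top\bR_t\cdots\bR_{s+1}\Delta_s\bB_sx_s$. The rotated form gives $\bC_t^\top\bar\bR_t\bar\bR_s^\top\Delta_s\bB_sx_s$, and $\bar\bR_t\bar\bR_s^\top=\bR_t\cdots\bR_{s+1}$. This is the RoPE identity that relative rotation depends only on the accumulated angles between positions $s$ and $t$.

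The only real subtlety is justifying commutativity and orthogonality. Commutativity holds because each block of $\bR_i$ is $R(\Delta_i\boldsymbol\theta_i[j])$ acting on a fixed coordinate pair $j$, and $R(a)R(b)=R(a+b)$. Orthogonality follows from $R(\theta)^\top=R(-\theta)$. Together these let the ordering convention in the statement (right multiplication) be harmless and give $\bar\bR_t^\top\bR_t=\bar\bR_{t-1}^\top$. I expect this bookkeeping about product order and transposes, rather than any analytic step, to be the main point requiring care. One should also note that block-wise, $\bar\bR_t$ is simply the rotation by the cumulative angle $\sum_{i\le t}\Delta_i\boldsymbol\theta_i[j]$.
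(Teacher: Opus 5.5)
Your proof is correct and essentially the same as the paper's. Both arguments rely on the commutativity and orthogonality of the block rotations to get $\bR_{s+1}\cdots\bR_t=\bar{\bR}_t\bar{\bR}_s^\top$, and both arrive at the rotated state $\tilde{\vh}_t=\bar{\bR}_t^\top\vh_t$. The only difference is ordering: the paper unrolls the sum first and introduces $\tilde{\vh}_t$ at the end, whereas you apply the change of variables directly to the one-step recurrence (as the paper itself does in the proof of Proposition~\ref{prop:rope-trick-trap}) and use unrolling only as a cross-check.
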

The proof is given in \cref{app:proof-rope-trick}.

To observe the connection of complex SSMs to RoPE embeddings, note that in the above proposition, the data-dependent rotations $\bR_i$ are aggregated across time-steps and applied to $\bC,\bB$, which, by the state space duality framework, correspond to the query ($\bQ$) and key ($\bK$) components of attention (\Cref{sec:prelim-sma}). 
Analogously, vanilla RoPE~\citep{su2023roformerenhancedtransformerrotary} applies \emph{data-independent} rotation matrices, where the rotation angles follow a fixed frequency schedule $\boldsymbol{\theta}[i] = 10000^{-2i/N}$. 

\subsubsection{Complex SSM with Exponential-Trapezoidal Discretization}

After deriving the recurrence for complex SSMs with exponential-Euler discretization, the generalization to exponential-trapezoidal discretization is similar.
\Cref{prop:rope-trick-trap} provides the full recurrence with the RoPE trick for Mamba-3.

\begin{restatable}[Rotary Embedding Equivalence with Exponential-Trapezoidal Discretization]{proposition}{PropRoPETrickTrap}\label{prop:rope-trick-trap}

Discretizing a complex SSM with the exponential-trapezoidal rule (\Cref{prop:trap}) yields the recurrence
\begin{align}
    \vh_t 
    &= \alpha_t \vh_{t-1} 
    + \beta_t \left(\prod_{i=0}^{t-1} \bR_i^\top \right)\bB_{t-1} x_{t-1} 
    + \gamma_t \left(\prod_{i=0}^t \bR_i^\top \right)\bB_t x_t, \nonumber \\
    y_t
    &= \left[\left(\prod_{i=0}^t \bR_i^\top\right)\bC_t\right]^\top \vh_t .
\end{align}
Here, $\bR_t$ is the block-diagonal rotation matrix defined in~\Cref{prop:complex-to-real-ssm}.
\end{restatable}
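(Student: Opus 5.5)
We carry out the proof in two stages. First we discretize the complex SSM~\plaineqref{eq:complex-ssm} with the exponential-trapezoidal rule and pass to an equivalent real system. Then we apply the change of variables behind \Cref{prop:rope-trick} to absorb the rotations into $\bB$ and $\bC$.

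\textbf{Stage 1: the real recurrence.} We follow the derivation of \Cref{prop:trap} with the complex diagonal transition $\mathrm{Diag}(A_t + i\boldsymbol{\theta}_t)$. The right-hand approximation of the transition integral gives $\exp(\Delta_t(A_t + i\boldsymbol{\theta}_t))$. The generalized trapezoidal rule applied to the state-input integral weights the right endpoint by $\lambda_t\Delta_t$. It weights the left endpoint by $(1-\lambda_t)\Delta_t$ times the propagator from $\tau_{t-1}$ to $\tau_t$, which is $\exp(\Delta_t(A_t+i\boldsymbol{\theta}_t))$. Realifying exactly as in the proof of \Cref{prop:complex-to-real-ssm} sends each complex coordinate to a $2$-vector. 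Multiplication by $e^{i\Delta_t\boldsymbol{\theta}_t[j]}$ becomes $R(\Delta_t\boldsymbol{\theta}_t[j])$, and the output real part produces the sign flip in $\bC_t$. The result is
\[
\vh_t = \alpha_t \bR_t \vh_{t-1} + \beta_t \bR_t \bB_{t-1}x_{t-1} + \gamma_t \bB_t x_t,\qquad y_t = \bC_t^\top \vh_t,
\]
with $\alpha_t,\beta_t,\gamma_t$ as in \Cref{prop:trap}. The key point to check is that the left-endpoint term carries the same rotation $\bR_t$ as the transition, because it is propagated by the full interval exponential.

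\textbf{Stage 2: the change of variables.} Let $\bO_t \coloneqq \prod_{i=0}^t \bR_i^\top$ and define $\tilde\vh_t \coloneqq \bO_t \vh_t$. Each $\bR_i$ is block-diagonal with $2\times 2$ rotation blocks on the same coordinate pairs, so all the $\bR_i$ commute and are orthogonal. Hence $\bO_t = \bR_t^\top \bO_{t-1}$ in any order, and $\bO_t\bR_t = \bO_{t-1}$. We multiply the Stage 1 recurrence on the left by $\bO_t$, using that $\alpha_t$ and $\beta_t$ are scalars:
\[
\tilde\vh_t = \alpha_t \bO_{t-1}\vh_{t-1} + \beta_t \bO_{t-1}\bB_{t-1}x_{t-1} + \gamma_t \bO_t \bB_t x_t .
\]
The first term equals $\alpha_t \tilde\vh_{t-1}$. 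For the output, orthogonality of $\bO_t$ gives $y_t = \bC_t^\top\vh_t = \bC_t^\top \bO_t^\top \bO_t \vh_t = (\bO_t\bC_t)^\top \tilde\vh_t$. This is exactly the claimed system with $\tilde\vh$ in place of $\vh$.

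The main obstacle is Stage 1: justifying that the $\beta_t$ term picks up $\bR_t$, or equivalently that the trapezoidal left endpoint is propagated through the complex exponential. The paper's real-valued $\beta_t = (1-\lambda_t)\Delta_t e^{\Delta_tA_t}$ hides this. Once that is established, the index shift $\bO_t\bR_t=\bO_{t-1}$ aligns the $\beta_t$ term with the product up to $t-1$, and everything else is bookkeeping. The base case $\vh_0 = \gamma_0\bB_0x_0$ becomes $\tilde\vh_0 = \gamma_0\bO_0\bB_0x_0$, which matches the stated recurrence with no $\beta_0$ term.
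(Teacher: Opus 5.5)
Your proposal is correct and follows essentially the same route as the paper. The paper likewise applies the generalized trapezoidal rule to the complex integrand, so the left-endpoint term carries $\beta_t e^{i\Delta_t\boldsymbol{\theta}_t}$, and realifies to $\vh_t = \alpha_t\bR_t\vh_{t-1} + \beta_t\bR_t\bB_{t-1}x_{t-1} + \gamma_t\bB_t x_t$; it then left-multiplies by $\prod_{s=0}^{t}\bR_s^\top$ using $\bR_t^\top\bR_t = I$ to absorb the rotations into $\bB$ and $\bC$, with your explicit orthogonality step for the output being a detail the paper leaves implicit.
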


The proof is in \cref{app:proof-rope-trick-trap}.

We empirically validate that our complex SSM, implemented via data-dependent RoPE, is capable of solving state-tracking tasks that real-valued SSMs with and without standard RoPE cannot (\Cref{tab:reasoning}), supporting theoretical claims.

\subsection{Multi-Input, Multi-Output}
\label{sec:method:mimo}

Scaling test-time compute has opened new frontiers in model capability, such as agentic workflows, where inference takes up an increasing share of the overall compute budget.
This has placed a renewed focus on inference efficiency of language models and spurred the adoption of SSMs and sub-quadratic layers which feature fixed-sized hidden states and thus offer lower compute and memory requirements.
Although these new layers have a lower wall-clock time compared to Transformers, their decoding is heavily memory-bound, resulting in low hardware utilization.
In this section, we use the SSM perspective to introduce a methodological refinement to the Mamba-3 recurrence that allows for \emph{increased model FLOPs without increasing decoding wall-clock time, resulting in a better model with the same decoding speed.}

\paragraph{Decoding Arithmetic Intensity.}
To improve hardware efficiency, we need to consider the arithmetic intensity of token generation, 
defined as FLOPs divided by the number of input-output bytes for a given op. 
Since SSM decoding saturates the memory bandwidth with idle compute (i.e., being \textit{memory-bound}), we would like to increase its arithmetic intensity to effectively overlay compute with memory I/O.
More concretely, the arithmetic intensity for a single generation in Mamba is around $2.5$ ops per byte (\Cref{tab:arith_intensity_siso}), while the arithmetic intensity for bfloat16 matmul is about $295$ ops per byte for NVIDIA H100-SXM5~\citep{nvidia_h100_2022}. Consequently, SSM decoding falls far short of a compute-bound regime, and moreover it is not clear how one can adjust the existing parameters in Mamba to mitigate the lack of hardware efficiency. 
We note that this observation applies generally to other sub-quadratic models, such as causal linear attention.

\begin{table}[t]
  \centering

  \caption{
  Arithmetic Intensity for (a) SISO, (b) MIMO.
  The batch and head dimensions cancel out.
  The arithmetic intensity of MIMO increases linearly with rank $R$, enabling better hardware utilization during memory-bound phases like decode.
  Here $N$ is the state size (expansion factor) and $P$ is the head dimension. For Mamba-3, typically $R \ll N, P$.
  }

  \begin{subtable}[b]{0.48\linewidth}
    \small\centering
    \resizebox{\linewidth}{!}{
      \begin{tabular}{
        >{\raggedright\arraybackslash}p{0.20\linewidth}
        >{\raggedright\arraybackslash}p{0.16\linewidth}
        >{\centering\arraybackslash}p{0.20\linewidth}
        >{\raggedright\arraybackslash}p{0.35\linewidth}
      }
        \toprule
        \textbf{Input} & \textbf{Output} & \textbf{FLOPs} & \textbf{Arithmetic Intensity} \\
        \midrule
        \begin{tabular}[t]{@{}l@{}}
          $H_t:(N,P)$ \\[2pt]
          $x_t:(P)$ \\[2pt]
          $a_t:(1)$ \\[2pt]
          $b_t:(N)$ \\[2pt]
          $c_t:(N)$
        \end{tabular}
        &
        $y_t:(P)$
        &
        $5NP - P$
        &
        \begin{tabular}[t]{@{}l@{}}
          $\displaystyle \frac{5NP - P}{2(1+2N+P+NP)}$ \\[6pt]
          $\approx 2.5 = \Theta(1)$
        \end{tabular}
        \\
        \bottomrule
      \end{tabular}
    }
    \caption{SISO (2-byte data).}
    \label{tab:arith_intensity_siso}
  \end{subtable}\hfill
  \begin{subtable}[b]{0.48\linewidth}
    \small\centering
    \resizebox{\linewidth}{!}{
      \begin{tabular}{
        >{\raggedright\arraybackslash}p{0.20\linewidth}
        >{\raggedright\arraybackslash}p{0.16\linewidth}
        >{\centering\arraybackslash}p{0.20\linewidth}
        >{\raggedright\arraybackslash}p{0.35\linewidth}
      }
        \toprule
        \textbf{Input} & \textbf{Output} & \textbf{FLOPs} & \textbf{Arithmetic Intensity} \\
        \midrule
        \begin{tabular}[t]{@{}l@{}}
          $H_t:(N,P)$ \\[2pt]
          $x_t:(P,R)$ \\[2pt]
          $a_t:(1)$ \\[2pt]
          $b_t:(N,R)$ \\[2pt]
          $c_t:(N,R)$
        \end{tabular}
        &
        $y_t:(P,R)$
        &
        $4NPR+NP-PR$
        &
        \begin{tabular}[t]{@{}l@{}}
          $\displaystyle \frac{4NPR+NP-PR}{2(1+2NR+PR+NP)}$ \\[6pt]
          $=\Theta(\min(N,P,R))$ \\
          $=\Theta(R)$, $R \ll N,P$
        \end{tabular}
        \\
        \bottomrule
      \end{tabular}
    }
    \caption{MIMO (2-byte data).}
    \label{tab:arith_intensity_mimo}
  \end{subtable}
  \label{tab:arith_intensity}
\end{table}

\paragraph{From SISO to MIMO.}

Consider a single head of a typical SSM with \emph{head dimension} $P$,
which involves stacking the SISO recurrence $\vh_t \gets \alpha_t\vh_{t-1} + \Delta_t\bB_t x_t$ with $P$ copies sharing the same $\alpha_t, \Delta_t$ and $\bB_t$.
The resulting broadcasted recurrence $\vh_t \gets \alpha_t \vh_{t-1} + \Delta_t\bB_t \vx_t^\top$ takes vector inputs $\vx_t \in \mathbb{R}^P$ and has matrix-valued states $\vh_t \in \mathbb{R}^{N \times P}$.

Note that the memory traffic (input and output size) is dominated by the state $\vh_t$, while the computation mainly comprises the outer product $\bB_t \vx_t^\top$ which has FLOPs proportional to $NP$.
By increasing the dimension of the latter terms, transforming $\bB_t \in \mathbb{R}^{N} \to \bB_t \in \mathbb{R}^{N \times R}$ and $\vx_t \in \mathbb{R}^{P} \to \vx_t \in \mathbb{R}^{P \times R}$,
the memory traffic does not significantly increase (for small $R$) while the FLOPs consumed increase by a factor of $R$ (\Cref{tab:arith_intensity_siso}).
Thus, this transformation increases the arithmetic intensity of the recurrence.
Furthermore, the increase in arithmetic intensity is translated into practical gains, since the outer product $\bB_t \vx_t^\top$ becomes a hardware-efficient matrix-matrix product (matmul), which is computed using fast tensor-cores, incurring only a marginal latency cost.
As a result, the MIMO recurrence is more expressive than the original SISO recurrence, computing $R {\scriptstyle\times}$ more FLOPs while practically preserving the decoding speed.

For similar reasons, the computation of the output from the state, $\vy_t \gets \bC_t^\top \vh_t$ acquires an extra rank $R$ by modifying the output projection as $\bC_t \in \mathbb{R}^{N} \to \bC_t \in \mathbb{R}^{N \times R}$.
Overall, this transformation is equivalent to expanding the original single-input, single-output (SISO) recurrence to multi-input, multi-output (MIMO).

\paragraph{Training MIMO SSMs.}

While the MIMO formulation is motivated by \emph{inference} efficiency, the \emph{training} algorithms for SSMs (including our developments in \cref{sec:method:trap}, \cref{sec:method:complex}) have been typically developed for SISO models.
We begin with the observation that 
MIMO SSMs can be expressed in terms of $R^2$ SISO SSMs, where $R$ SISO SSMs sharing the same recurrence are summed for each of the $R$ MIMO outputs.
In particular,
define $ \bC_t^{(i)} \in \mathbb{R}^N, \bB_t^{(j)} \in \mathbb{R}^N, \vx_t^{(j)} \in \mathbb{R}, \Delta_t \in \mathbb{R}$, where $i,j \in \{0, ..., R-1\}$, then we have,
\begin{align}
    \vh_t^{(j)} &\gets \alpha_t\vh_{t-1}^{(j)} + \Delta_t\bB_t^{(j)} \vx_t^{(j)} \label{eq:mamba3-mimo-siso-equiv-A}\\ 
    \vh_t &= \sum_{j=0}^{R-1} \vh_t^{(j)} \label{eq:mamba3-mimo-siso-equiv-B}\\
    \vy_t^{(i)} &\gets \left(\bC_t^{(i)}\right)^\top \vh_t \label{eq:mamba3-mimo-siso-equiv-C}
\end{align}
Thus, $y_t^{(i)} = \sum_j \mathsf{SSM}\!\left(\alpha,\Delta,\bB^{(j)},\bC^{(i)},\vx^{(j)}\right)_t$, where $\mathsf{SSM}\!\left(\alpha,\Delta,\bB^{(j)},\bC^{(i)},\vx^{(j)}\right)_t := \left(\bC_t^{(i)}\right)^\top \vh_t^{(j)}$ with $\vh_t^{(j)}$ from \plaineqref{eq:mamba3-mimo-siso-equiv-A}.

Furthermore, improvements to standard SISO-based SSM models can be directly applied to MIMO models as the underlying SISO training algorithms can be utilized as a black-box.
This observation allows a MIMO model to be trained by invoking the SISO algorithm $R^2$ times as a black box in parallel.
In contrast, when computed in the recurrent form, \eqref{eq:mamba3-mimo-siso-equiv-A}, \plaineqref{eq:mamba3-mimo-siso-equiv-B}, and \plaineqref{eq:mamba3-mimo-siso-equiv-C} can be performed sequentially, incurring only an $R$-times overhead relative to SISO SSMs (recall the discussion on MIMO decoding FLOPs).

\paragraph{Chunked Algorithm for MIMO SSMs.}
Many modern SISO recurrent models, including Mamba-2, are computed using a \emph{chunked} algorithm, where the sequence is divided into chunks of length $C$. Within each chunk, a parallel (but asymptotically slower) algorithm is applied, while a recurrence is computed across chunks.
Chunked algorithms interpolate between two extremes: a fully parallel and a fully sequential algorithm. By exploiting this structure, we can reduce the training cost of MIMO SSMs to $R$ times that of SISO SSMs.
This idea also appears in the SSD framework---SSD applies a hardware-friendly quadratic algorithm within each chunk, while using the recurrent form across chunks, and shows that when the state and head dimensions are comparable, setting the chunk size to this dimension yields an overall linear-time algorithm.
Specifically, SSD's intra-chunk computation incurs $\left(2C^2 N + 2C^2 P\right)$ FLOPs per chunk, giving a total of $\frac{T}{C}\left(2C^2 N + 2C^2 P \right) = 2TC(N+P)$. The inter-chunk computation incurs $4NPC + 2NP$ FLOPs per chunk, for a total of $\frac{T}{C}(4NPC + 2NP) = 4TNP + \frac{T}{C}2NP$ (ignoring negligible terms). Setting $C=P=N$, the total FLOP count is $8TN^2$, which is linear in $T$.

The chunked algorithm for SSD can be naturally generalized into MIMO SSMs.
In such a case, the FLOP counts of state projection $\bB \vx^\top$ and state emission $\bC^\top \vh$ increase by $R\times$, while the FLOP count of the intrachunk component $\bC^\top\bB$ increases by $R^2\times$. As a result, the intra-chunk computation incurs  $2 \cdot \left(\frac{T}{C} (CR)^2 N + \frac{T}{C} (CR)^2 P\right)$ FLOPs and inter-chunk computation incurs $4\cdot\frac{T}{C}NP(CR) + 2\cdot\frac{T}{C}NP$ FLOPs.
Thus, setting $CR=N=P$ yields a total FLOP count of $8TRN^2$, an $R$-fold increase in FLOP count. Intuitively, setting MIMO chunk size as $\frac{1}{R}$ times the SISO chunk size, i.e., $C_{\text{MIMO}} \gets \frac{1}{R}C_{\text{SISO}}$, maintains the SISO intra-chunk FLOP count while increasing the number of chunks by a factor of $R$, resulting in an overall $R$-times increase in FLOP count instead of an $R^2$-times increase while keeping the algorithm hardware-friendly.

The training speed of algorithms in practice depends on details of the kernel implementation strategy, architectural choices such as how the MIMO parameters are instantiated, and problem dimensions, but should be no more than $R$ times slower.
Our released Triton Mamba-3 SISO kernels are roughly on par with the Triton Mamba-2 kernels, and MIMO kernels only incur a slowdown of $2 {\scriptstyle\times}$ when $R=4$, as compute latency can be parallelized with memory movement. \Cref{tab:benchmark} benchmarks the prefill speed of various kernels which is equivalent to the forward pass of the training kernel.

\paragraph{MIMO Instantiation.}
Among various choices for MIMO parameterizations, Mamba-3's approach achieves a balance that preserves the state size and number of SSMs of its SISO counterpart, while avoiding excessive growth in parameter count.
The naive conversion of a SISO SSM to a rank $R$ MIMO SSM would incur an $R\times$ increase in parameters as all projections that model the inputs to the SSM, $\bB,\bC,\vx$, would increase.
Block-level components, such as the gate $\vz$ (which so far has been ignored for simplicity) and output $\vy$ projection would also be impacted.
This influx in parameter count would be intractable at larger model scales.
To counteract this, we make the following change.
Mamba's multi-value attention (MVA) head structure results in shared $\bB, \bC$ across heads, so these components' projections can be directly converted to incorporate the new MIMO rank $R$ with only a slight increase in parameter count from $DN$ to $DNR$ for the entire layer (recall $D$ as the model dimension).
However, the SSM input $\vx_t$, output $\vy_t$, and gate $\vz_t$ are unique per head and therefore dominate the parameter count.
Here, directly adjusting the projections would increase the parameter count from $DP$ to $DPR$ for \emph{each head}.
Instead, we keep the original SISO projection and element-wise scale each dimension of the projected output to size $R$ with a learnable, data-independent vector, resulting in $DP + PR$ parameters for each head.
This mitigates the multiplicative increase to a more reasonable additive parameter count increase. 
\Cref{app:mimo-for-mamba} details the parameterization, and all MIMO-variants in our paper are parameter-matched to their SISO counterparts by reducing the MLP width.

\begin{remark}
    For simplicity, all discussion in this section was for simpler 2-term recurrences such as that arising from exponential-Euler discretization; the generalization to the 3-term exponential-trapezoidal recurrence is similar.
\end{remark}

\subsection{Mamba-3 Architecture}
\label{sec:method:arch}
\begin{figure}[t!]
    \centering
    \includegraphics[width=\linewidth]{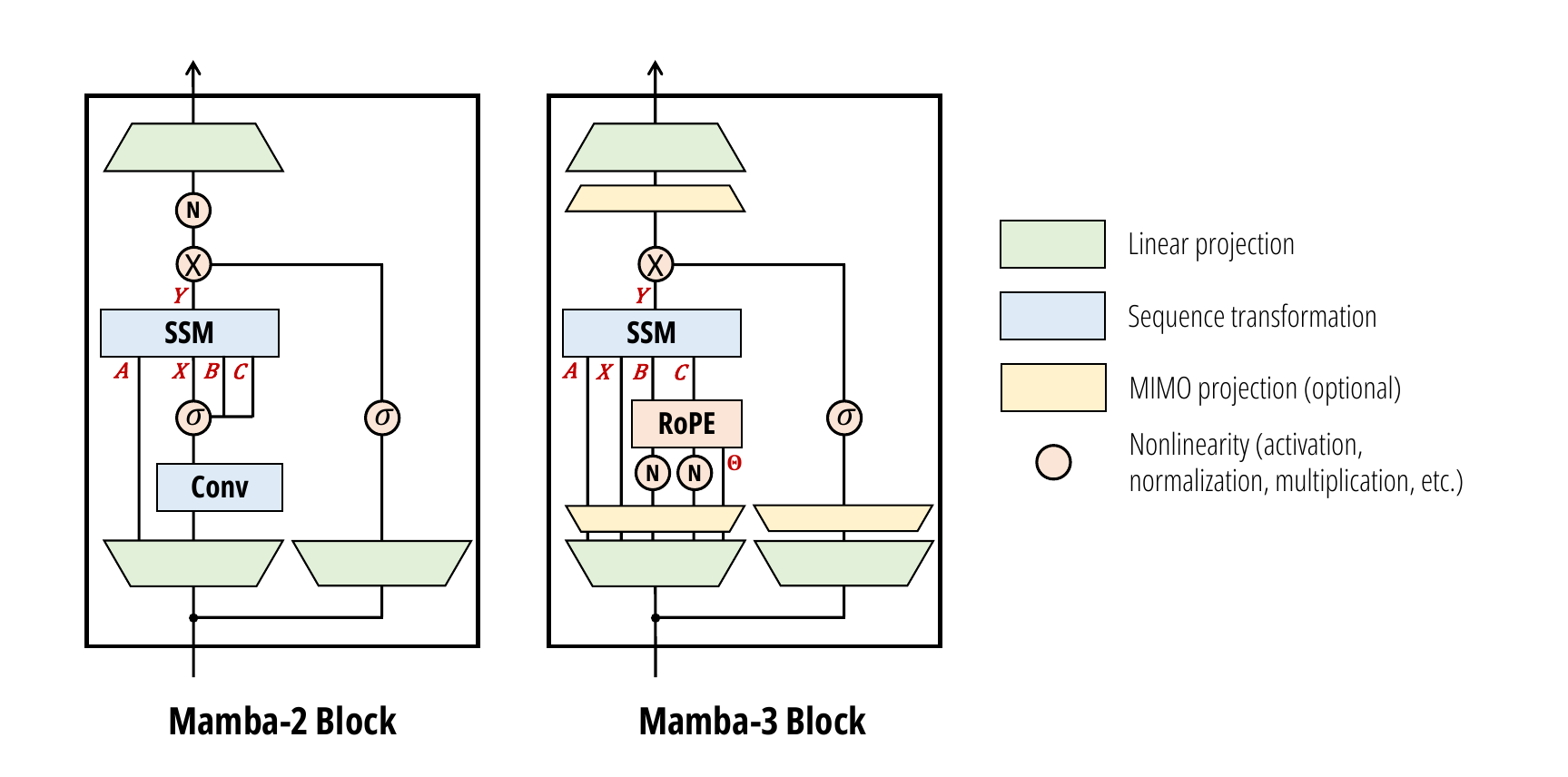}
    \caption{Contrasting Mamba-2 and Mamba-3 Architectures: Key updates include exponential-trapezoidal discretization, data-dependent RoPE embeddings, MIMO projections, QK normalization, and learnable biases.}
    \label{fig:arch_diff}
\end{figure}

The overall architecture follows Llama~\citep{grattafiori2024llama3herdmodels}, alternating Mamba-3 and SwiGLU blocks with pre-norm. 
The Mamba-3 block retains the overall layout of its predecessor, while introducing several key modifications.

\paragraph{Updated SSM Recurrence.}
The SSD layer is replaced with the more expressive complex-valued exponential-trapezoidal SSM defined in Proposition~\ref{prop:rope-trick-trap}.
Mamba-3 employs the SISO SSM by default to enable fair comparisons with other SISO-like models, but its MIMO variant can be trained and deployed as a stronger alternative to baseline Mamba-3 (\Cref{tab:downstream_evaluation}).
Our SSM $\bA$ is complex with both real and imaginary components produced by data-dependent projections. With \Cref{fig:arch_diff}, this is partitioned into the real-valued $A$ and imaginary-valued $\Theta$; the former is passed into the SSD black box as in Mamba-2, while the latter is computed through the RoPE trick.

\paragraph{BC / QK Normalization.}
RMS normalizations are added following the $\bB,\bC$ projection, mirroring the QKNorm commonly used in modern Transformers~\citep{henry2020querykeynormalizationtransformers,wortsman2023smallscaleproxieslargescaletransformer} and other recent linear models~\citep{yang2025gateddeltanetworksimproving, hu2025combaimprovingbilinearrnns}.
We call this either BC normalization (BCNorm) or QK normalization (QKNorm) interchangeably.
We find that BCNorm is also able to stabilize large-scale runs, resulting in the removal of the post-gate RMSNorm layer (introduced in Mamba-2 for stability) in our pure Mamba-3 models.
However, in hybrid models, the removed RMSNorm layer is crucial for long-context extrapolation (\Cref{tab:niah_n_real}).

\paragraph{$\bB, \bC$ Biases.}
Similarly to~\citet{yu2025blockbiasedmamba}, which proved that adding channel-specific biases to $\bB$ in a blockwise variant of Mamba-1 grants universal approximation capabilities, Mamba-3 incorporates learnable, head-specific, channel-wise biases into the $\bB$ and $\bC$ components after the BCNorm.

We hypothesize that these biases also induce a convolution-like behavior in the model. Specifically, adding biases to $\bB$ and $\bC$ introduces data-independent components into SSMs that function more similarly to convolutions.
Ablations on the bias parameterization are located in~\cref{app:arch-ablations}.

The combination of data-independent bias parameters, together with exponential-trapezoidal discretization (which itself induces a convolution on the state-input),
is empirically able to obviate the short causal convolution and its accompanying activation function present in Mamba-2 and most modern recurrent models (\cref{sec:exp:meth-abl}).

\newcommand{\combinedniahretrievaltable}{
\begin{table}[!t]
    \small
    \centering
    \caption{\small Retrieval capabilities measured by a mixture of real-world and synthetic retrieval tasks. Real-world retrieval tasks utilize cloze variants of the original datasets and are truncated to 2K length. Mamba-3 demonstrates strong associative recall, question-answering, and length generalization on needle-in-a-haystack (NIAH), but suffers with information extraction of semi-structured and unstructured data.
    The Transformer baseline uses RoPE which may explain its length generalization issues, and hybrid models utilize NoPE (no positional embeddings).
    We find a pre-gate, grouped RMSNorm can be added to Mamba-3 SISO hybrid models to improve the length generalization of the NIAH tasks at a slight decrease in real-world retrieval performance.
    }
    \resizebox{\textwidth}{!}{
    \sisetup{detect-weight,     %
        table-align-text-post=false,
         table-space-text-post={*} %
         }
    \begin{tabular}{l l*{6}{S[table-format=2.1]}*{9}{S[table-format=2.1]}}
        \toprule
        \multicolumn{2}{l}{\textbf{Model (1.5B)}} & {SWDE} & {SQD.} & {FDA} & {TQA} & {NQ} & {Drop} & \multicolumn{3}{c}{NIAH-Single-1} & \multicolumn{3}{c}{NIAH-Single-2} & \multicolumn{3}{c}{NIAH-Single-3} \\
        \cmidrule(lr){3-8} \cmidrule(lr){9-11} \cmidrule(lr){12-14} \cmidrule(lr){15-17}
        \multicolumn{2}{l}{Context Length} & \multicolumn{6}{c}{2048} & {1024} & {2048} & {4096} & {1024} & {2048} & {4096} & {1024} & {2048} & {4096} \\
        \midrule
        \multirow{5}{*}{\rotatebox[origin=c]{90}{Pure}} & Transformer & 48.9 & 46.6 & 58.4 & 67.5 & 31.7 & 26.4 & 100.0 & 100.0 & 0.0 & 92.2 & 100.0 & 0.0 & 98.6 & 99.4 & 0.0 \\
        \cmidrule(lr){2-17}
        & GDN & \Bf 32.7 & \Uline{40.0} & \Bf 28.3 & 63.5 & \Uline{25.7} & 24.5 & \Bf 100.0 & \Bf 100.0 & \Bf 99.8 & \Bf 100.0 & \Uline{93.8} & \Uline{49.8} & 83.8 & 68.4 & \Bf 34.2 \\
        & Mamba-2 & \Uline{30.7} & 39.1 & \Uline{23.7} & \Uline{64.3} & 25.1 & \Bf 28.5 & \Bf 100.0 & 99.6 & 62.0 & \Bf 100.0 & 53.8 & 11.8 & \Bf 95.8 & \Bf 87.4 & 13.4 \\
        & \textbf{Mamba-3 SISO} & 28.5 & \Bf 40.1 & 23.4 & \Bf 64.5 & \Bf 26.5 & \Uline{27.4} & \Bf 100.0 & \Bf 100.0 & \Uline{88.2} & \Bf 100.0 & \Bf 95.4 & \Bf 50.6 & \Uline{92.4} & \Uline{81.4} & \Bf 34.2 \\
        \addlinespace[0.6ex]
        \cdashline{2-17}
        \addlinespace[0.6ex]
        & \textbf{Mamba-3 MIMO} & \Bf 36.3 & \Bf 41.7 & \Bf 29.3 & \Bf 64.5 & \Uline{26.2} & 26.3 & \Bf 100.0 & \Bf 100.0 & \Uline{93.0} & \Bf 100.0 & 86.0 & 40.4 & \Bf 95.8 & \Uline{84.4} & 25.6\\
        \midrule
        \multirow{4}{*}{\rotatebox[origin=c]{90}{Hybrid}} & GDN & 54.6 & \Bf 48.4 & 58.8 & 64.9 & 32.7 & \Bf 30.0 & \Bf 100.0 & \Bf 100.0 & \Uline{71.4} & 99.6 & \Bf 100.0 & \Uline{60.2} & 70.0 & 96.2 & \Uline{24.0} \\
        & Mamba-2 & 58.2 & 45.6 & \Bf 71.0 & \Bf 66.1 & \Bf 33.4 & 28.1 & \Bf 100.0 & \Bf 100.0 & 3.2 & 99.6 & 98.8 & 0.0 & 98.2 & \Uline{98.0} & 0.0 \\
        & Mamba-3 SISO & \Uline{58.5} & 47.0 & \Uline{65.9} & 64.8 & \Bf 33.4 & 27.0 & \Bf 100.0 & \Bf 100.0 & 36.2 & \Bf 100.0 & \Bf 100.0 & 9.4 & \Bf 99.8 & \Bf 100.0 & 8.8 \\
        & Mamba-3 SISO Norm$^*$ & \Bf 58.6 & \Uline{47.3} & 52.4 & \Uline{65.7} & 33.3 & \Uline{28.5} & \Bf 100.0 & \Bf 100.0 & \Bf 100.0 & \Bf 100.0 & \Bf 100.0 & \Bf 96.0 & \Bf 99.8 & 97.2 & \Bf 56.8 \\
        \bottomrule
    \end{tabular}
    }
    \label{tab:niah_n_real}
\end{table}
}

\newcommand{\ablationstable}{
\begin{table}[!t]
    \small
    \centering
    \caption{\small\textbf{Left}: Ablations on core modeling components of Mamba-3 SISO, results on test split of dataset. \textbf{Right}: Formal language evaluation (scaled accuracy, \%). Higher is better. SISO models are trained on short sequences and evaluated on longer lengths to test length generalization. For GDN we report the variant with eigenvalue range \([-1,1]\).}
    \sisetup{detect-weight, table-align-text-post=false, table-space-text-post={*}}

    \makebox[\textwidth][c]{%
      \begin{subtable}[t]{0.4\textwidth}
        \centering
        \begingroup\setlength{\tabcolsep}{4pt}
        \begin{tabular}{@{}p{0.68\linewidth} S[table-format=2.2]@{}}
            \toprule
            \textbf{Model Variant} & {ppl $\downarrow$} \\
            \midrule
            Mamba-3 $-$ bias $-$ trap & 16.68 \\
            Mamba-3 $-$ bias & 16.49 \\
            Mamba-3 & \Bf 15.72 \\
            Mamba-3 $+$ conv & 15.85 \\
            \bottomrule
        \end{tabular}
        \endgroup
        \caption{Component ablation at 440M scale. A combination of our BC bias and exponential-trapezoidal discretization makes the ubiquitous short convolution optional.}
        \label{tab:abl:meth}
      \end{subtable}%
      \hspace{0.03\textwidth}%
      \begin{subtable}[t]{0.55\textwidth}
        \centering
        \begingroup\setlength{\tabcolsep}{4pt}
        \begin{tabular}{@{}>{\raggedright\arraybackslash}p{0.43\linewidth} S[table-format=3.2] S[table-format=3.2] S[table-format=3.2]@{}}
            \toprule
            \textbf{Model} &
            \multicolumn{1}{l}{\shortstack{Parity $\uparrow$}} &
            \multicolumn{1}{l}{\shortstack{Arith.\ w/o \\brackets $\uparrow$}} &
            \multicolumn{1}{l}{\shortstack{Arith.\ w/ \\brackets $\uparrow$}} \\
            \midrule
            Mamba-3 & 100.00 & 98.51 & 87.75 \\
            Mamba-3 (w/ Std. RoPE) & 1.56 & 20.70 & 2.62 \\
            Mamba-3 (w/o RoPE) & 2.27 & 1.49 & 0.72 \\
            Mamba-2 & 0.90 & 47.81 & 0.88 \\
            GDN [-1,1] & 100.00 & 99.25 & 93.50 \\
            \bottomrule
        \end{tabular}
        \endgroup
        \caption{Performance comparison on formal language tasks. Results show that unlike Mamba-2, Mamba-3 features state-tracking ability stemming from data-dependent RoPE embeddings. 
        }
        \label{tab:reasoning}
      \end{subtable}%
    }%
\end{table}
}

\newcommand{\latencytable}{
\begin{figure}[!t]
  \centering
  \begin{minipage}[b]{0.49\linewidth}
    \centering
    \includegraphics[width=\linewidth]{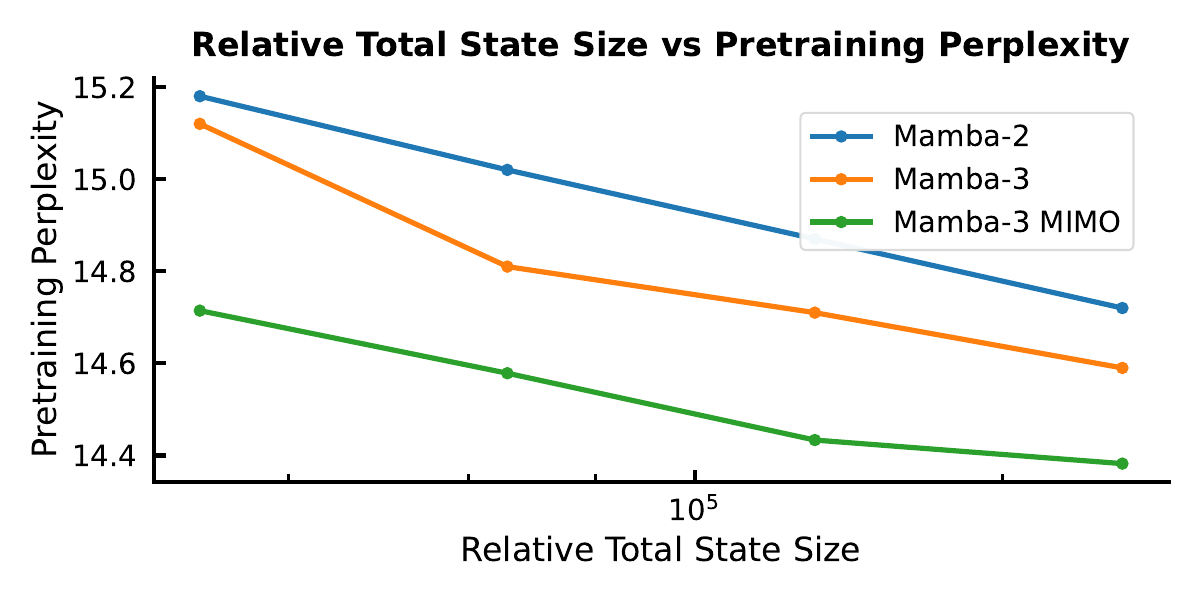}
    \captionof{figure}{Exploration of state size (inference speed proxy) versus pretraining perplexity (performance proxy) across different Mamba variants. Mamba-3 improves the Pareto frontier compared to previous recurrent SISO models, while incorporating MIMO further shifts the frontier through better modeling performance without increasing state size.
    }
    \label{fig:pareto}
  \end{minipage}\hfill
  \begin{minipage}[b]{0.48\linewidth}
    \small\centering
    \resizebox{\linewidth}{!}{
        \begin{tabular}{l*{2}{S[table-format=1.3]}*{2}{S[table-format=1.3]}}
          \toprule
          \multirow{2}{*}{\textbf{Model}}
          & \multicolumn{2}{c}{\textbf{FP32}}
          & \multicolumn{2}{c}{\textbf{BF16}} \\
          \cmidrule(lr){2-3} \cmidrule(lr){4-5}
          & {$d_{\text{state}}=64$} & {$d_{\text{state}}=128$}
          & {$d_{\text{state}}=64$} & {$d_{\text{state}}=128$} \\
          \midrule
          Mamba-2        & 0.295 & 0.409 & 0.127 & 0.203 \\
          GDN & 0.344 & 0.423 & 0.176 & 0.257 \\
          Mamba-3 (SISO) & 0.310 & 0.399 & 0.110 & 0.156 \\
          Mamba-3 (MIMO) & 0.333 & 0.431 & 0.137 & 0.179 \\
          \bottomrule
        \end{tabular}
    }
    \captionof{table}{
    Kernel latency (in milliseconds) comparison across models, precision, and $d_{\text{state}}$ values.
    Mamba-3 introduces minimal overhead compared to Mamba-2 and features highly efficient practical implementations.
    Our Mamba-3 SISO kernels are faster than reference Mamba-2 and GDN kernels at the commonly used bf16, $d_\text{state}=128$ setting.
    Mamba-3 MIMO ($R=4$) incurs little additional cost compared to SISO.
    }
    \label{tab:benchmark}
  \end{minipage}
\end{figure}
}

\section{Empirical Validation}
\label{sec:empirical_validation}
We empirically validate our SSM-centric methodological changes through the Mamba-3 model on a host of synthetic and real-world tasks.
\cref{sec:exp:lm} evaluates Mamba-3 on language modeling and retrieval-based tasks.
\cref{sec:exp:meth-abl} ablates the effect of our new SSM components such as discretization and complex transitions.
\cref{sec:exp:pareto} explores the inference efficiency of the Mamba-3 family and MIMO Mamba-3's benefits over the SISO variant under fixed inference compute, and \cref{sec:exp:speed} benchmarks the performance of our Mamba-3 training and inference kernels.

\subsection{Language Modeling}
\label{sec:exp:lm}

\begin{table}[!t]
    \small
    \centering
    \caption{Downstream language modeling evaluations on models trained with 100B FineWeb-Edu tokens. Best results for each size are \textbf{bolded}, and second best are \underline{underlined}, excluding Mamba-3 MIMO variants. All models are trained with the same procedure. Mamba-3 SISO outperforms Mamba-2 and others at every model scale, and the MIMO variant with rank $R=4$ further improves modeling capabilities.}
    \resizebox{\textwidth}{!}{
    \sisetup{detect-weight,     %
        table-align-text-post=false,
         table-space-text-post={*} %
         }
    \begin{tabular}{l*{10}{S[table-format=2.1]}}
        \toprule
        \textbf{Model} & {FW-Edu} & {LAMB.} & {LAMB.} & {HellaS.} & {PIQA} & {Arc-E} & {Arc-C} & {WinoGr.} & {OBQA} & {Average} \\
         & {ppl $\downarrow$} & {ppl $\downarrow$}  &  {acc $\uparrow$} & {acc\_n $\uparrow$} & {acc $\uparrow$} & {acc $\uparrow$} & {acc\_n $\uparrow$} & {acc $\uparrow$} & {acc $\uparrow$} & {acc $\uparrow$} \\
        \midrule
        Transformer-180M & 16.89 & 45.0 & \Bf 32.5 & 39.0 & \Bf 67.1 & 59.8 & \Uline{27.9} & 51.2 & 21.8 & 42.8 \\
        GDN-180M & \Uline{16.52} & \Bf 40.8 & 31.3 & \Uline{40.2} & 66.3 & \Bf 62.3 & \Bf 28.2 & 51.7 & 22.0 & \Uline{43.2} \\
        Mamba-2-180M & 16.76 & 41.8 & 30.9 & 40.1 & \Uline{66.8} & 60.1 & 27.3 & \Bf 52.0 & \Bf 23.2 & 42.9 \\
        \textbf{Mamba-3-SISO-180M} & \Bf 16.59 & \Bf 37.7 & \Bf 32.5 & \Bf 40.8 & 66.1 & \Uline{61.5} & \Uline{27.9} & \Bf 52.0 & \Uline{22.8} & \Bf 43.4 \\
        \addlinespace[0.6ex]
        \cdashline{1-11}
        \addlinespace[0.6ex]
        \textbf{Mamba-3-MIMO-180M} & \Bf 16.46 & \Bf 32.1 & \Bf 34.0 & \Bf 41.0 & 66.7 & \Uline{60.6} & 27.7 & \Bf 52.9 & 22.0 & \Bf 43.5 \\
        \midrule
        Transformer-440M & 13.03 & 21.2 & \Uline{41.7} & 50.5 & 69.9 & 67.6 & \Uline{34.6} & \Bf 56.7 & \Uline{26.0} & 49.6 \\
        GDN-440M & 13.01 & \Bf 18.0 & \Bf 41.9 & 50.9 & 70.0 & 67.0 & \Uline{34.6} & \Uline{56.1} & \Bf 27.6 & \Uline{49.7} \\
        Mamba-2-440M & \Uline{13.00} & \Uline{19.6} & 40.8 & \Bf 51.7 & \Uline{70.6} & \Uline{68.8} & \Bf 35.0 & 54.1 & \Uline{26.0} & 49.6 \\
        \textbf{Mamba-3-SISO-440M} & \Bf 12.87 & \Uline{19.6} & 40.2 & \Bf 51.7 & \Bf 71.9 & \Bf 68.9 & 34.4 & 55.8 & \Uline{26.0} & \Bf 49.8 \\
        \addlinespace[0.6ex]
        \cdashline{1-11}
        \addlinespace[0.6ex]
        \textbf{Mamba-3-MIMO-440M} & \Bf 12.72 & \Bf 17.1 & \Bf 43.4 & \Bf 52.8 & \Uline{70.8} & \Bf 69.6 & \Bf 35.6 & \Uline{56.3} & \Bf 28.4 & \Bf 51.0 \\
        \midrule
        Transformer-880M & 11.42 & 15.0 & 44.7 & 57.2 & 72.6 & 71.6 & \Uline{39.2} & 57.7 & 26.8 & 52.8 \\
        GDN-880M  & 11.37 & \Bf 12.9 & \Bf 47.6 & 57.3 & \Uline{73.3} & 71.4 & 38.7 & \Bf 58.8 & 28.6 & \Uline{53.7} \\
        Mamba-2-880M & \Uline{11.35} & 13.8 & 45.0 & \Uline{58.1} & 72.5 & \Uline{72.3} & 38.7 & 56.8 & \Bf 30.2 & 53.4 \\
        \textbf{Mamba-3-SISO-880M} & \Bf 11.23 & \Bf 12.9 & \Uline{47.2} & \Bf 58.8 & \Bf 73.6 & \Bf 72.7 & \Bf 40.2 & \Uline{58.4} & \Uline{30.0} & \Bf 54.4 \\
        \addlinespace[0.6ex]
        \cdashline{1-11}
        \addlinespace[0.6ex]
        \textbf{Mamba-3-MIMO-880M} & \Bf 11.11 & \Bf 11.8 & \Bf 49.5 & \Bf 59.2 & \Bf 73.7 & \Bf 74.7 & \Bf 41.2 & \Bf 59.9 & 28.6 & \Bf 55.3 \\
        \midrule
        Transformer-1.5B & 10.51 & 11.1 & \Bf 50.3 & 60.6 & \Uline{73.8} & 74.0 & 40.4 & \Uline{58.7} & 29.6 & 55.4 \\
        GDN-1.5B & \Uline{10.45} & \Bf 10.9 & 49.2 & 61.3 & \Bf 74.3 & \Uline{75.3} & 41.2 & 58.0 & 31.6 & \Uline{55.8} \\
        Mamba-2-1.5B & 10.47 & 12.0 & 47.8 & \Uline{61.4} & 73.6 & \Uline{75.3} & \Uline{41.8} & 57.5 & \Bf 32.6 & 55.7 \\
        \textbf{Mamba-3-SISO-1.5B} & \Bf 10.35 & \Bf 10.9 & \Uline{49.4} & \Bf 61.9 & 73.6 & \Bf 75.9 & \Bf 42.7 & \Bf 59.4 & \Uline{32.0} & \Bf 56.4 \\
        \addlinespace[0.6ex]
        \cdashline{1-11}
        \addlinespace[0.6ex]
        \textbf{Mamba-3-MIMO-1.5B} & \Bf 10.24 & \Bf 10.2 & \Bf 51.7 & \Bf 62.3 & \Bf 75.3 & \Bf 76.5 & \Bf 44.5 & \Bf 60.6 & \Bf 32.6 & \Bf 57.6 \\
        \bottomrule
    \end{tabular}
    }
    \label{tab:downstream_evaluation}
\end{table}

All models are pretrained with 100B tokens of the FineWeb-Edu dataset~\citep{penedo2024finewebdatasetsdecantingweb} with the Llama-3.1 tokenizer~\citep{grattafiori2024llama3herdmodels} at a 2K context length with the same standard training protocol. Training and evaluation details can be found in \cref{app:exp-details}.

Across all four model scales, Mamba-3 outperforms popular baselines at various downstream tasks (\cref{tab:downstream_evaluation}).
We highlight that Mamba-3 does not utilize the
external short convolution that has been empirically identified as an important component in many performant linear models~\citep{gu2024mambalineartimesequencemodeling,yang2025gateddeltanetworksimproving,allen2025canon}.

\subsubsection{MIMO}
We aim to further verify the gain from MIMO by investigating its language-modeling capabilities by training MIMO models with rank $R=4$ under the same settings.
To ensure that the total parameter count is comparable to SISO-based models, we decrease the inner dimension of the MLP layers in MIMO models to compensate for the increase due to the MIMO projections.
In the 1.5B-parameter models, for instance, the MLP inner dimension is reduced by only $6.6\%$, from 4096 to 3824. See~\Cref{app:mimo-for-mamba} for more details.

On both validation perplexity and our suite of language evaluation tasks (\cref{tab:downstream_evaluation}), we see significant gains when moving from SISO to MIMO for our Mamba-3 models. Namely, we achieve a significant perplexity gain of $0.11$ on the 1.5B models, and \Cref{fig:pareto} illustrates the downward shift in our validation loss. On the language evaluation front, we see gains on most tasks when compared to SISO, resulting in an average gain of 1.2 percentage points over SISO.

\subsubsection{Retrieval Capabilities}
\label{sec:exp:retrieval}
Beyond standard language modeling, an important measure for linear models is their retrieval ability---how well they can recall information from earlier in the sequence~\citep{arora2025mechanisticevaluationtransformersstate,arora2025simplelinearattentionlanguage}. Unlike attention models, which can freely revisit past context with the growing KV cache, linear models must compress context into a fixed-size state. This trade-off is reflected in the Transformer baseline's substantially stronger retrieval scores.  
To evaluate Mamba-3 under this lens, \cref{tab:niah_n_real} compares it against baselines on both real-world and synthetic needle-in-a-haystack (NIAH) tasks~\citep{hsieh2024rulerwhatsrealcontext}, using our pretrained 1.5B models from \cref{sec:exp:lm}. We restrict the task sequence length to 2K tokens to match the training setup and adopt the cloze-style format for our real-world tasks to mirror the next-token-prediction objective, following~\citet{arora2025simplelinearattentionlanguage,arora2024justreadtwiceclosing}.

Mamba-3 is competitive on real-world associative recall and question-answering (TQA, SQuAD) but struggles when extracting information from semi-structured or unstructured data (SWDE, FDA). On synthetic NIAH tasks, however, Mamba-3 surpasses or matches baselines on most cases and notably demonstrates markedly better out-of-distribution retrieval abilities than its Mamba-2 predecessor.

\paragraph{Improving Retrieval with Hybrid Models.}

Because of the natural retrieval-based weaknesses of fixed state-size, we predict that linear layers will be predominantly used \emph{in hybrid architectures} that mitigate this downside with quadratic self-attention layers.
To evaluate how Mamba-3 performs within this architectural paradigm, we train our hybrid models at the same scale in an interleaving fashion with a 5:1 ratio of linear layer to NoPE self-attention~\citep{yang2025ropenopeagainnew}.
As seen in prior work~\citep{waleffe2024empiricalstudymambabasedlanguage}, hybrid models outperform the Transformer baseline.
We find that the reintroduction of the pre-output projection RMSNorm (pre-gate, grouped RMSNorm in \Cref{tab:niah_n_real}) to the Mamba-3 layer improves the length generalization retrieval abilities at the slight cost of in-context, real-world retrieval tasks and is highly competitive as a linear sequence mixing backbone when mixed with self-attention.
However, the ideal norm type (grouped vs default) and its placement (pre- vs post-gate) is still unclear due to competing tradeoffs (\Cref{app:additional-results}, \Cref{tab:hybrid_norm_abl}), as we find that hybrid models and their exact characteristics and dynamics are complex and oftentimes unintuitive, a point echoed in 
recent works such as \citet{cabannes2025shortwindowattentionenables}.

\combinedniahretrievaltable

\subsection{SSM Methodology Ablations}
\label{sec:exp:meth-abl}
\ablationstable

\cref{tab:abl:meth} ablates the changes that Mamba-3 introduces to core SSM components, mainly the introduction of BC bias and exponential-trapezoidal discretization. We report the pretraining test perplexity on models at the 440M scale, trained for Chinchilla optimal tokens. We find that the bias and exponential-trapezoidal SSM synergize well and make the short convolution utilized by many current linear models redundant.

We empirically demonstrate that data-dependent RoPE in Mamba-3 enables state tracking.
Following \citet{grazzi2025unlockingstatetrackinglinearrnns}, we evaluate on tasks from the Chomsky hierarchy---Parity, Modular Arithmetic (without brackets), and Modular Arithmetic (with brackets)---and report scaled accuracies in \Cref{tab:reasoning}.
Mamba-3 solves Parity and Modular Arithmetic (without brackets), and nearly closes the accuracy gap on Modular Arithmetic (with brackets).
In contrast, Mamba-3 without RoPE, Mamba-3 with standard RoPE~\citep{su2023roformerenhancedtransformerrotary}, and Mamba-2 fail to learn these tasks.
We use the state-tracking-enabled variant of GDN and observe that Mamba-3 is competitive---matching parity and approaching its performance on both modular-arithmetic tasks. Experimental settings are covered in \cref{app:exp-details}.

\subsection{Inference Efficiency to Performance Tradeoff}
\label{sec:exp:pareto}

\latencytable

As $d_\text{state}$ governs the decode runtime for the sub-quadratic models considered in this paper (\Cref{sec:method:mimo}), we use it as a proxy for inference speed.
By plotting the validation perplexity (a proxy for model performance) as a function of $d_\text{state}$, we aim to formulate a holistic picture about how sub-quadratic models can trade off performance with inference speed. 

\cref{fig:pareto} shows such a Pareto frontier for the Mamba models considered in this paper.
For each data point, we train a 440M parameter model to $2\times$ Chinchilla optimal tokens on the Fineweb-Edu dataset, where the model is configured with a $d_\text{state}$ of $\{16, 32, 64, 128\}$.
As expected, we observe an inverse correlation between validation loss and $d_\text{state}$.
Moreover, there is a general downward shift on the Pareto frontier moving from Mamba-2 to Mamba-3, indicating a stronger model:
in this setting, Mamba-3 with $2\times$ smaller state size achieves better pretraining perplexity than its Mamba-2 counterpart, resulting in a faster model with the same quality or a better model for the same speed.

A further downward shift is observed when moving from the SISO variant of Mamba-3 to the MIMO variant of Mamba-3 (where we set the MIMO rank $R = 4$ and decrease the MLP inner dimension to parameter match the SISO variants).

We expand the comparison to include the GDN baseline in~\cref{app:additional-results}, \cref{fig:pareto_gdn}, which also shows Mamba-3 comparing favorably to GDN. %

\subsection{Fast Mamba-3 Kernels}
\label{sec:exp:speed}

\begin{table}[t!]
\centering
\caption{Prefill and Prefill+Decode latency across sequence lengths. Mamba-3 adds minimal overhead to its forward-pass and retains competitive decode latencies. Details in \Cref{app:inference-details}.
}
\label{tab:full-latency}
\resizebox{\linewidth}{!}{
\begin{tabular}{l 
                c c 
                c c 
                c c
                c c
                c c}
\toprule
\textbf{Model} 
& \multicolumn{2}{c}{\textbf{512 tokens}}
& \multicolumn{2}{c}{\textbf{1024 tokens}}
& \multicolumn{2}{c}{\textbf{2048 tokens}}
& \multicolumn{2}{c}{\textbf{4096 tokens}}
& \multicolumn{2}{c}{\textbf{16384 tokens}} \\
\cmidrule(lr){2-11}
& Prefill & Prefill+Dec
& Prefill & Prefill+Dec
& Prefill & Prefill+Dec
& Prefill & Prefill+Dec
& Prefill & Prefill+Dec \\
\midrule
vLLM (Llama-3.2-1B) 
& \Bf 0.26 & \Uline{4.45}
& \Bf 0.52 & 9.60
& \Bf 1.08 & 20.37
& \Bf 2.08 & 58.64
& \Bf 12.17 & 976.50 \\
Gated DeltaNet 
& \Uline{0.51} & 4.56
& \Uline{1.01} & \Uline{9.11}
& \Uline{2.01} & \Uline{18.22}
& \Uline{4.00} & \Uline{36.41}
& \Uline{16.21} & \Uline{145.87} \\
Mamba-2 
& \Uline{0.51} & 4.66
& 1.02 & 9.32
& 2.02 & 18.62
& 4.02 & 37.22
& 16.22 & 149.02 \\
Mamba-3 (SISO) 
& \Uline{0.51} & \Bf 4.39
& \Uline{1.01} & \Bf 8.78
& 2.02 & \Bf 17.57
& 4.01 & \Bf 35.11
& 16.22 & \Bf 140.61 \\
Mamba-3 (MIMO $R=4$) 
& 0.60 & 4.74
& 1.21 & 9.48
& 2.42 & 18.96
& 4.76 & 37.85
& 19.44 & 151.81 \\
\bottomrule
\end{tabular}}
\end{table}

We complement Mamba-3's methodological advances with optimized kernels that deliver fast inference in practical settings.
We implement a new series of inference kernels for Mamba-3---using Triton for the forward (prefill) path and CuTe DSL for decode---and compare their per-token decode latency against the released Triton kernels for Mamba-2 and GDN in Table~\ref{tab:benchmark}.\footnote{Details on each kernel DSL and the exact kernel fusion structure is provided in Appendix~\ref{app:inference-details}.}
The evaluation measures a single decode step at batch size 128 on a single H100 for both FP32 and BF16 datatypes; models are 1.5B parameters with model dimension $2048$ and state dimension $\in \{64, 128\}$.
Across all configurations, SISO achieves the lowest latency amongst baselines.
MIMO, with its higher arithmetic intensity, increases the decoding FLOPs without significantly increasing decode runtime.
Our benchmarks indicate that our CuTe DSL decode implementation is competitive and that the additional components of Mamba-3 (exponential-trapezoidal update, complex-valued state, and MIMO projections) are lightweight.
This supports our overall inference-first perspective: Mamba-3 admits a \textbf{simple, low-latency implementation} while providing strong empirical performance.

\Cref{tab:full-latency} benchmarks both end-to-end latency across different decoding sequence length and prefill time for the same sequence length.
The decode time is consistent with \cref{tab:benchmark}, where Mamba-3 (SISO) is fastest; Mamba-3 (MIMO) is on par with Mamba-2; and all linear methods are faster than optimized attention as sequence length grows.
We also see that MIMO incurs a moderate overhead for prefill, as discussed in \cref{sec:method:mimo}.
Details of the benchmark are in \cref{app:inference-details}.

\section{Related Work}
\label{sec:related}

\subsection{Linear-Time Sequence Mixers} 
\label{sec:related:linear}

A growing body of work seeks to replace the quadratic softmax-based attention mechanism~\citep{vaswani2017attention,bahdanau2016neuralmachinetranslationjointly} with linear runtime alternatives.
Prominent approaches can be categorized under three broad frameworks: linear attention, test-time training, and state space models. 

Many nascent linear attention (LA) models aimed to approximate softmax attention through kernel feature maps~\citep{linearattention,performer}, while recent models have discarded the feature maps for raw dot-products between queries and keys, modulated by decays or masks~\citep{sun2023retentivenetworksuccessortransformer,yang2024gatedlinearattentiontransformers}.
More recently, fast-weight programmers~\citet{schlag2021deltarule} that modulate the state memory with key-value pairs have also fallen under the umbrella term ``linear attention.''
\citet{yang2025gateddeltanetworksimproving,yang2025parallelizinglineartransformersdelta} originated from this line of work and enhanced traditional linear attention by replacing the additive memory update with a delta-rule recurrence.
This has further spurred on a host of work improving the efficiency and capabilities of linear models built on the delta rule~\citep{kimiteam2025kimilinearexpressiveefficient, hu2025combaimprovingbilinearrnns}.

A parallel line of test-time training (TTT) or test-time regression (TTR) work views sequence modeling as an online learning task during inference.
Here, the recurrent state represents a compressed summary of past inputs, and recurrent steps update the state to memorize new information~\citep{sun2025learninglearntesttime,zhang2025testtimetrainingright,tandon2025endtoendtesttimetraininglong}.
Equivalently, these methods can be viewed as optimization of a global regression objective, and recurrent state updates represent iterative optimization procedures such as variants of gradient descent~\citep{wang2025testtimeregressionunifyingframework}.

Structured state space models (SSMs) are another view of modern recurrent models inspired by classical signal processing and dynamical systems.
Early versions of SSMs such as S4~\citep{S4,S5,DSS} used linear time invariant (LTI) layers with structured state transition matrices, for example diagonal or low-rank plus diagonal, to facilitate efficient computation and stable learning of long-context tasks~\citep{S4,S5,DSS}.
The introduction of time-varying, input-dependent selectivity to SSMs in Mamba-1 \citep{gu2024mambalineartimesequencemodeling} reduced the disparity between self-attention and linear models on information-dense modalities, notably language modeling.
Subsequently, Mamba-2 \citep{dao2024transformersssmsgeneralizedmodels} formalized the connection between SSMs and (linear) attention through the structured state space duality (SSD) that we build on in this work.

\subsection{State Tracking and Complex State Space Models}

\paragraph{Expressivity and State Tracking.}
Recent work characterizes the types of state that recurrent, constant-memory mixers can maintain, revealing algorithmic deficiencies in previous SSM-based models. \citet{merrill2025illusionstatestatespacemodels} show that under finite precision, practical SSMs collapse to $\mathrm{TC}^0$, leading to failures on tasks like permutation composition over $S_5$ unless the primitive is extended.
Similarly, \citet{yu2025blockbiasedmamba} prove that a single-layer Mamba is not a universal approximator. Several modifications have been proposed to improve expressivity. For instance, the same work shows that a block-biased variant regains the universal approximation property with only minor changes, either through block decomposition or a channel-specific bias. Allowing negative eigenvalues or non-triangular transitions enables linear RNNs---including diagonal and Householder/DeltaNet forms---to capture parity and, under mild assumptions, regular languages \citep{grazzi2025unlockingstatetrackinglinearrnns}. Complex-valued parameterizations provide another avenue for enhanced expressivity.

\paragraph{Complex State Space Models.}
Structured SSMs prior to Mamba were frequently complex-valued, rooted in traditional SSM theory.
They also generally excelled in domains such as vision and audio, which have explicit frequency-based information content, rather than language.
While some models such as H3~\citep{fu2023hungryhungryhipposlanguage}, RetNet~\citep{sun2023retentivenetworksuccessortransformer}, and Megalodon~\citep{ma2024megalodonefficientllmpretraining} kept complex-valued SSMs while targeting language modeling, they still noticeably underperformed Transformers.

Additionally, because these models were LTI and were computed using very different algorithms (in particular, convolutions or explicit recurrence) than modern selective SSMs such as Mamba, they generally did not use the RoPE trick to handle the complex part.
An exception is RetNet, which introduced a model in between linear attention and Mamba-2 that used constant scalar decays (as opposed to no decay in LA and data-dependent decay in Mamba-2) with an additional constant complex phase that was implemented through RoPE.

In general, complex numbers have been empirically found to be unhelpful for language modeling, and hence were phased out in Mamba-1 and successors, including parallel lines of work on linear attention and test-time training.
Mamba-3 represents the first modern recurrent model with complex-valued state transitions,
which were introduced for specific purposes of increasing expressivity and state-tracking ability.
By incorporating the RoPE trick, this represents, to the best of our knowledge, the first usage of data-dependent RoPE grounded in theoretical motivations.

\subsection{Multi-Input, Multi-Output} 
S4~\citep{S4} is a single-input, single-output LTI system where each dimension of the input was assigned its own independent SSM.
Such SISO models have a significantly larger recurrent state than classical RNNs, and necessitated more complicated mathematical machinery to compute them efficiently.
Aiming to simplify the model, S5~\citep{S5} and LRU~\citep{LRU} replaced the set of SISO SSMs with a multi-input, multi-output SSM applied directly on the entire vectorized input.
This change reduced the effective state capacity but enabled an alternate computation path by directly computing the recurrence with a parallel scan.
While this trade-off between state capacity and modeling performance was less pronounced in LTI models, Mamba-1 (S6)~\citep{gu2024mambalineartimesequencemodeling} and Mamba-2~\citep{dao2024transformersssmsgeneralizedmodels} returned to the SISO system due to the importance of a large state size in the time-varying setting.
The computational bottleneck associated with the increased state size was addressed with a hardware-aware parallel scan algorithm for Mamba-1 and a matrix multiplication-based algorithm for Mamba-2.

The introduction of MIMO to Mamba-3 significantly diverges from prior work. Unlike previous MIMO models, which aimed to simplify training algorithms at the cost of slightly reduced expressivity,
Mamba-3's MIMO structure is motivated to \emph{increase} modeling power while preserving \emph{inference} efficiency.
Accordingly, its state expansion is kept at Mamba-1/-2 levels to maintain modeling capabilities while trading off additional training compute.

\subsection{The State Space Model Viewpoint}

Although modern recurrent models have several different viewpoints that largely converge (\cref{sec:related:linear}),
each framework has slightly different interpretations and motivations that can lead to different design spaces and extensions.
In particular, linear attention and test-time training are more closely related and can perhaps be lumped together under a framework of \emph{associative memory} that explicitly aims to memorize input data through ``key-value'' stores;
either through approximations to the canonical KV method (i.e., quadratic attention) in LA,
or by minimizing soft optimization objectives in TTT.
On the other hand, state space models have a different lineage, as reflected both in terminology (e.g., $A,B,C,X$ instead of $Q,K,V$) and in their natural extensions.
Notably, the methodological improvements in Mamba-3 are all associated with the SSM viewpoint specifically and are less motivated from associative memory frameworks.

\begin{enumerate}
    \item \textbf{Exponential-Trapezoidal Discretization.}
    The SSM viewpoint entails the discretization of a continuous ODE governing the system;
    our exponential-trapezoidal discretization falls out of an improved discretization method.
    As associative memory methods do not use discretization, it is not obvious how to interpret a 3-term recurrence such as exponential-trapezoidal under alternate viewpoints.
    \item \textbf{Complex-Valued State Transitions.}
    Complex SSMs have long been a staple of dynamical systems, and it is natural to consider complex values as an extension of selective SSMs.
    On the other hand, the associative memory framework interprets the $A$ state transition as a coefficient of an objective function, for example corresponding to the weight of an L2 regularization (or weight-decay) term in the optimization objective~\citep{wang2025testtimeregressionunifyingframework}.
    However, complex values are meaningless as the coefficient of a regression objective;
    hence, Mamba-3 is not obviously interpretable within these frameworks.
    \item \textbf{Multi-Input, Multi-Output.}
    MIMO is a classical concept from the state space model literature and does not naturally appear in associative memory (linear attention or test-time training) frameworks.
    However, we do note that the MIMO formulation introduced in this paper is not directly tied to SSM theory---and instead is motivated from a computational perspective---and our techniques can be adapted to other modern recurrent models as well.
\end{enumerate}

There continues to be vigorous progress in the development of linear-time sequence models, and the discussion here only captures a portion of them.
We anticipate a growing space of unified frameworks, improved understanding, and new generalizations as the development of these models continually evolves.

\section{Conclusion And Future Work}
We introduce Mamba-3, a state space model with several methodological improvements over prior SSMs: a more powerful recurrence via exponential-trapezoidal discretization; improved expressivity through complex-valued state transitions; and higher inference efficiency and modeling abilities with a MIMO formulation.
The base SISO version of Mamba-3 delivers strong language modeling results, both standalone and in interleaved hybrid architectures, and advances the Pareto frontier on the performance-efficiency tradeoff over prior linear sequence models.
The MIMO version trades off slower training for even stronger modeling power, while maintaining competitive inference efficiency compared to Mamba-2.
Put together, the techniques in Mamba-3 show simple and theoretically motivated improvements from the state space model viewpoint, and open up new directions and design principles for efficient sequence models.

\textbf{Acknowledgments.}

We gratefully acknowledge the support of the Schmidt Sciences AI2050 fellowship, the Google ML and Systems Junior Faculty Awards, the Google Research Scholar program, Princeton Language and Intelligence (PLI), Together AI, and Cartesia AI. 
KL is supported by the NSF GRFP under Grant DGE2140739.
We also thank Sukjun Hwang and Gaurav Ghosal for helpful feedback and discussions.

\printbibliography

@misc{gu2024mambalineartimesequencemodeling,
      title={Mamba: Linear-Time Sequence Modeling with Selective State Spaces}, 
      author={Albert Gu and Tri Dao},
      year={2024},
      eprint={2312.00752},
      archivePrefix={arXiv},
      primaryClass={cs.LG},
      url={https://arxiv.org/abs/2312.00752}, 
}

@misc{dao2024transformersssmsgeneralizedmodels,
      title={Transformers are SSMs: Generalized Models and Efficient Algorithms Through Structured State Space Duality}, 
      author={Tri Dao and Albert Gu},
      year={2024},
      eprint={2405.21060},
      archivePrefix={arXiv},
      primaryClass={cs.LG},
      url={https://arxiv.org/abs/2405.21060}, 
}

@misc{wortsman2023smallscaleproxieslargescaletransformer,
      title={Small-scale proxies for large-scale Transformer training instabilities}, 
      author={Mitchell Wortsman and Peter J. Liu and Lechao Xiao and Katie Everett and Alex Alemi and Ben Adlam and John D. Co-Reyes and Izzeddin Gur and Abhishek Kumar and Roman Novak and Jeffrey Pennington and Jascha Sohl-dickstein and Kelvin Xu and Jaehoon Lee and Justin Gilmer and Simon Kornblith},
      year={2023},
      eprint={2309.14322},
      archivePrefix={arXiv},
      primaryClass={cs.LG},
      url={https://arxiv.org/abs/2309.14322}, 
}

@misc{henry2020querykeynormalizationtransformers,
      title={Query-Key Normalization for Transformers}, 
      author={Alex Henry and Prudhvi Raj Dachapally and Shubham Pawar and Yuxuan Chen},
      year={2020},
      eprint={2010.04245},
      archivePrefix={arXiv},
      primaryClass={cs.CL},
      url={https://arxiv.org/abs/2010.04245}, 
}

@misc{yang2025gateddeltanetworksimproving,
      title={Gated Delta Networks: Improving Mamba2 with Delta Rule}, 
      author={Songlin Yang and Jan Kautz and Ali Hatamizadeh},
      year={2025},
      eprint={2412.06464},
      archivePrefix={arXiv},
      primaryClass={cs.CL},
      url={https://arxiv.org/abs/2412.06464}, 
}

@misc{yang2025parallelizinglineartransformersdelta,
      title={Parallelizing Linear Transformers with the Delta Rule over Sequence Length}, 
      author={Songlin Yang and Bailin Wang and Yu Zhang and Yikang Shen and Yoon Kim},
      year={2025},
      eprint={2406.06484},
      archivePrefix={arXiv},
      primaryClass={cs.LG},
      url={https://arxiv.org/abs/2406.06484}, 
}

@article{allen2025canon,
  author = {{Allen-Zhu}, Zeyuan},
  title = {{Physics of Language Models: Part 4.1, Architecture Design and the Magic of Canon Layers}},
  year = {2025},
  month = may,
  journal = {SSRN Electronic Journal},
  note = {\url{https://ssrn.com/abstract=5240330}} 
}

@misc{arora2025simplelinearattentionlanguage,
      title={Simple linear attention language models balance the recall-throughput tradeoff}, 
      author={Simran Arora and Sabri Eyuboglu and Michael Zhang and Aman Timalsina and Silas Alberti and Dylan Zinsley and James Zou and Atri Rudra and Christopher Ré},
      year={2025},
      eprint={2402.18668},
      archivePrefix={arXiv},
      primaryClass={cs.CL},
      url={https://arxiv.org/abs/2402.18668}, 
}

@misc{grazzi2025unlockingstatetrackinglinearrnns,
      title={Unlocking State-Tracking in Linear RNNs Through Negative Eigenvalues}, 
      author={Riccardo Grazzi and Julien Siems and Arber Zela and Jörg K. H. Franke and Frank Hutter and Massimiliano Pontil},
      year={2025},
      eprint={2411.12537},
      archivePrefix={arXiv},
      primaryClass={cs.LG},
      url={https://arxiv.org/abs/2411.12537}, 
}

@misc{sarrof2024expressivecapacitystatespace,
      title={The Expressive Capacity of State Space Models: A Formal Language Perspective}, 
      author={Yash Sarrof and Yana Veitsman and Michael Hahn},
      year={2024},
      eprint={2405.17394},
      archivePrefix={arXiv},
      primaryClass={cs.CL},
      url={https://arxiv.org/abs/2405.17394}, 
}

@misc{merrill2025illusionstatestatespacemodels,
      title={The Illusion of State in State-Space Models}, 
      author={William Merrill and Jackson Petty and Ashish Sabharwal},
      year={2025},
      eprint={2404.08819},
      archivePrefix={arXiv},
      primaryClass={cs.LG},
      url={https://arxiv.org/abs/2404.08819}, 
}

@book{tenenbaum1985ordinary,
  title={Ordinary Differential Equations: An Elementary Textbook for Students of Mathematics, Engineering, and the Sciences},
  author={Tenenbaum, M. and Pollard, H.},
  isbn={9780486649405},
  lccn={lc85012983},
  series={Dover Books on Mathematics},
  url={https://books.google.com/books?id=iU4zDAAAQBAJ},
  year={1985},
  publisher={Dover Publications}
}

@misc{su2023roformerenhancedtransformerrotary,
      title={RoFormer: Enhanced Transformer with Rotary Position Embedding}, 
      author={Jianlin Su and Yu Lu and Shengfeng Pan and Ahmed Murtadha and Bo Wen and Yunfeng Liu},
      year={2023},
      eprint={2104.09864},
      archivePrefix={arXiv},
      primaryClass={cs.CL},
      url={https://arxiv.org/abs/2104.09864}, 
}

@article{S4D,
  title   = {On the Parameterization and Initialization of Diagonal State Space Models},
  author  = {Gu, Albert and Gupta, Ankit and Goel, Karan and R{\'e}, Christopher},
  journal = {arXiv preprint arXiv:2206.11893},
  year    = {2022},
  url     = {https://arxiv.org/abs/2206.11893}
}

@misc{S4,
      title={Efficiently Modeling Long Sequences with Structured State Spaces}, 
      author={Albert Gu and Karan Goel and Christopher Ré},
      year={2022},
      eprint={2111.00396},
      archivePrefix={arXiv},
      primaryClass={cs.LG},
      url={https://arxiv.org/abs/2111.00396}, 
}

@misc{schlag2021deltarule,
      title={Linear Transformers Are Secretly Fast Weight Programmers}, 
      author={Imanol Schlag and Kazuki Irie and Jürgen Schmidhuber},
      year={2021},
      eprint={2102.11174},
      archivePrefix={arXiv},
      primaryClass={cs.LG},
      url={https://arxiv.org/abs/2102.11174}, 
}

@misc{LRU,
      title={Resurrecting Recurrent Neural Networks for Long Sequences}, 
      author={Antonio Orvieto and Samuel L Smith and Albert Gu and Anushan Fernando and Caglar Gulcehre and Razvan Pascanu and Soham De},
      year={2023},
      eprint={2303.06349},
      archivePrefix={arXiv},
      primaryClass={cs.LG},
      url={https://arxiv.org/abs/2303.06349}, 
}

@misc{DSS,
      title={Diagonal State Spaces are as Effective as Structured State Spaces}, 
      author={Ankit Gupta and Albert Gu and Jonathan Berant},
      year={2022},
      eprint={2203.14343},
      archivePrefix={arXiv},
      primaryClass={cs.LG},
      url={https://arxiv.org/abs/2203.14343}, 
}

@misc{S5,
      title={Simplified State Space Layers for Sequence Modeling}, 
      author={Jimmy T. H. Smith and Andrew Warrington and Scott W. Linderman},
      year={2023},
      eprint={2208.04933},
      archivePrefix={arXiv},
      primaryClass={cs.LG},
      url={https://arxiv.org/abs/2208.04933}, 
}

@misc{yu2025blockbiasedmamba,
      title={Block-Biased Mamba for Long-Range Sequence Processing}, 
      author={Annan Yu and N. Benjamin Erichson},
      year={2025},
      eprint={2505.09022},
      archivePrefix={arXiv},
      primaryClass={cs.LG},
      url={https://arxiv.org/abs/2505.09022}, 
}

@misc{arora2025mechanisticevaluationtransformersstate,
      title={Mechanistic evaluation of Transformers and state space models}, 
      author={Aryaman Arora and Neil Rathi and Nikil Roashan Selvam and Róbert Csordás and Dan Jurafsky and Christopher Potts},
      year={2025},
      eprint={2505.15105},
      archivePrefix={arXiv},
      primaryClass={cs.CL},
      url={https://arxiv.org/abs/2505.15105}, 
}

@misc{linearattention,
      title={Transformers are RNNs: Fast Autoregressive Transformers with Linear Attention}, 
      author={Angelos Katharopoulos and Apoorv Vyas and Nikolaos Pappas and François Fleuret},
      year={2020},
      eprint={2006.16236},
      archivePrefix={arXiv},
      primaryClass={cs.LG},
      url={https://arxiv.org/abs/2006.16236}, 
}

@misc{performer,
      title={Rethinking Attention with Performers}, 
      author={Krzysztof Choromanski and Valerii Likhosherstov and David Dohan and Xingyou Song and Andreea Gane and Tamas Sarlos and Peter Hawkins and Jared Davis and Afroz Mohiuddin and Lukasz Kaiser and David Belanger and Lucy Colwell and Adrian Weller},
      year={2022},
      eprint={2009.14794},
      archivePrefix={arXiv},
      primaryClass={cs.LG},
      url={https://arxiv.org/abs/2009.14794}, 
}

@misc{grazzi2024mambacapableincontextlearning,
      title={Is Mamba Capable of In-Context Learning?}, 
      author={Riccardo Grazzi and Julien Siems and Simon Schrodi and Thomas Brox and Frank Hutter},
      year={2024},
      eprint={2402.03170},
      archivePrefix={arXiv},
      primaryClass={cs.LG},
      url={https://arxiv.org/abs/2402.03170}, 
}

@book{
    Süli_Mayers_2003,
    place={Cambridge},
    title={An Introduction to Numerical Analysis},
    publisher={Cambridge University Press},
    author={Süli, Endre and Mayers, David F.},
    year={2003},
}

@misc{grattafiori2024llama3herdmodels,
      title={The Llama 3 Herd of Models}, 
      author={Aaron Grattafiori and Abhimanyu Dubey and Abhinav Jauhri and Abhinav Pandey and Abhishek Kadian and Ahmad Al-Dahle and Aiesha Letman and Akhil Mathur and Alan Schelten and Alex Vaughan and Amy Yang and Angela Fan and Anirudh Goyal and Anthony Hartshorn and Aobo Yang and Archi Mitra and Archie Sravankumar and Artem Korenev and Arthur Hinsvark and Arun Rao and Aston Zhang and Aurelien Rodriguez and Austen Gregerson and Ava Spataru and Baptiste Roziere and Bethany Biron and Binh Tang and Bobbie Chern and Charlotte Caucheteux and Chaya Nayak and Chloe Bi and Chris Marra and Chris McConnell and Christian Keller and Christophe Touret and Chunyang Wu and Corinne Wong and Cristian Canton Ferrer and Cyrus Nikolaidis and Damien Allonsius and Daniel Song and Danielle Pintz and Danny Livshits and Danny Wyatt and David Esiobu and Dhruv Choudhary and Dhruv Mahajan and Diego Garcia-Olano and Diego Perino and Dieuwke Hupkes and Egor Lakomkin and Ehab AlBadawy and Elina Lobanova and Emily Dinan and Eric Michael Smith and Filip Radenovic and Francisco Guzmán and Frank Zhang and Gabriel Synnaeve and Gabrielle Lee and Georgia Lewis Anderson and Govind Thattai and Graeme Nail and Gregoire Mialon and Guan Pang and Guillem Cucurell and Hailey Nguyen and Hannah Korevaar and Hu Xu and Hugo Touvron and Iliyan Zarov and Imanol Arrieta Ibarra and Isabel Kloumann and Ishan Misra and Ivan Evtimov and Jack Zhang and Jade Copet and Jaewon Lee and Jan Geffert and Jana Vranes and Jason Park and Jay Mahadeokar and Jeet Shah and Jelmer van der Linde and Jennifer Billock and Jenny Hong and Jenya Lee and Jeremy Fu and Jianfeng Chi and Jianyu Huang and Jiawen Liu and Jie Wang and Jiecao Yu and Joanna Bitton and Joe Spisak and Jongsoo Park and Joseph Rocca and Joshua Johnstun and Joshua Saxe and Junteng Jia and Kalyan Vasuden Alwala and Karthik Prasad and Kartikeya Upasani and Kate Plawiak and Ke Li and Kenneth Heafield and Kevin Stone and Khalid El-Arini and Krithika Iyer and Kshitiz Malik and Kuenley Chiu and Kunal Bhalla and Kushal Lakhotia and Lauren Rantala-Yeary and Laurens van der Maaten and Lawrence Chen and Liang Tan and Liz Jenkins and Louis Martin and Lovish Madaan and Lubo Malo and Lukas Blecher and Lukas Landzaat and Luke de Oliveira and Madeline Muzzi and Mahesh Pasupuleti and Mannat Singh and Manohar Paluri and Marcin Kardas and Maria Tsimpoukelli and Mathew Oldham and Mathieu Rita and Maya Pavlova and Melanie Kambadur and Mike Lewis and Min Si and Mitesh Kumar Singh and Mona Hassan and Naman Goyal and Narjes Torabi and Nikolay Bashlykov and Nikolay Bogoychev and Niladri Chatterji and Ning Zhang and Olivier Duchenne and Onur Çelebi and Patrick Alrassy and Pengchuan Zhang and Pengwei Li and Petar Vasic and Peter Weng and Prajjwal Bhargava and Pratik Dubal and Praveen Krishnan and Punit Singh Koura and Puxin Xu and Qing He and Qingxiao Dong and Ragavan Srinivasan and Raj Ganapathy and Ramon Calderer and Ricardo Silveira Cabral and Robert Stojnic and Roberta Raileanu and Rohan Maheswari and Rohit Girdhar and Rohit Patel and Romain Sauvestre and Ronnie Polidoro and Roshan Sumbaly and Ross Taylor and Ruan Silva and Rui Hou and Rui Wang and Saghar Hosseini and Sahana Chennabasappa and Sanjay Singh and Sean Bell and Seohyun Sonia Kim and Sergey Edunov and Shaoliang Nie and Sharan Narang and Sharath Raparthy and Sheng Shen and Shengye Wan and Shruti Bhosale and Shun Zhang and Simon Vandenhende and Soumya Batra and Spencer Whitman and Sten Sootla and Stephane Collot and Suchin Gururangan and Sydney Borodinsky and Tamar Herman and Tara Fowler and Tarek Sheasha and Thomas Georgiou and Thomas Scialom and Tobias Speckbacher and Todor Mihaylov and Tong Xiao and Ujjwal Karn and Vedanuj Goswami and Vibhor Gupta and Vignesh Ramanathan and Viktor Kerkez and Vincent Gonguet and Virginie Do and Vish Vogeti and Vítor Albiero and Vladan Petrovic and Weiwei Chu and Wenhan Xiong and Wenyin Fu and Whitney Meers and Xavier Martinet and Xiaodong Wang and Xiaofang Wang and Xiaoqing Ellen Tan and Xide Xia and Xinfeng Xie and Xuchao Jia and Xuewei Wang and Yaelle Goldschlag and Yashesh Gaur and Yasmine Babaei and Yi Wen and Yiwen Song and Yuchen Zhang and Yue Li and Yuning Mao and Zacharie Delpierre Coudert and Zheng Yan and Zhengxing Chen and Zoe Papakipos and Aaditya Singh and Aayushi Srivastava and Abha Jain and Adam Kelsey and Adam Shajnfeld and Adithya Gangidi and Adolfo Victoria and Ahuva Goldstand and Ajay Menon and Ajay Sharma and Alex Boesenberg and Alexei Baevski and Allie Feinstein and Amanda Kallet and Amit Sangani and Amos Teo and Anam Yunus and Andrei Lupu and Andres Alvarado and Andrew Caples and Andrew Gu and Andrew Ho and Andrew Poulton and Andrew Ryan and Ankit Ramchandani and Annie Dong and Annie Franco and Anuj Goyal and Aparajita Saraf and Arkabandhu Chowdhury and Ashley Gabriel and Ashwin Bharambe and Assaf Eisenman and Azadeh Yazdan and Beau James and Ben Maurer and Benjamin Leonhardi and Bernie Huang and Beth Loyd and Beto De Paola and Bhargavi Paranjape and Bing Liu and Bo Wu and Boyu Ni and Braden Hancock and Bram Wasti and Brandon Spence and Brani Stojkovic and Brian Gamido and Britt Montalvo and Carl Parker and Carly Burton and Catalina Mejia and Ce Liu and Changhan Wang and Changkyu Kim and Chao Zhou and Chester Hu and Ching-Hsiang Chu and Chris Cai and Chris Tindal and Christoph Feichtenhofer and Cynthia Gao and Damon Civin and Dana Beaty and Daniel Kreymer and Daniel Li and David Adkins and David Xu and Davide Testuggine and Delia David and Devi Parikh and Diana Liskovich and Didem Foss and Dingkang Wang and Duc Le and Dustin Holland and Edward Dowling and Eissa Jamil and Elaine Montgomery and Eleonora Presani and Emily Hahn and Emily Wood and Eric-Tuan Le and Erik Brinkman and Esteban Arcaute and Evan Dunbar and Evan Smothers and Fei Sun and Felix Kreuk and Feng Tian and Filippos Kokkinos and Firat Ozgenel and Francesco Caggioni and Frank Kanayet and Frank Seide and Gabriela Medina Florez and Gabriella Schwarz and Gada Badeer and Georgia Swee and Gil Halpern and Grant Herman and Grigory Sizov and Guangyi and Zhang and Guna Lakshminarayanan and Hakan Inan and Hamid Shojanazeri and Han Zou and Hannah Wang and Hanwen Zha and Haroun Habeeb and Harrison Rudolph and Helen Suk and Henry Aspegren and Hunter Goldman and Hongyuan Zhan and Ibrahim Damlaj and Igor Molybog and Igor Tufanov and Ilias Leontiadis and Irina-Elena Veliche and Itai Gat and Jake Weissman and James Geboski and James Kohli and Janice Lam and Japhet Asher and Jean-Baptiste Gaya and Jeff Marcus and Jeff Tang and Jennifer Chan and Jenny Zhen and Jeremy Reizenstein and Jeremy Teboul and Jessica Zhong and Jian Jin and Jingyi Yang and Joe Cummings and Jon Carvill and Jon Shepard and Jonathan McPhie and Jonathan Torres and Josh Ginsburg and Junjie Wang and Kai Wu and Kam Hou U and Karan Saxena and Kartikay Khandelwal and Katayoun Zand and Kathy Matosich and Kaushik Veeraraghavan and Kelly Michelena and Keqian Li and Kiran Jagadeesh and Kun Huang and Kunal Chawla and Kyle Huang and Lailin Chen and Lakshya Garg and Lavender A and Leandro Silva and Lee Bell and Lei Zhang and Liangpeng Guo and Licheng Yu and Liron Moshkovich and Luca Wehrstedt and Madian Khabsa and Manav Avalani and Manish Bhatt and Martynas Mankus and Matan Hasson and Matthew Lennie and Matthias Reso and Maxim Groshev and Maxim Naumov and Maya Lathi and Meghan Keneally and Miao Liu and Michael L. Seltzer and Michal Valko and Michelle Restrepo and Mihir Patel and Mik Vyatskov and Mikayel Samvelyan and Mike Clark and Mike Macey and Mike Wang and Miquel Jubert Hermoso and Mo Metanat and Mohammad Rastegari and Munish Bansal and Nandhini Santhanam and Natascha Parks and Natasha White and Navyata Bawa and Nayan Singhal and Nick Egebo and Nicolas Usunier and Nikhil Mehta and Nikolay Pavlovich Laptev and Ning Dong and Norman Cheng and Oleg Chernoguz and Olivia Hart and Omkar Salpekar and Ozlem Kalinli and Parkin Kent and Parth Parekh and Paul Saab and Pavan Balaji and Pedro Rittner and Philip Bontrager and Pierre Roux and Piotr Dollar and Polina Zvyagina and Prashant Ratanchandani and Pritish Yuvraj and Qian Liang and Rachad Alao and Rachel Rodriguez and Rafi Ayub and Raghotham Murthy and Raghu Nayani and Rahul Mitra and Rangaprabhu Parthasarathy and Raymond Li and Rebekkah Hogan and Robin Battey and Rocky Wang and Russ Howes and Ruty Rinott and Sachin Mehta and Sachin Siby and Sai Jayesh Bondu and Samyak Datta and Sara Chugh and Sara Hunt and Sargun Dhillon and Sasha Sidorov and Satadru Pan and Saurabh Mahajan and Saurabh Verma and Seiji Yamamoto and Sharadh Ramaswamy and Shaun Lindsay and Shaun Lindsay and Sheng Feng and Shenghao Lin and Shengxin Cindy Zha and Shishir Patil and Shiva Shankar and Shuqiang Zhang and Shuqiang Zhang and Sinong Wang and Sneha Agarwal and Soji Sajuyigbe and Soumith Chintala and Stephanie Max and Stephen Chen and Steve Kehoe and Steve Satterfield and Sudarshan Govindaprasad and Sumit Gupta and Summer Deng and Sungmin Cho and Sunny Virk and Suraj Subramanian and Sy Choudhury and Sydney Goldman and Tal Remez and Tamar Glaser and Tamara Best and Thilo Koehler and Thomas Robinson and Tianhe Li and Tianjun Zhang and Tim Matthews and Timothy Chou and Tzook Shaked and Varun Vontimitta and Victoria Ajayi and Victoria Montanez and Vijai Mohan and Vinay Satish Kumar and Vishal Mangla and Vlad Ionescu and Vlad Poenaru and Vlad Tiberiu Mihailescu and Vladimir Ivanov and Wei Li and Wenchen Wang and Wenwen Jiang and Wes Bouaziz and Will Constable and Xiaocheng Tang and Xiaojian Wu and Xiaolan Wang and Xilun Wu and Xinbo Gao and Yaniv Kleinman and Yanjun Chen and Ye Hu and Ye Jia and Ye Qi and Yenda Li and Yilin Zhang and Ying Zhang and Yossi Adi and Youngjin Nam and Yu and Wang and Yu Zhao and Yuchen Hao and Yundi Qian and Yunlu Li and Yuzi He and Zach Rait and Zachary DeVito and Zef Rosnbrick and Zhaoduo Wen and Zhenyu Yang and Zhiwei Zhao and Zhiyu Ma},
      year={2024},
      eprint={2407.21783},
      archivePrefix={arXiv},
      primaryClass={cs.AI},
      url={https://arxiv.org/abs/2407.21783}, 
}

@misc{hsieh2024rulerwhatsrealcontext,
      title={RULER: What's the Real Context Size of Your Long-Context Language Models?}, 
      author={Cheng-Ping Hsieh and Simeng Sun and Samuel Kriman and Shantanu Acharya and Dima Rekesh and Fei Jia and Yang Zhang and Boris Ginsburg},
      year={2024},
      eprint={2404.06654},
      archivePrefix={arXiv},
      primaryClass={cs.CL},
      url={https://arxiv.org/abs/2404.06654}, 
}

@misc{arora2024justreadtwiceclosing,
      title={Just read twice: closing the recall gap for recurrent language models}, 
      author={Simran Arora and Aman Timalsina and Aaryan Singhal and Benjamin Spector and Sabri Eyuboglu and Xinyi Zhao and Ashish Rao and Atri Rudra and Christopher Ré},
      year={2024},
      eprint={2407.05483},
      archivePrefix={arXiv},
      primaryClass={cs.CL},
      url={https://arxiv.org/abs/2407.05483}, 
}

@misc{penedo2024finewebdatasetsdecantingweb,
      title={The FineWeb Datasets: Decanting the Web for the Finest Text Data at Scale}, 
      author={Guilherme Penedo and Hynek Kydlíček and Loubna Ben allal and Anton Lozhkov and Margaret Mitchell and Colin Raffel and Leandro Von Werra and Thomas Wolf},
      year={2024},
      eprint={2406.17557},
      archivePrefix={arXiv},
      primaryClass={cs.CL},
      url={https://arxiv.org/abs/2406.17557}, 
}

@misc{li2024llminferenceservingsurvey,
      title={LLM Inference Serving: Survey of Recent Advances and Opportunities}, 
      author={Baolin Li and Yankai Jiang and Vijay Gadepally and Devesh Tiwari},
      year={2024},
      eprint={2407.12391},
      archivePrefix={arXiv},
      primaryClass={cs.DC},
      url={https://arxiv.org/abs/2407.12391}, 
}

@misc{kwon2023efficientmemorymanagementlarge,
      title={Efficient Memory Management for Large Language Model Serving with PagedAttention}, 
      author={Woosuk Kwon and Zhuohan Li and Siyuan Zhuang and Ying Sheng and Lianmin Zheng and Cody Hao Yu and Joseph E. Gonzalez and Hao Zhang and Ion Stoica},
      year={2023},
      eprint={2309.06180},
      archivePrefix={arXiv},
      primaryClass={cs.LG},
      url={https://arxiv.org/abs/2309.06180}, 
}

@misc{snell2024scalingllmtesttimecompute,
      title={Scaling LLM Test-Time Compute Optimally can be More Effective than Scaling Model Parameters}, 
      author={Charlie Snell and Jaehoon Lee and Kelvin Xu and Aviral Kumar},
      year={2024},
      eprint={2408.03314},
      archivePrefix={arXiv},
      primaryClass={cs.LG},
      url={https://arxiv.org/abs/2408.03314}, 
}

@misc{wu2025inferencescalinglawsempirical,
      title={Inference Scaling Laws: An Empirical Analysis of Compute-Optimal Inference for Problem-Solving with Language Models}, 
      author={Yangzhen Wu and Zhiqing Sun and Shanda Li and Sean Welleck and Yiming Yang},
      year={2025},
      eprint={2408.00724},
      archivePrefix={arXiv},
      primaryClass={cs.AI},
      url={https://arxiv.org/abs/2408.00724}, 
}

@misc{sun2023retentivenetworksuccessortransformer,
      title={Retentive Network: A Successor to Transformer for Large Language Models}, 
      author={Yutao Sun and Li Dong and Shaohan Huang and Shuming Ma and Yuqing Xia and Jilong Xue and Jianyong Wang and Furu Wei},
      year={2023},
      eprint={2307.08621},
      archivePrefix={arXiv},
      primaryClass={cs.CL},
      url={https://arxiv.org/abs/2307.08621}, 
}

@misc{eval-harness,
  author       = {Gao, Leo and Tow, Jonathan and Abbasi, Baber and Biderman, Stella and Black, Sid and DiPofi, Anthony and Foster, Charles and Golding, Laurence and Hsu, Jeffrey and Le Noac'h, Alain and Li, Haonan and McDonell, Kyle and Muennighoff, Niklas and Ociepa, Chris and Phang, Jason and Reynolds, Laria and Schoelkopf, Hailey and Skowron, Aviya and Sutawika, Lintang and Tang, Eric and Thite, Anish and Wang, Ben and Wang, Kevin and Zou, Andy},
  title        = {The Language Model Evaluation Harness},
  month        = 07,
  year         = 2024,
  publisher    = {Zenodo},
  version      = {v0.4.3},
  doi          = {10.5281/zenodo.12608602},
  url          = {https://zenodo.org/records/12608602}
}

@misc{paperno2016lambadadatasetwordprediction,
      title={The LAMBADA dataset: Word prediction requiring a broad discourse context}, 
      author={Denis Paperno and Germán Kruszewski and Angeliki Lazaridou and Quan Ngoc Pham and Raffaella Bernardi and Sandro Pezzelle and Marco Baroni and Gemma Boleda and Raquel Fernández},
      year={2016},
      eprint={1606.06031},
      archivePrefix={arXiv},
      primaryClass={cs.CL},
      url={https://arxiv.org/abs/1606.06031}, 
}

@misc{zellers2019hellaswagmachinereallyfinish,
      title={HellaSwag: Can a Machine Really Finish Your Sentence?}, 
      author={Rowan Zellers and Ari Holtzman and Yonatan Bisk and Ali Farhadi and Yejin Choi},
      year={2019},
      eprint={1905.07830},
      archivePrefix={arXiv},
      primaryClass={cs.CL},
      url={https://arxiv.org/abs/1905.07830}, 
}

@misc{bisk2019piqareasoningphysicalcommonsense,
      title={PIQA: Reasoning about Physical Commonsense in Natural Language}, 
      author={Yonatan Bisk and Rowan Zellers and Ronan Le Bras and Jianfeng Gao and Yejin Choi},
      year={2019},
      eprint={1911.11641},
      archivePrefix={arXiv},
      primaryClass={cs.CL},
      url={https://arxiv.org/abs/1911.11641}, 
}

@misc{clark2018thinksolvedquestionanswering,
      title={Think you have Solved Question Answering? Try ARC, the AI2 Reasoning Challenge}, 
      author={Peter Clark and Isaac Cowhey and Oren Etzioni and Tushar Khot and Ashish Sabharwal and Carissa Schoenick and Oyvind Tafjord},
      year={2018},
      eprint={1803.05457},
      archivePrefix={arXiv},
      primaryClass={cs.AI},
      url={https://arxiv.org/abs/1803.05457}, 
}

@misc{sakaguchi2019winograndeadversarialwinogradschema,
      title={WinoGrande: An Adversarial Winograd Schema Challenge at Scale}, 
      author={Keisuke Sakaguchi and Ronan Le Bras and Chandra Bhagavatula and Yejin Choi},
      year={2019},
      eprint={1907.10641},
      archivePrefix={arXiv},
      primaryClass={cs.CL},
      url={https://arxiv.org/abs/1907.10641}, 
}

@misc{mihaylov2018suitarmorconductelectricity,
      title={Can a Suit of Armor Conduct Electricity? A New Dataset for Open Book Question Answering}, 
      author={Todor Mihaylov and Peter Clark and Tushar Khot and Ashish Sabharwal},
      year={2018},
      eprint={1809.02789},
      archivePrefix={arXiv},
      primaryClass={cs.CL},
      url={https://arxiv.org/abs/1809.02789}, 
}

@inproceedings{Rajpurkar2018SQuAD2,
  title={Know What You Don't Know: Unanswerable Questions for SQuAD},
  author={Pranav Rajpurkar and Jian Zhang and Percy Liang},
  booktitle={ACL 2018},
  year={2018}
}

@misc{joshi2017triviaqalargescaledistantly,
      title={TriviaQA: A Large Scale Distantly Supervised Challenge Dataset for Reading Comprehension}, 
      author={Mandar Joshi and Eunsol Choi and Daniel S. Weld and Luke Zettlemoyer},
      year={2017},
      eprint={1705.03551},
      archivePrefix={arXiv},
      primaryClass={cs.CL},
      url={https://arxiv.org/abs/1705.03551}, 
}

@article{kwiatkowski-etal-2019-natural,
    title = "Natural Questions: A Benchmark for Question Answering Research",
    author = "Kwiatkowski, Tom  and
      Palomaki, Jennimaria  and
      Redfield, Olivia  and
      Collins, Michael  and
      Parikh, Ankur  and
      Alberti, Chris  and
      Epstein, Danielle  and
      Polosukhin, Illia  and
      Devlin, Jacob  and
      Lee, Kenton  and
      Toutanova, Kristina  and
      Jones, Llion  and
      Kelcey, Matthew  and
      Chang, Ming-Wei  and
      Dai, Andrew M.  and
      Uszkoreit, Jakob  and
      Le, Quoc  and
      Petrov, Slav",
    editor = "Lee, Lillian  and
      Johnson, Mark  and
      Roark, Brian  and
      Nenkova, Ani",
    journal = "Transactions of the Association for Computational Linguistics",
    volume = "7",
    year = "2019",
    address = "Cambridge, MA",
    publisher = "MIT Press",
    url = "https://aclanthology.org/Q19-1026/",
    doi = "10.1162/tacl_a_00276",
    pages = "452--466",
}

@misc{dua2019dropreadingcomprehensionbenchmark,
      title={DROP: A Reading Comprehension Benchmark Requiring Discrete Reasoning Over Paragraphs}, 
      author={Dheeru Dua and Yizhong Wang and Pradeep Dasigi and Gabriel Stanovsky and Sameer Singh and Matt Gardner},
      year={2019},
      eprint={1903.00161},
      archivePrefix={arXiv},
      primaryClass={cs.CL},
      url={https://arxiv.org/abs/1903.00161}, 
}

@inproceedings{vaswani2017attention,
  author = {Vaswani, Ashish and Shazeer, Noam and Parmar, Niki and Uszkoreit, Jakob and Jones, Llion and Gomez, Aidan N and Kaiser, {\L}ukasz and Polosukhin, Illia},
  booktitle = {Advances in neural information processing systems},
  pages = {5998--6008},
  title = {Attention is all you need},
  url = {http://arxiv.org/abs/1706.03762},
  year = 2017
}

@misc{nvidia2025nemotronhfamilyaccurateefficient,
      title={Nemotron-H: A Family of Accurate and Efficient Hybrid Mamba-Transformer Models}, 
      author={NVIDIA and : and Aaron Blakeman and Aarti Basant and Abhinav Khattar and Adithya Renduchintala and Akhiad Bercovich and Aleksander Ficek and Alexis Bjorlin and Ali Taghibakhshi and Amala Sanjay Deshmukh and Ameya Sunil Mahabaleshwarkar and Andrew Tao and Anna Shors and Ashwath Aithal and Ashwin Poojary and Ayush Dattagupta and Balaram Buddharaju and Bobby Chen and Boris Ginsburg and Boxin Wang and Brandon Norick and Brian Butterfield and Bryan Catanzaro and Carlo del Mundo and Chengyu Dong and Christine Harvey and Christopher Parisien and Dan Su and Daniel Korzekwa and Danny Yin and Daria Gitman and David Mosallanezhad and Deepak Narayanan and Denys Fridman and Dima Rekesh and Ding Ma and Dmytro Pykhtar and Dong Ahn and Duncan Riach and Dusan Stosic and Eileen Long and Elad Segal and Ellie Evans and Eric Chung and Erick Galinkin and Evelina Bakhturina and Ewa Dobrowolska and Fei Jia and Fuxiao Liu and Gargi Prasad and Gerald Shen and Guilin Liu and Guo Chen and Haifeng Qian and Helen Ngo and Hongbin Liu and Hui Li and Igor Gitman and Ilia Karmanov and Ivan Moshkov and Izik Golan and Jan Kautz and Jane Polak Scowcroft and Jared Casper and Jarno Seppanen and Jason Lu and Jason Sewall and Jiaqi Zeng and Jiaxuan You and Jimmy Zhang and Jing Zhang and Jining Huang and Jinze Xue and Jocelyn Huang and Joey Conway and John Kamalu and Jon Barker and Jonathan Cohen and Joseph Jennings and Jupinder Parmar and Karan Sapra and Kari Briski and Kateryna Chumachenko and Katherine Luna and Keshav Santhanam and Kezhi Kong and Kirthi Sivamani and Krzysztof Pawelec and Kumar Anik and Kunlun Li and Lawrence McAfee and Leon Derczynski and Lindsey Pavao and Luis Vega and Lukas Voegtle and Maciej Bala and Maer Rodrigues de Melo and Makesh Narsimhan Sreedhar and Marcin Chochowski and Markus Kliegl and Marta Stepniewska-Dziubinska and Matthieu Le and Matvei Novikov and Mehrzad Samadi and Michael Andersch and Michael Evans and Miguel Martinez and Mike Chrzanowski and Mike Ranzinger and Mikolaj Blaz and Misha Smelyanskiy and Mohamed Fawzy and Mohammad Shoeybi and Mostofa Patwary and Nayeon Lee and Nima Tajbakhsh and Ning Xu and Oleg Rybakov and Oleksii Kuchaiev and Olivier Delalleau and Osvald Nitski and Parth Chadha and Pasha Shamis and Paulius Micikevicius and Pavlo Molchanov and Peter Dykas and Philipp Fischer and Pierre-Yves Aquilanti and Piotr Bialecki and Prasoon Varshney and Pritam Gundecha and Przemek Tredak and Rabeeh Karimi and Rahul Kandu and Ran El-Yaniv and Raviraj Joshi and Roger Waleffe and Ruoxi Zhang and Sabrina Kavanaugh and Sahil Jain and Samuel Kriman and Sangkug Lym and Sanjeev Satheesh and Saurav Muralidharan and Sean Narenthiran and Selvaraj Anandaraj and Seonmyeong Bak and Sergey Kashirsky and Seungju Han and Shantanu Acharya and Shaona Ghosh and Sharath Turuvekere Sreenivas and Sharon Clay and Shelby Thomas and Shrimai Prabhumoye and Shubham Pachori and Shubham Toshniwal and Shyamala Prayaga and Siddhartha Jain and Sirshak Das and Slawek Kierat and Somshubra Majumdar and Song Han and Soumye Singhal and Sriharsha Niverty and Stefania Alborghetti and Suseella Panguluri and Swetha Bhendigeri and Syeda Nahida Akter and Szymon Migacz and Tal Shiri and Terry Kong and Timo Roman and Tomer Ronen and Trisha Saar and Tugrul Konuk and Tuomas Rintamaki and Tyler Poon and Ushnish De and Vahid Noroozi and Varun Singh and Vijay Korthikanti and Vitaly Kurin and Wasi Uddin Ahmad and Wei Du and Wei Ping and Wenliang Dai and Wonmin Byeon and Xiaowei Ren and Yao Xu and Yejin Choi and Yian Zhang and Ying Lin and Yoshi Suhara and Zhiding Yu and Zhiqi Li and Zhiyu Li and Zhongbo Zhu and Zhuolin Yang and Zijia Chen},
      year={2025},
      eprint={2504.03624},
      archivePrefix={arXiv},
      primaryClass={cs.CL},
      url={https://arxiv.org/abs/2504.03624}, 
}

@misc{qwen3technicalreport,
      title={Qwen3 Technical Report}, 
      author={An Yang and Anfeng Li and Baosong Yang and Beichen Zhang and Binyuan Hui and Bo Zheng and Bowen Yu and Chang Gao and Chengen Huang and Chenxu Lv and Chujie Zheng and Dayiheng Liu and Fan Zhou and Fei Huang and Feng Hu and Hao Ge and Haoran Wei and Huan Lin and Jialong Tang and Jian Yang and Jianhong Tu and Jianwei Zhang and Jianxin Yang and Jiaxi Yang and Jing Zhou and Jingren Zhou and Junyang Lin and Kai Dang and Keqin Bao and Kexin Yang and Le Yu and Lianghao Deng and Mei Li and Mingfeng Xue and Mingze Li and Pei Zhang and Peng Wang and Qin Zhu and Rui Men and Ruize Gao and Shixuan Liu and Shuang Luo and Tianhao Li and Tianyi Tang and Wenbiao Yin and Xingzhang Ren and Xinyu Wang and Xinyu Zhang and Xuancheng Ren and Yang Fan and Yang Su and Yichang Zhang and Yinger Zhang and Yu Wan and Yuqiong Liu and Zekun Wang and Zeyu Cui and Zhenru Zhang and Zhipeng Zhou and Zihan Qiu},
      year={2025},
      eprint={2505.09388},
      archivePrefix={arXiv},
      primaryClass={cs.CL},
      url={https://arxiv.org/abs/2505.09388}, 
}

@misc{sun2025learninglearntesttime,
      title={Learning to (Learn at Test Time): RNNs with Expressive Hidden States}, 
      author={Yu Sun and Xinhao Li and Karan Dalal and Jiarui Xu and Arjun Vikram and Genghan Zhang and Yann Dubois and Xinlei Chen and Xiaolong Wang and Sanmi Koyejo and Tatsunori Hashimoto and Carlos Guestrin},
      year={2025},
      eprint={2407.04620},
      archivePrefix={arXiv},
      primaryClass={cs.LG},
      url={https://arxiv.org/abs/2407.04620}, 
}

@misc{wang2025testtimeregressionunifyingframework,
      title={Test-time regression: a unifying framework for designing sequence models with associative memory}, 
      author={Ke Alexander Wang and Jiaxin Shi and Emily B. Fox},
      year={2025},
      eprint={2501.12352},
      archivePrefix={arXiv},
      primaryClass={cs.LG},
      url={https://arxiv.org/abs/2501.12352}, 
}

@misc{kimiteam2025kimilinearexpressiveefficient,
      title={Kimi Linear: An Expressive, Efficient Attention Architecture}, 
      author={{Kimi Team} and Yu Zhang and Zongyu Lin and Xingcheng Yao and Jiaxi Hu and Fanqing Meng and Chengyin Liu and Xin Men and Songlin Yang and Zhiyuan Li and Wentao Li and Enzhe Lu and Weizhou Liu and Yanru Chen and Weixin Xu and Longhui Yu and Yejie Wang and Yu Fan and Longguang Zhong and Enming Yuan and Dehao Zhang and Yizhi Zhang and T. Y. Liu and Haiming Wang and Shengjun Fang and Weiran He and Shaowei Liu and Yiwei Li and Jianlin Su and Jiezhong Qiu and Bo Pang and Junjie Yan and Zhejun Jiang and Weixiao Huang and Bohong Yin and Jiacheng You and Chu Wei and Zhengtao Wang and Chao Hong and Yutian Chen and Guanduo Chen and Yucheng Wang and Huabin Zheng and Feng Wang and Yibo Liu and Mengnan Dong and Zheng Zhang and Siyuan Pan and Wenhao Wu and Yuhao Wu and Longyu Guan and Jiawen Tao and Guohong Fu and Xinran Xu and Yuzhi Wang and Guokun Lai and Yuxin Wu and Xinyu Zhou and Zhilin Yang and Yulun Du},
      year={2025},
      eprint={2510.26692},
      archivePrefix={arXiv},
      primaryClass={cs.CL},
      url={https://arxiv.org/abs/2510.26692}, 
}

@misc{tencenthunyuanteam2025hunyuanturbosadvancinglargelanguage,
      title={Hunyuan-TurboS: Advancing Large Language Models through Mamba-Transformer Synergy and Adaptive Chain-of-Thought}, 
      author={{Tencent Hunyuan Team} and Ao Liu and Botong Zhou and Can Xu and Chayse Zhou and ChenChen Zhang and Chengcheng Xu and Chenhao Wang and Decheng Wu and Dengpeng Wu and Dian Jiao and Dong Du and Dong Wang and Feng Zhang and Fengzong Lian and Guanghui Xu and Guanwei Zhang and Hai Wang and Haipeng Luo and Han Hu and Huilin Xu and Jiajia Wu and Jianchen Zhu and Jianfeng Yan and Jiaqi Zhu and Jihong Zhang and Jinbao Xue and Jun Xia and Junqiang Zheng and Kai Liu and Kai Zhang and Kai Zheng and Kejiao Li and Keyao Wang and Lan Jiang and Lixin Liu and Lulu Wu and Mengyuan Huang and Peijie Yu and Peiqi Wang and Qian Wang and Qianbiao Xiang and Qibin Liu and Qingfeng Sun and Richard Guo and Ruobing Xie and Saiyong Yang and Shaohua Chen and Shihui Hu and Shuai Li and Shuaipeng Li and Shuang Chen and Suncong Zheng and Tao Yang and Tian Zhang and Tinghao Yu and Weidong Han and Weijie Liu and Weijin Zhou and Weikang Wang and Wesleye Chen and Xiao Feng and Xiaoqin Ren and Xingwu Sun and Xiong Kuang and Xuemeng Huang and Xun Cao and Yanfeng Chen and Yang Du and Zhen Yang and Yangyu Tao and Yaping Deng and Yi Shen and Yigeng Hong and Yiqi Chen and Yiqing Huang and Yuchi Deng and Yue Mao and Yulong Wang and Yuyuan Zeng and Zenan Xu and Zhanhui Kang and Zhe Zhao and ZhenXiang Yan and Zheng Fang and Zhichao Hu and Zhongzhi Chen and Zhuoyu Li and Zongwei Li and Alex Yan and Ande Liang and Baitong Liu and Beiping Pan and Bin Xing and Binghong Wu and Bingxin Qu and Bolin Ni and Boyu Wu and Chen Li and Cheng Jiang and Cheng Zhang and Chengjun Liu and Chengxu Yang and Chengzhong Xu and Chiyu Wang and Chong Zha and Daisy Yi and Di Wang and Fanyang Lu and Fei Chen and Feifei Liu and Feng Zheng and Guanghua Yu and Guiyang Li and Guohua Wang and Haisheng Lin and Han Liu and Han Wang and Hao Fei and Hao Lu and Haoqing Jiang and Haoran Sun and Haotian Zhu and Huangjin Dai and Huankui Chen and Huawen Feng and Huihui Cai and Huxin Peng and Jackson Lv and Jiacheng Shi and Jiahao Bu and Jianbo Li and Jianglu Hu and Jiangtao Guan and Jianing Xu and Jianwei Cai and Jiarong Zhang and Jiawei Song and Jie Jiang and Jie Liu and Jieneng Yang and Jihong Zhang and Jin lv and Jing Zhao and Jinjian Li and Jinxing Liu and Jun Zhao and Juntao Guo and Kai Wang and Kan Wu and Lei Fu and Lei He and Lei Wang and Li Liu and Liang Dong and Liya Zhan and Long Cheng and Long Xu and Mao Zheng and Meng Liu and Mengkang Hu and Nanli Chen and Peirui Chen and Peng He and Pengju Pan and Pengzhi Wei and Qi Yang and Qi Yi and Roberts Wang and Rongpeng Chen and Rui Sun and Rui Yang and Ruibin Chen and Ruixu Zhou and Shaofeng Zhang and Sheng Zhang and Shihao Xu and Shuaishuai Chang and Shulin Liu and SiQi Wang and Songjia Feng and Songling Yuan and Tao Zhang and Tianjiao Lang and Tongkai Li and Wei Deng and Wei Li and Weichao Wang and Weigang Zhang and Weixuan Sun and Wen Ouyang and Wenxiang Jiao and Wenzhi Sun and Wenzhuo Jia and Xiang Zhang and Xiangyu He and Xianshun Ren and XiaoYing Zhu and Xiaolong Guo and Xiaoxue Li and Xiaoyu Ma and Xican Lu and Xinhua Feng and Xinting Huang and Xinyu Guan and Xirui Li and Xu Zhang and Xudong Gao and Xun Luo and Xuxiang Qi and Yangkun Chen and Yangyu Tao and Yanling Xiao and Yantao Mai and Yanze Chen and Yao Ding and Yeting Yang and YiFan Song and Yifan Yang and Yijiao Zhu and Yinhe Wu and Yixian Liu and Yong Yang and Yuanjun Cai and Yuanlin Tu and Yue Zhang and Yufei Huang and Yuhang Zhou and Yuhao Jiang and Yuhong Liu and Yuhui Hu and Yujin Lin and Yun Yang and Yunhao Wang and Yusong Zhang and Zekun Wu and Zelong Zhang and Zhan Yu and Zhaoliang Yang and Zhe Zhao and Zheng Li and Zhenyu Huang and Zhiguang Liu and Zhijiang Xu and Zhiqing Kui and Zhiyin Zeng and Zhiyuan Xiong and Zhuo Han and Zifan Wu and Zigang Geng and Zilong Zhao and Ziyan Tang and Ziyuan Zhu and Zonglei Zhu and Zhijiang Xu},
      year={2025},
      eprint={2505.15431},
      archivePrefix={arXiv},
      primaryClass={cs.CL},
      url={https://arxiv.org/abs/2505.15431}, 
}

@misc{bahdanau2016neuralmachinetranslationjointly,
      title={Neural Machine Translation by Jointly Learning to Align and Translate}, 
      author={Dzmitry Bahdanau and Kyunghyun Cho and Yoshua Bengio},
      year={2014},
      eprint={1409.0473},
      archivePrefix={arXiv},
      primaryClass={cs.CL},
      url={https://arxiv.org/abs/1409.0473}, 
}

@misc{hu2025combaimprovingbilinearrnns,
      title={Comba: Improving Bilinear RNNs with Closed-loop Control}, 
      author={Jiaxi Hu and Yongqi Pan and Jusen Du and Disen Lan and Xiaqiang Tang and Qingsong Wen and Yuxuan Liang and Weigao Sun},
      year={2025},
      eprint={2506.02475},
      archivePrefix={arXiv},
      primaryClass={cs.LG},
      url={https://arxiv.org/abs/2506.02475}, 
}

@misc{yang2024gatedlinearattentiontransformers,
      title={Gated Linear Attention Transformers with Hardware-Efficient Training}, 
      author={Songlin Yang and Bailin Wang and Yikang Shen and Rameswar Panda and Yoon Kim},
      year={2024},
      eprint={2312.06635},
      archivePrefix={arXiv},
      primaryClass={cs.LG},
      url={https://arxiv.org/abs/2312.06635}, 
}

@misc{tandon2025endtoendtesttimetraininglong,
      title={End-to-End Test-Time Training for Long Context}, 
      author={Arnuv Tandon and Karan Dalal and Xinhao Li and Daniel Koceja and Marcel Rød and Sam Buchanan and Xiaolong Wang and Jure Leskovec and Sanmi Koyejo and Tatsunori Hashimoto and Carlos Guestrin and Jed McCaleb and Yejin Choi and Yu Sun},
      year={2025},
      eprint={2512.23675},
      archivePrefix={arXiv},
      primaryClass={cs.LG},
      url={https://arxiv.org/abs/2512.23675}, 
}

@misc{zhang2025testtimetrainingright,
      title={Test-Time Training Done Right}, 
      author={Tianyuan Zhang and Sai Bi and Yicong Hong and Kai Zhang and Fujun Luan and Songlin Yang and Kalyan Sunkavalli and William T. Freeman and Hao Tan},
      year={2025},
      eprint={2505.23884},
      archivePrefix={arXiv},
      primaryClass={cs.LG},
      url={https://arxiv.org/abs/2505.23884}, 
}

@misc{waleffe2024empiricalstudymambabasedlanguage,
      title={An Empirical Study of Mamba-based Language Models}, 
      author={Roger Waleffe and Wonmin Byeon and Duncan Riach and Brandon Norick and Vijay Korthikanti and Tri Dao and Albert Gu and Ali Hatamizadeh and Sudhakar Singh and Deepak Narayanan and Garvit Kulshreshtha and Vartika Singh and Jared Casper and Jan Kautz and Mohammad Shoeybi and Bryan Catanzaro},
      year={2024},
      eprint={2406.07887},
      archivePrefix={arXiv},
      primaryClass={cs.LG},
      url={https://arxiv.org/abs/2406.07887}, 
}

@techreport{nvidia_h100_2022,
  author       = {NVIDIA},
  title        = {NVIDIA H100 Tensor Core GPU White Paper},
  year         = {2022},
  institution  = {NVIDIA},
  url          = {https://resources.nvidia.com/en-us-hopper-architecture/nvidia-h100-tensor-c}
}

@misc{yang2025ropenopeagainnew,
      title={Rope to Nope and Back Again: A New Hybrid Attention Strategy}, 
      author={Bowen Yang and Bharat Venkitesh and Dwarak Talupuru and Hangyu Lin and David Cairuz and Phil Blunsom and Acyr Locatelli},
      year={2025},
      eprint={2501.18795},
      archivePrefix={arXiv},
      primaryClass={cs.CL},
      url={https://arxiv.org/abs/2501.18795}, 
}

@misc{cabannes2025shortwindowattentionenables,
      title={Short window attention enables long-term memorization}, 
      author={Loïc Cabannes and Maximilian Beck and Gergely Szilvasy and Matthijs Douze and Maria Lomeli and Jade Copet and Pierre-Emmanuel Mazaré and Gabriel Synnaeve and Hervé Jégou},
      year={2025},
      eprint={2509.24552},
      archivePrefix={arXiv},
      primaryClass={cs.LG},
      url={https://arxiv.org/abs/2509.24552}, 
}

@online{openai_gpt53_codex_2026,
  author={OpenAI},
  title={Introducing GPT-5.3-Codex},
  year={2026},
  month=feb,
  day={5},
  url={https://openai.com/index/introducing-gpt-5-3-codex/},
  urldate= {2026-02-17}
}

@online{anthropic_claude_opus_46_2026,
  author={Anthropic},
  title={Introducing Claude Opus 4.6},
  year={2026},
  month=feb,
  day={5},
  url={https://www.anthropic.com/news/claude-opus-4-6},
  urldate={2026-02-17}
}

@misc{fu2023hungryhungryhipposlanguage,
      title={Hungry Hungry Hippos: Towards Language Modeling with State Space Models}, 
      author={Daniel Y. Fu and Tri Dao and Khaled K. Saab and Armin W. Thomas and Atri Rudra and Christopher Ré},
      year={2023},
      eprint={2212.14052},
      archivePrefix={arXiv},
      primaryClass={cs.LG},
      url={https://arxiv.org/abs/2212.14052}, 
}

@misc{ma2024megalodonefficientllmpretraining,
      title={Megalodon: Efficient LLM Pretraining and Inference with Unlimited Context Length}, 
      author={Xuezhe Ma and Xiaomeng Yang and Wenhan Xiong and Beidi Chen and Lili Yu and Hao Zhang and Jonathan May and Luke Zettlemoyer and Omer Levy and Chunting Zhou},
      year={2024},
      eprint={2404.08801},
      archivePrefix={arXiv},
      primaryClass={cs.LG},
      url={https://arxiv.org/abs/2404.08801}, 
}

\newpage

\appendix

\section{Exponential-Trapezoidal Discretization}
\begin{restatable}[Variation of Constants~\citep{tenenbaum1985ordinary}]{proposition}{PropVarConst}\label{prop:var-const}

Consider the linear SSM
\begin{equation*}
    \dot{\vh}(t) = A(t)\,\vh(t) + \bB(t)\,x(t),
\end{equation*}
where $\vh(t)\in\mathbb{R}^N$, $A(t)\in\mathbb{R}$ is a scalar decay, and $\bB(t)x(t)\in\mathbb{R}^N$.  
For $\Delta_t$ discretized time grid $\tau_t = \tau_{t-1} + \Delta_t$, the hidden state satisfies \eqref{eq:exact-step}, which can then be approximated to \eqref{eq:approx-step} with $O(\Delta_t^2)$ error.
The approximation of the remaining integral on the state-input can have varying error bounds depending on the method used: an example can be found in \Cref{app:trap-error-proof}.

\begin{align}
\label{eq:exact-step}
\vh(\tau_t)
&= \exp\!\left(\int_{\tau_{t-1}}^{\tau_t} A(s)\,ds\right)\vh(\tau_{t-1})
 + \int_{\tau_{t-1}}^{\tau_t}
   \exp\!\left(\int_{\tau}^{\tau_t} A(s)\,ds\right)\mB(\tau)x(\tau)\, d\tau, \\
\label{eq:approx-step}
\vh_t
&\approx e^{\Delta_t A_t}\,\vh_{t-1}
 + \int_{\tau_{t-1}}^{\tau_t} e^{(\tau_t-\tau)A_t}\,\bB(\tau)x(\tau)\,d\tau .
\end{align}

\end{restatable}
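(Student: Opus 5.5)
Since $A(t)$ is a scalar, the exact step \eqref{eq:exact-step} follows from an integrating factor, with no matrix time-ordering issues. I would set $\Phi(t,s) \coloneqq \exp\bigl(\int_s^t A(u)\,du\bigr)$, which satisfies $\partial_t \Phi(t,s) = A(t)\Phi(t,s)$, $\partial_s\Phi(t,s) = -A(s)\Phi(t,s)$, and $\Phi(t,t)=1$. Differentiating $\Phi(\tau_t,\tau)\vh(\tau)$ in $\tau$ gives
\[
\frac{d}{d\tau}\bigl[\Phi(\tau_t,\tau)\vh(\tau)\bigr]
= \Phi(\tau_t,\tau)\bigl(\dot{\vh}(\tau) - A(\tau)\vh(\tau)\bigr)
= \Phi(\tau_t,\tau)\bB(\tau)x(\tau).
\]
Integrating over $[\tau_{t-1},\tau_t]$ and using $\Phi(\tau_t,\tau_t)=1$ yields \eqref{eq:exact-step} exactly. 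This uses only continuity of $A$, $\bB$, $x$ on the interval.

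For \eqref{eq:approx-step}, I would replace $A(s)$ by the right-endpoint value $A_t \coloneqq A(\tau_t)$ in both exponentials and bound the error. Assume $A$ is $C^1$ (or Lipschitz) and that $\vh$, $\bB x$ are bounded on the interval. Then $A(s) = A_t + O(\Delta_t)$ uniformly for $s\in[\tau_{t-1},\tau_t]$, so
\[
\int_{\tau}^{\tau_t} A(s)\,ds = (\tau_t-\tau)A_t + O\bigl((\tau_t-\tau)\Delta_t\bigr) = (\tau_t-\tau)A_t + O(\Delta_t^2).
\]
The exponential is locally Lipschitz on bounded sets, and under stability ($A\le 0$, or simply bounded) both exponents stay bounded. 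Hence $\Phi(\tau_t,\tau) = e^{(\tau_t-\tau)A_t}\bigl(1+O(\Delta_t^2)\bigr)$, and in particular $\Phi(\tau_t,\tau_{t-1}) = e^{\Delta_t A_t} + O(\Delta_t^2)$.

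The transition term therefore has error $O(\Delta_t^2)\,\|\vh(\tau_{t-1})\|$. For the input integral, the error is the integral over an interval of length $\Delta_t$ of an $O(\Delta_t^2)$ perturbation times bounded $\bB x$. This gives $O(\Delta_t^3)$, which is subsumed by $O(\Delta_t^2)$.

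The main obstacle is only being careful about what "error" means. The statement is a local (one-step) truncation error, where $\vh_{t-1}$ is identified with the exact $\vh(\tau_{t-1})$, so I will state the claim in that form. I will make the regularity assumptions explicit: $A$ is Lipschitz, and $\vh$, $\bB x$ are bounded. I will then note that the remaining integral is deliberately left unapproximated, deferring its treatment (Euler, trapezoidal) to \Cref{app:trap-error-proof}.
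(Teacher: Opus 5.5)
Your proposal is correct and follows essentially the same route as the paper. Both arguments use an integrating factor to obtain the exact step: you normalize it at $\tau_t$ as $\Phi(\tau_t,\tau)$, whereas the paper uses $z(t)=e^{-\int_0^t A(s)\,ds}$ and then rearranges via $z(\tau_t)^{-1}z(\tau)$. Both then freeze $A(s)$ at the right endpoint $A_t$. The one substantive difference is that you actually justify the $O(\Delta_t^2)$ local error under explicit Lipschitz and boundedness assumptions, including the observation that the input-integral contribution is only $O(\Delta_t^3)$, whereas the paper simply asserts this order.
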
 \begin{proof}[Proof.]
Starting from the initial linear SSM, an integrating factor $z(t) \coloneqq e^{\int^t_0 -A(s)ds}$ is applied to facilitate integration.
\[
    z(t) \dot{\vh}(t) = z(t)A(t)\vh(t) +z(t)\mB(t)x(t)
\]
Considering $z'(t) = -A(t)z(t)$; through rearranging the terms and integrating between the time grid $[\tau_{t-1},\tau_t]$
\[
    \int_{\tau_{t-1}}^{\tau_t} \frac{d}{d\tau}\left( z(\tau) \vh(\tau) \right) d\tau = \int_{\tau_{t-1}}^{\tau_t} z(\tau) \mB(\tau)x(\tau) d\tau
\]
results in
\[
    z(\tau_t)\vh(\tau_t) - z(\tau_{t-1})\vh(\tau_{t-1}) = \int_{\tau_{t-1}}^{\tau_t} z(\tau)\mB(\tau)x(\tau) d\tau,
\]
which can be arranged in a more familiar form
\[
    \vh(\tau_t) = z(\tau_t)^{-1}z(\tau_{t-1})\vh(\tau_{t-1}) + \int_{\tau_{t-1}}^{\tau_t} z(\tau_t)^{-1}z(\tau)\mB(\tau)x(\tau) d\tau.
\]
Substituting the integrating factor $z(\tau)$ corresponds to
\[
    \vh(\tau_t) = \exp\left(\int_{\tau_{t-1}}^{\tau_t}A(s)ds\right)\vh(\tau_{t-1}) + \int_{\tau_{t-1}}^{\tau_t} \exp\left(\int_{\tau}^{\tau_t}A(s)ds\right)\mB(\tau)x(\tau) d\tau.
\]
We approximate the state-transition integral with a right-hand assumption where $\forall s \in [\tau_{t-1},\tau_t], A(s) \coloneqq A(\tau_t)$ which we refer to as $A_t$,
\[
    \vh_t \approx \underbrace{\exp\left(\Delta_t A_t\right)\vh_{t-1}}_\text{right-hand approximation} + \underbrace{\int_{\tau_{t-1}}^{\tau_t} \exp\left((\tau_t -\tau)A_t\right)\mB(\tau)x(\tau) d\tau}_\text{to be approximated}.
\]
incurring a local truncation error of order $O(\Delta_t^2)$.
Thus, we have approximated the exponential dynamics of the adjusted underlying ODE and leave the state-input integral to be approximated with any host of methods.
\end{proof}

\subsection{Exponential-Trapezoidal Discretization's Mask Matrix}
\label{app:proof-tensor}

\begin{proof}
    When viewing the tensor contraction form, let us call $C = (T,N), B=(S,N), L=(T,S), X=(S,P)$ based on the Mamba-2 paper. With this decomposition of our mask, we can view $L = \text{contract}(TZ,ZS\rightarrow TS)(L_1,L_2)$.
    
    The original contraction can be seen as
    $$\text{contract}(TN,SN,TS,SP\rightarrow TP)(C,B,L,X)$$
    We can now view it as
    $$\text{contract}(TN,SN,TJ,JS,SP\rightarrow TP)(C,B,L_1,L_2,X)$$
    This can be broken into the following:
    \begin{align*}
        Z &= \text{contract}(SN,SP\rightarrow SNP)(B,X)\\
        Z' &= \text{contract}(JS,SNP\rightarrow JNP)(L_2,Z)\\
        H &= \text{contract}(TJ,JNP\rightarrow TNP)(L_1,Z') \\
        Y &= \text{contract}(TN,TNP\rightarrow TP)(C,H)
    \end{align*}
    We can view this step: $\text{contract}(ZS,SNP\rightarrow ZNP)(L_2,Z)$ as a convolution of size two applied on the state-input ($B,X$ outer product) prior to the decay with the traditional SSD $L=L_1$ matrix.
\end{proof}
    
\subsection{Exponential-Trapezoidal Discretization Error Rate}
\label{app:trap-error-proof}
\paragraph{Standard assumptions.}
We assume that: $A(t),\bB(t),x(t)$ are bounded and $C^3$ on each timestep, so that $g(\tau)$ has three bounded derivatives; the map $\vh \mapsto A(t)\vh + \bB(t)x(t)$ is Lipschitz in $\vh$ which is true for linear systems; $\lambda_t$ lies in a bounded interval so that the update is zero-stable.

\begin{proof}
Let $\vg(\tau) \coloneqq e^{(t_k-\tau) A_k}\,\bB(\tau)x(\tau)$ denote the 
integrand in the second term of Proposition~\ref{prop:var-const}.  
Since $A(t),\bB(t),x(t)$ are $C^3$ on $[t_{k-1},t_k]$, the function $g$ has three
bounded derivatives.  
A second-order Taylor expansion of $g$ around $t_{k-1}$ gives us,
\begin{align*}
\int_{t_{k-1}}^{t_k} g(\tau)\,d\tau
=
\Delta_t\, g(t_{k-1})
+ \frac{\Delta_t^2}{2}\, g'(t_{k-1})
+ \frac{\Delta_t^3}{6}\, g''(t_{k-1})
+ O(\Delta_t^4).
\end{align*}

Recall that the trapezoidal approximation to this integral is given by,
\begin{align*}
    Q_\lambda
    =
    \Delta_t \Big[(1-\lambda_t)\, g(t_{k-1})
    + \lambda_t\, g(t_k)\Big].
\end{align*}

Expanding $g(t_k)$ using Taylor expansion:
$
g(t_k)
= g(t_{k-1}) + \Delta_t g'(t_{k-1})
+ \frac{\Delta_t^2}{2} g''(t_{k-1}) + O(\Delta_t^3)
$.
Substituting this into $Q_\lambda$,
\begin{align*}
Q_\lambda
&= \Delta_t\Big[(1-\lambda_t) g(t_{k-1})
    + \lambda_t g(t_k)\Big] \\
&= \Delta_t g(t_{k-1})
  + \lambda_t \Delta_t^2 g'(t_{k-1})
  + \lambda_t \frac{\Delta_t^3}{2} g''(t_{k-1})
  + O(\Delta_t^4).
\end{align*}
Hence, the error is given by:
\begin{align*}
\int_{t_{k-1}}^{t_k} g(\tau)\,d\tau - Q_\lambda
&= \Big(\tfrac{1}{2} - \lambda_t\Big)\Delta_t^2 g'(t_{k-1})
 + \Big(\tfrac{1}{6} - \tfrac{\lambda_t}{2}\Big)\Delta_t^3 g''(t_{k-1})
 + O(\Delta_t^4).
\end{align*}
Under the assumption that $\lambda_t = \tfrac{1}{2} + c_t \Delta_t$, where $c_t = O(1)$, then
$\tfrac{1}{2} - \lambda_t = -c_t \Delta_t = O(\Delta_t)$ and thus the
$\Delta_t^2$ term is $O(\Delta_t^3)$. Therefore,
\begin{align*}
    \int_{t_{k-1}}^{t_k} g(\tau)\,d\tau - Q_\lambda = O(\Delta_t^3),
\end{align*}
which yields an $O(\Delta_t^3)$ local truncation error. 
\end{proof}

\subsection{Exponential-Trapezoidal Parameterization}
\label{app:trap-param}
\begin{table}[h]
    \centering
    \small
    \caption{\textbf{Ablations on $\lambda_t$ parameterization in the exponential-trapezoidal update.}} 
    \begin{tabular}{l l c}
        \toprule
        \textbf{Parameterization} & \textbf{Form of $\lambda_t$} & \textbf{ppl $\downarrow$} \\
        \midrule
        \textbf{Default} & $\sigma(u_t)$ & \textbf{15.72} \\[5pt]
        Fixed $1/2$          & $\tfrac{1}{2}$ & 15.76 \\[5pt]
        No trapezoidal (Euler)      & $1$ & 15.81 \\
        \bottomrule
    \end{tabular}
    \label{tab:lambda_param_abl}
\end{table}

\textbf{Setting:} All runs use the Mamba-3 (SISO) 440M model trained at Chinchilla scale, with the other architectural and optimization hyperparameters being the same as in Table \ref{tab:downstream_evaluation}.

The default model uses a data-dependent gate 
$\lambda_t = \sigma(u_t)$, where $u_t$ is a learned projection of the current input token. 
In \cref{tab:lambda_param_abl}, we try different parameterizations for
$\lambda_t$ and find that the default parameterization empirically performs the best.
Hence, we choose the simpler default 
parameterization that does \emph{not} enforce \(\lambda_t = \tfrac{1}{2}+ O(\Delta_t)\).

\section{Complex SSM Proofs}
\subsection{Proof of Proposition \ref{prop:complex-to-real-ssm}}
\label{app:proof-complex-to-real-ssm}
\PropComplexRealSSM*
\begin{proof}[Proof.]
We first present the derivation for $N=2$; the block-diagonal structure for general even $N$ follows by grouping pairs of coordinates.

Let $h_t + i\hat{h}_t$ denote the complexified hidden state, with parameters 
$A(t)+i\theta(t)$ and $B(t)+i\hat{B}(t)$ for the transition and input, respectively.  
By the variation of constants formula (Proposition~\ref{prop:var-const}), applying zero-order hold and Euler's rule over a step $[t_{k-1},t_k]$ gives
\[
h_k + i\hat{h}_k 
= e^{\Delta_t (A_t + i\theta_t)}(h_{k-1}+ i\hat{h}_{k-1})
+ \Delta_t (B_t + i\hat{B}_t)x_t.
\]

Expanding the exponential,
\[
e^{\Delta_t(A_t+i\theta_t)} 
= e^{\Delta_t A_t}\Big(\cos(\Delta_t\theta_t)+i\sin(\Delta_t\theta_t)\Big),
\]
so in real coordinates $\vh_t=\begin{bmatrix}h_t \\ \hat{h}_t\end{bmatrix}\in\mathbb{R}^2$ the recurrence becomes
\[
\vh_t
= e^{\Delta_t A_t}
\underbrace{\begin{bmatrix}
    \cos(\Delta_t\theta_t) & -\sin(\Delta_t\theta_t) \\
    \sin(\Delta_t\theta_t) & \cos(\Delta_t\theta_t)
\end{bmatrix}}_{R(\Delta_t\theta_t)}
\vh_{t-1}
+ \Delta_t
\begin{bmatrix}
B_t \\ \hat{B}_t
\end{bmatrix}x_t.
\]

Stacking across $N/2$ such pairs yields the block-diagonal transition
\[
\vh_t
= e^{\Delta_t A_t}\,\mathrm{Block}\big(\{R(\Delta_t\theta_t[i])\}_{i=1}^{N/2}\big)\vh_{t-1}
+ \Delta_t\begin{bmatrix}\bB_t \\ \hat{\bB}_t\end{bmatrix}x_t.
\]

For the output, 
\[
y_t = \mathrm{Re}\Big((\bC_t+i\hat{\bC}_t)^\top(h_t+i\hat{h}_t)\Big)
= \begin{bmatrix}\bC_t \\ -\hat{\bC}_t\end{bmatrix}^\top \vh_t,
\]
which defines the real projection $\bC_t\in\mathbb{R}^N$ in the proposition.
This proves the equivalence between complex SSM and the real block-diagonal system with rotations.
\end{proof}

\subsection{Proof of Proposition \ref{prop:rope-trick}}
\label{app:proof-rope-trick}
\PropRoPETrick*
\begin{proof}[Proof.]
Consider the SSM
\begin{equation}
    \vh_t \;=\; e^{\Delta_t A_t}\,\bR_t\,\vh_{t-1} \;+\; \Delta_t\bB_t x_t, 
    \qquad 
    y_t \;=\; \bC_t^\top \vh_t,
\end{equation}
where (as in Proposition~\ref{prop:rope-trick}) \(A_t\in\mathbb{R}\) is a scalar (so that \(e^{\Delta_t A_t}\) is a scalar and commutes with rotations), and \(\bR_t\) is block-diagonal orthogonal/unitary, hence \(\bR_t^{-1}=\bR_t^\top\) and the matrices $\bR_i, \bR_j$ commute, i.e. $\bR_i\bR_j=\bR_j\bR_i$.

Unrolling the recurrence with the convention that an empty product is the identity,
\begin{equation}
    \vh_t \;=\; \sum_{i=0}^{t} 
    \bigg(\prod_{s=i+1}^{t} e^{\Delta_s A_s}\bR_s\bigg)\,\Delta_i\bB_i x_i.
\end{equation}
Thus
\begin{align}
    y_t 
    \;=\; \bC_t^\top \vh_t
    \;=\; \sum_{i=0}^{t} \bC_t^\top 
    \bigg(\prod_{s=i+1}^{t} e^{\Delta_s A_s}\bR_s\bigg)\Delta_i\bB_i x_i.
\end{align}
Using its unitary property,
\begin{align*}
    \prod_{s=i+1}^{t}\bR_s 
    = \left(\prod_{s=0}^{t}\bR_s\right)\left(\prod_{s=0}^{i}\bR_s\right)^{-1}
    = \left(\prod_{s=0}^{t}\bR_s\right)\left(\prod_{s=0}^{i}\bR_s^\top\right).
\end{align*}
Since \(e^{\Delta_s A_s}\) are scalars, they commute with rotations; hence
\begin{align}
    y_t 
    &= \sum_{i=0}^{t} 
       \bC_t^\top \left(\prod_{s=0}^{t}\bR_s\right)
       \left(\prod_{s=i+1}^{t} e^{\Delta_s A_s}\right)
       \left(\prod_{s=0}^{i}\bR_s^\top\right) \Delta_i \bB_i x_i \\
    &= \left[\left(\prod_{s=0}^{t}\bR_s^\top\right)\bC_t\right]^\top 
       \sum_{i=0}^{t} 
       \left(\prod_{s=i+1}^{t} e^{\Delta_s A_s}\right)
       \left(\prod_{s=0}^{i}\bR_s^\top\right)\Delta_i \bB_i x_i.
\end{align}
Define the rotated parameters
\(
\overline{\bC}_t := \left(\prod_{s=0}^{t}\bR_s^\top\right)\bC_t
\)
and
\(
\overline{\bB}_i := \left(\prod_{s=0}^{i}\bR_s^\top\right)\bB_i.
\)
Then,
\begin{equation}
    y_t \;=\; \overline{\bC}_t^\top 
    \sum_{i=0}^{t} \left(\prod_{s=i+1}^{t} e^{\Delta_s A_s}\right)\Delta_i \overline{\bB}_i x_i.
\end{equation}
Equivalently, introducing the rotated state \(\tilde{\vh}_t := \left(\prod_{s=0}^{t}\bR_s^\top\right)\vh_t\),
\begin{equation}
    \tilde{\vh}_t \;=\; e^{\Delta_t A_t}\,\tilde{\vh}_{t-1} \;+\; \Delta_t \overline{\bB}_t x_t,
    \qquad
    y_t \;=\; \overline{\bC}_t^\top \tilde{\vh}_t,
\end{equation}
\end{proof}

\subsection{Proof of Proposition \ref{prop:rope-trick-trap}}
\label{app:proof-rope-trick-trap}
\PropRoPETrickTrap*
\begin{proof}[Proof.]
We begin from the complex SSM (as in Prop.~\ref{prop:complex-to-real-ssm})
\begin{align*}
    \dot{\vh}(t)
    &= \mathrm{Diag}\,\left(A(t)+i\boldsymbol{\theta}(t)\right)\,\vh(t)
      \;+\;\left(\bB(t)+i\hat{\bB}(t)\right)\,x(t),\\
    y(t)
    &= \mathrm{Re}\,\left((\bC(t)+i\hat{\bC}(t))^\top \vh(t)\right),
\end{align*}

where $A(t)\in\mathbb{R}$ is a scalar and $\boldsymbol{\theta}(t),\bB(t),\hat{\bB}(t),\bC(t),\hat{\bC}(t)\in\mathbb{R}^{N/2}$.

\medskip
Recall from Prop.~\ref{prop:var-const},
\begin{align*}
\vh_t
&\approx e^{\Delta_t(A_t+i\boldsymbol{\theta}_t)}\,\vh_{t-1}
\;+\; \int_{\tau_{t-1}}^{\tau_t}
      e^{(\tau_t-\tau)(A_t+i\boldsymbol{\theta}_t)}
      \left(\bB(\tau)+i\hat{\bB}(\tau)\right)\,x(\tau)\,d\tau.
\end{align*}
Applying Prop.~\ref{prop:trap} to the above integral, we get 
\begin{align}
\label{eq:trap-complex}
\vh_t
&= e^{\Delta_t(A_t+i\boldsymbol{\theta}_t)}\,\vh_{t-1}
  \;+\; \beta_t\,e^{i\Delta_t\boldsymbol{\theta}_t}\left(\bB_{t-1}+i\hat{\bB}_{t-1}\right)x_{t-1}
  \;+\; \gamma_t\left(\bB_t+i\hat{\bB}_t\right)x_t,
\end{align}
where
\[
\alpha_t:=e^{\Delta_t A_t},\qquad
\beta_t:=(1-\lambda_t)\Delta_t e^{\Delta_t A_t},\qquad
\gamma_t:=\lambda_t\Delta_t,
\]

\medskip
Since $e^{\Delta_t(A_t+i\boldsymbol{\theta}_t)}=\alpha_t\,e^{i\Delta_t\boldsymbol{\theta}_t}$ and as shown in Prop.~\ref{prop:complex-to-real-ssm}, multiplication by $e^{i\Delta_t\boldsymbol{\theta}_t}$ is a block-diagonal rotation in real coordinates, we get the real $N$-dimensional recurrence
\begin{align}
\label{eq:trap-real-rot}
\vh_t
&= \alpha_t\,\bR_t\,\vh_{t-1}
  \;+\; \beta_t\,\bR_t\,\bB_{t-1}\,x_{t-1}
  \;+\; \gamma_t\,\bB_t\,x_t,\\[0.25em]
y_t
&= \bC_t^\top \vh_t,\nonumber
\end{align}
where $\bR_t\coloneqq\mathrm{Block}\left(\{R(\Delta_t\boldsymbol{\theta}_t[i])\}_{i=1}^{N/2}\right)$ where
\(
R(\theta)\coloneqq\begin{bmatrix}\cos\theta&-\sin\theta\\ \sin\theta&\cos\theta\end{bmatrix},
\)
and projections \\
\(
\bB_t\coloneqq\begin{bmatrix}\bB_t\\ \hat{\bB}_t\end{bmatrix},\;
\bC_t\coloneqq\begin{bmatrix}\bC_t\\ -\hat{\bC}_t\end{bmatrix}.
\)
Note that $\bR_t$ is orthogonal, so $\bR_t^{-1}=\bR_t^\top$, and that $\bR_i, \bR_j$ commute, i.e., $\bR_i\bR_j=\bR_j\bR_i$.

\medskip
We define the following.
\begin{align*}
\tilde{\vh}_t
&:= \left(\prod_{s=0}^{t}\bR_s^\top\right)\vh_t,\qquad
\overline{\bB}_{t} := \left(\prod_{s=0}^{t}\bR_s^\top\right)\bB_{t},\qquad
\overline{\bC}_{t} := \left(\prod_{s=0}^{t}\bR_s^\top\right)\bC_{t}.
\end{align*}
Left-multiplying \eqref{eq:trap-real-rot} by $\prod_{s=0}^{t}\bR_s^\top$ and using $\bR_t^\top\bR_t=I$,
\begin{align*}
\tilde{\vh}_t
&= \alpha_t\,\tilde{\vh}_{t-1}
  \;+\; \beta_t\,\overline{\bB}_{t-1}\,x_{t-1}
  \;+\; \gamma_t\,\overline{\bB}_{t}\,x_t,\\
y_t
&= \overline{\bC}_t^\top\,\tilde{\vh}_t.
\end{align*}
This is a vanilla scalar-transition SSM with data-dependent rotary embeddings absorbed into $\bB,\bC$ via cumulative products of $\bR_s^\top$.
\end{proof}

\section{MIMO for Mamba-3}
\label{app:mimo-for-mamba}

\paragraph{Mamba with MIMO.}
With a given batch, head, and sequence position $t$, consider the input $\bU_t \in \mathbb{R}^{D}$. Also denote $P, R\in\mathbb{N}$ as the head dimension and MIMO rank, respectively. We first obtain SSM parameters via a set of projections defined in terms of tensor contraction notation as follows:

\begin{align*}
    \bB_t &= \text{contract}(DNR,D\rightarrow NR)(\bW_B,\bU_t)
    && \bC_t = \text{contract}(DNR,D\rightarrow NR)(\bW_C,\bU_t),\\
    \bX'_t &= \text{contract}(PD,D\rightarrow P)(\bW_{X'},\bU_t)
    && \bX_t = \text{contract}(PR,P\rightarrow PR)(\bW_X,\bX'_t),\\
\end{align*}
where $\bW_B, \bW_C, \bW_{X'}, \bW_X$ are model parameters. Additionally, we obtain the residual gate term $\bZ_t$ in the same manner as $\bX_t$ with weights $\bW_{Z'}$ and $\bW_Z$.
This parameterization is used to prevent the parameter count from increasing by a factor of $R$.

The state update and the SSM output are then computed via the following MIMO SSM:
\[
\bH_t \;=\; a_t\,\bH_{t-1} \;+\; \bB_t \bX_t^\top \in \mathbb{R}^{N\times P},
\qquad
\bY_t \;=\; \bH_t ^\top \bC_{t} \in \mathbb{R}^{P\times R} .
\]
Intermediate output $\bY'_t$ is obtained by the residual function $\phi$, $\bY'_t \gets \phi(\bY_t, \bZ_t)$, where $\phi(\bY_t, \bZ_t):=\bY_t \odot \text{SiLU}(\bZ_t)$ in our case. 
Finally, the layer output $\bO_t \in \mathbb{R}^D$ is computed via the following down projections:
\begin{align*}
    \bO'_t &= \text{contract}(PR,PR\rightarrow P)(\bW_{O'},\bY'_t)
    && \bO_t = \text{contract}(PD,P\rightarrow D)(\bW_{O},\bO'_t).
\end{align*}

This formulation enhances the existing Mamba-3 architecture by providing a lightweight parameterization that transforms the set of independent SISO SSMs within each head into a set of MIMO SSMs.

\paragraph{MIMO Parameter Matching.}
The MIMO variant of Mamba-3 incurs additional parameters compared to its SISO counterpart. We therefore reduce the hidden dimension of the MLP layers to parameter match the SISO variants as follows:

\begin{table}[h]
    \centering
    \begin{tabular}{lcccc}
        \toprule
        \textbf{Model}& $180\text{M}$ & $440\text{M}$ & $880\text{M}$ & $1.5\text{B}$ \\
        \midrule
        SISO MLP dim & 1,500 & 2,048 & 3,072 & 4,096 \\
        MIMO MLP dim ($R=4$) & 1,264 & 1,792 & 2,800 & 3,824 \\
        \bottomrule
    \end{tabular}
\end{table}

\section{Experimental Details}
\label{app:exp-details}

\textbf{Language Modeling.}
Our pretraining procedures follow those of \citet{dao2024transformersssmsgeneralizedmodels}'s section D.2. All models at each scale follow the same procedure and were trained with bfloat16. The Mamba family of models were trained using the standard expand factor of $2$ and a $d_\text{state}$ of 128 and head dimension of 64. The Transformer baselines follow~\citet{dao2024transformersssmsgeneralizedmodels}, and the GDN baselines follow~\citep{yang2025gateddeltanetworksimproving} where $q,k_\text{dim}=128,v_\text{dim}=256$. We utilize the Llama-3.1 tokenizer~\citep{grattafiori2024llama3herdmodels} for all models.

We utilize LM Evaluation Harness~\citep{eval-harness} to test the zero-shot language modeling capabilities of our pretrained model on LAMBADA (OpenAI version)~\citep{paperno2016lambadadatasetwordprediction}, HellaSwag~\citep{zellers2019hellaswagmachinereallyfinish}, PIQA~\citep{bisk2019piqareasoningphysicalcommonsense}, Arc-Easy/Arc-Challenge~\citep{clark2018thinksolvedquestionanswering}, WinoGrande~\citep{sakaguchi2019winograndeadversarialwinogradschema}, and OpenBookQA~\citep{mihaylov2018suitarmorconductelectricity}.

\textbf{Real-World and Synthetic Retrieval.}
For our real-world retrieval tasks, we evaluate on the common suite consisting of SWDE~\citep{arora2025simplelinearattentionlanguage}, SQuAD~\citep{Rajpurkar2018SQuAD2}, FDA~\citep{arora2025simplelinearattentionlanguage}, TriviaQA~\citep{joshi2017triviaqalargescaledistantly}, NQ~\citep{kwiatkowski-etal-2019-natural}, and DROP~\citep{dua2019dropreadingcomprehensionbenchmark}. We utilize the cloze-formatted version of the aforementioned tasks provided by ~\citet{arora2025simplelinearattentionlanguage,arora2024justreadtwiceclosing}, as the original datasets are in a question-answering format, making it challenging for solely pretrained models. All tasks were truncated to match the training context length. The synthetic NIAH tasks~\citep{hsieh2024rulerwhatsrealcontext} were also run with LM Evaluation Harness. 

\textbf{State-Tracking Synthetics.}
Training follows a sequence length curriculum that sets the minimum length to $3$ and progresses the maximum length from $40$ to $160$.
Final models are evaluated at $256$ length.
Each curriculum runs for $10^{4}$ steps with batch size $256$. 
We use one-layer models for Parity and three-layer models for Modular-arithmetic tasks. 
The state size is chosen to be $64$,
and we sweep $d_{\text{model}}\in\{32,64\}$ and $8$ learning rates logarithmically spaced between $10^{-4}$ and $10^{-2}$, reporting the best validation accuracy.

\section{Additional Experimental Results}
\label{app:additional-results}

\begin{figure}[H]
    \centering
    \includegraphics[width=0.75\linewidth]{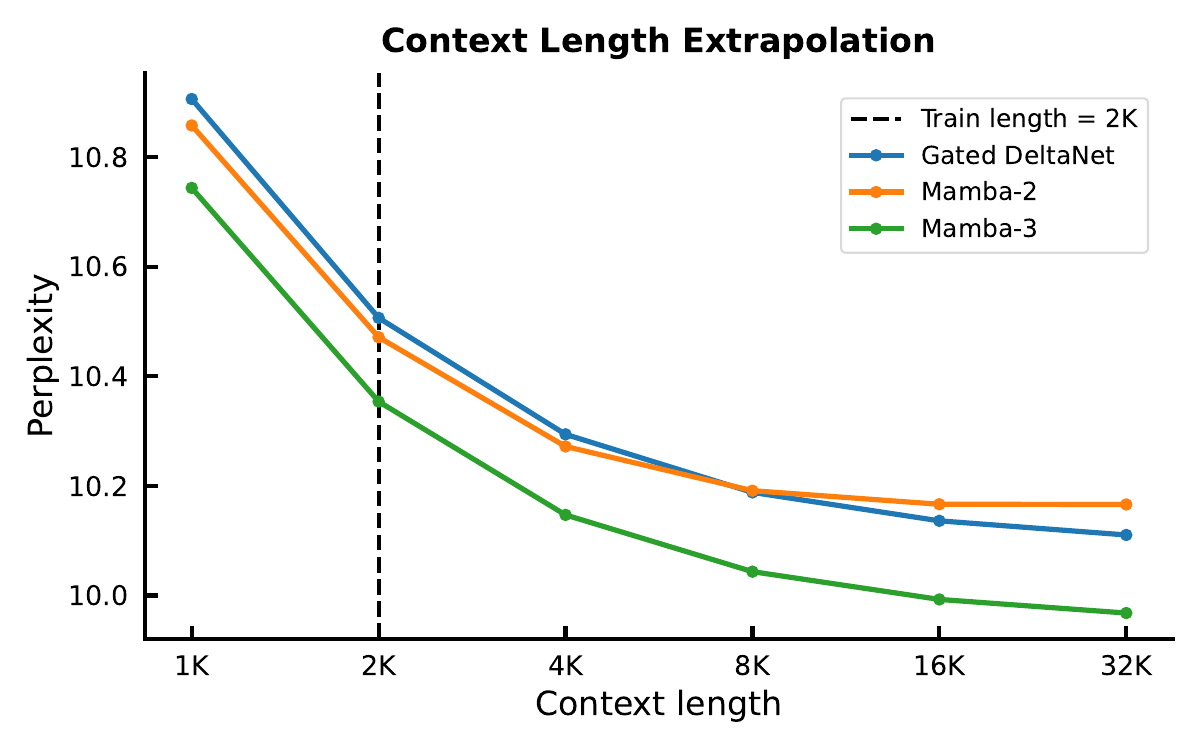}
    \caption{Pretrained 1.5B models' performance on the held-out FineWeb-Edu test set at varying context lengths. Mamba-3 exhibits strong length extrapolation while Mamba-2 falters at longer contexts.}
    \label{fig:len_gen}
\end{figure}

\begin{table}[!t]
    \small
    \centering
    \caption{
    Ablations of optional norm type (grouped vs default) and placement (pre- vs post-gate) on pretrained hybrid Mamba-3 SISO models at the 1.5B scale. All models have BCNorm. No additional norm demonstrates the strongest in-context retrieval performance on average, while pre-gate, grouped RMS results in the best performance on synthetic retrieval, especially on lengths longer than its training context.
    }
    \resizebox{\textwidth}{!}{
    \sisetup{detect-weight,     %
        table-align-text-post=false,
         table-space-text-post={*} %
         }
    \begin{tabular}{l*{7}{S[table-format=2.1]}*{9}{S[table-format=2.1]}}
        \toprule
        \textbf{Mamba-3 Norm Type} & {LM Avg.} & {SWDE} & {SQD.} & {FDA} & {TQA} & {NQ} & {Drop} & \multicolumn{3}{c}{NIAH-Single-1} & \multicolumn{3}{c}{NIAH-Single-2} & \multicolumn{3}{c}{NIAH-Single-3} \\
        \cmidrule(lr){2-2} \cmidrule(lr){3-8} \cmidrule(lr){9-11} \cmidrule(lr){12-14} \cmidrule(lr){15-17}
        Context Length & {---} & \multicolumn{6}{c}{2048} & {1024} & {2048} & {4096} & {1024} & {2048} & {4096} & {1024} & {2048} & {4096} \\
        \midrule
        No Norm & \Uline{56.4} & \Uline{58.5} & \Uline{47.0} & \Uline{65.9} & 64.8 & \Bf 33.4 & 27.0 & \Bf 100.0 & \Bf 100.0 & 36.2 & \Bf 100.0 & \Bf 100.0 & 9.4 & \Bf 99.8 & \Bf 100.0 & 8.8 \\
        Post-Gate Default RMS & \Bf 56.5 & 54.5 & 46.6 & 61.9 & \Uline{65.4} & 31.9 & \Bf 29.2 & \Bf 100.0 & \Bf 100.0 & \Bf 100.0 & \Bf 100.0 & 99.8 & 49.2 & 87.6 & 94.0 & \Uline{62.0} \\
        Pre-Gate Default RMS & 55.9 & 55.4 & 46.9 & \Bf 67.3 & \Uline{65.4} & 33.0 & 28.1 & \Bf 100.0 & \Bf 100.0 & 86.2 & \Bf 100.0 & \Bf 100.0 & \Bf 97.8 & 99.2 & \Uline{97.8} & \Bf 90.2 \\
        Post-Gate Grouped RMS & 56.2 & 51.4 & 46.7 & 56.8 & 64.2 & 30.4 & 27.6 & \Bf 100.0 & \Bf 100.0 & 79.4 & \Bf 100.0 & \Bf 100.0 & 65.8 & 93.8 & 97.0 & 9.6 \\
        Pre-Gate Grouped RMS & 56.1 & \Bf 58.6 & \Bf 47.3 & 52.4 & \Bf 65.7 & \Uline{33.3} & \Uline{28.5} & \Bf 100.0 & \Bf 100.0 & \Bf 100.0 & \Bf 100.0 & \Bf 100.0 & \Uline{96.0} & \Bf 99.8 & 97.2 & 56.8 \\
        \bottomrule
    \end{tabular}
    }
    \label{tab:hybrid_norm_abl}
\end{table}

\begin{figure}[H]
    \centering
    \includegraphics[width=0.75\linewidth]{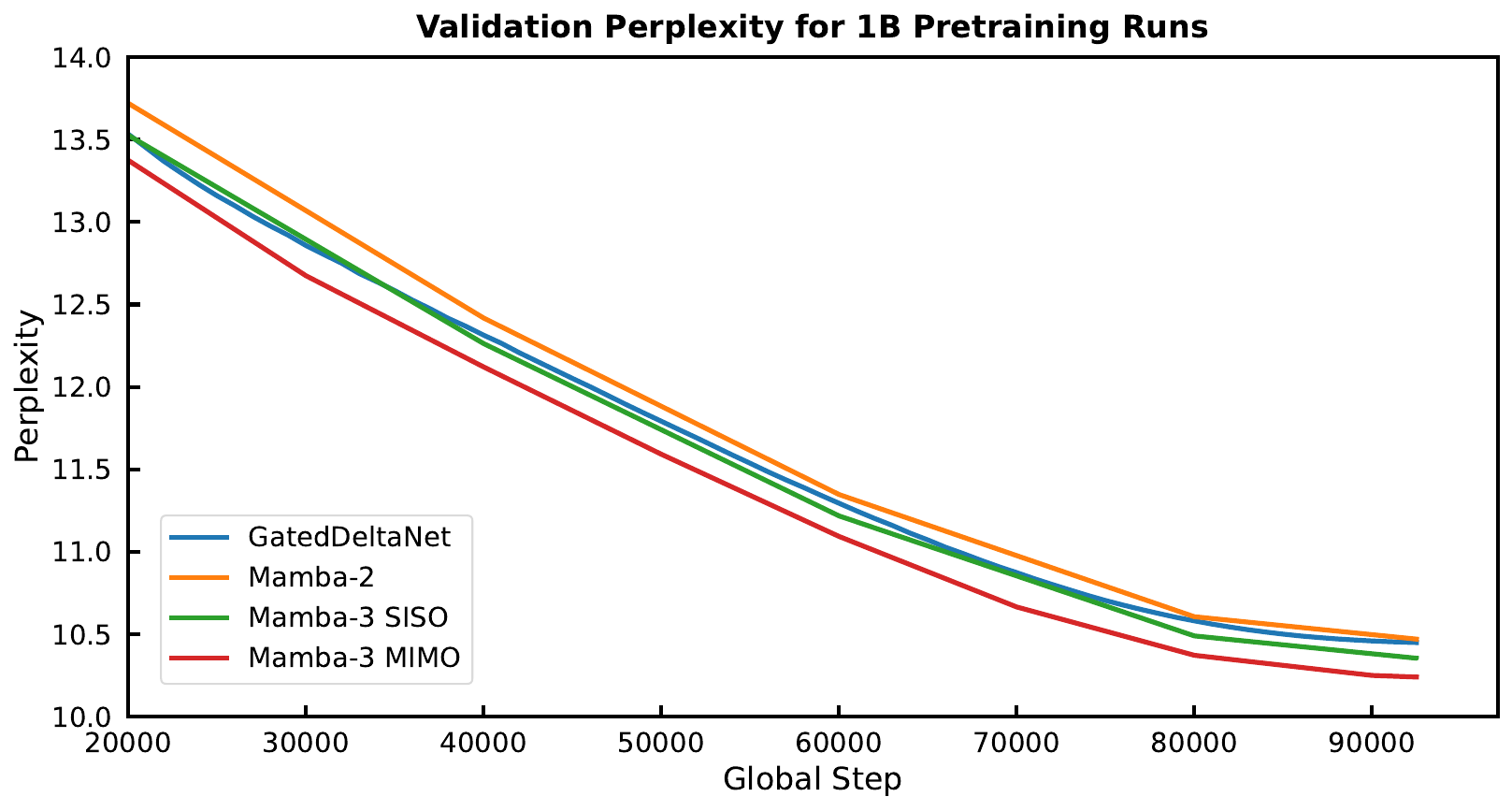}
    \caption{Mamba-3 demonstrates better pretraining performance compared to strong baselines like Mamba-2 and Gated DeltaNet. These are the validation perplexity on FineWeb-Edu of our fully pretrained 1.5B models.}
    \label{fig:pretrain_ppl}
\end{figure}

We also compare the effectiveness of state size usage of Mamba variants to a Gated DeltaNet baseline in \cref{fig:pareto_gdn}. We highlight the difficulty of directly comparing GDN versus Mamba-style models due to the differing head structure (multi-head for GDN compared to multi-value for Mamba). Our experiments hold GDN's $v_{expand}$ to 2 and decrease the head dimension accordingly to vary the relative total state size. Similar to \Cref{fig:pareto}, we train 440M models to $2\times$ Chinchilla tokens ($40\times$ token-to-parameter ratio) and sweep across $d_\text{state}=\{32, 64, 128\}$ for the Mamba models and $d_\text{head dim}=\{32, 64, 128\}$ for GDN. We parameter match all models.
\begin{figure}[H]
    \centering
    \includegraphics[width=0.75\linewidth]{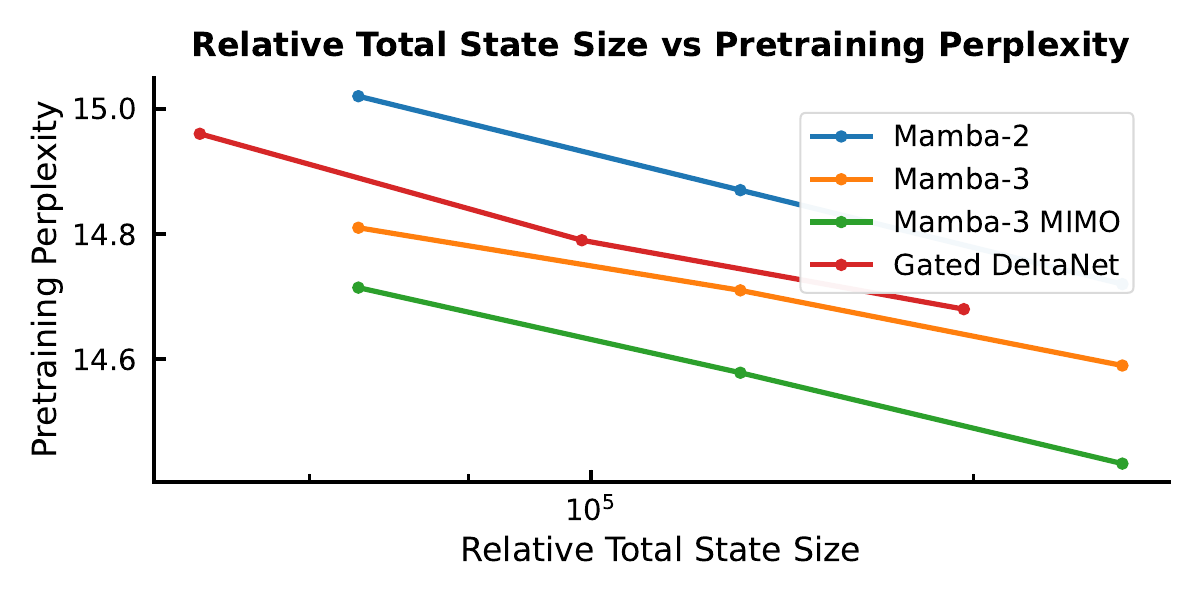}
    \caption{Exploration of state size (inference speed proxy) versus pretraining perplexity (performance proxy). Mamba-3 and Mamba-3 MIMO continue to set the Pareto frontier.}
    \label{fig:pareto_gdn}
\end{figure}

\section{Architecture Ablations}
\label{app:arch-ablations}
We explore our model architecture ablations in this section. All models are trained at the 440M scale to Chinchilla optimal number of tokens ($20\times$ tokens to parameters) with the same experimental procedures as our pretrained models as covered in~\cref{app:exp-details} unless otherwise stated.

\paragraph{$\bB,\bC$ Bias Parameterization.}
The Mamba-3 model's separate $B$ and $C$ biases are head-specific and channel-wise and added to both $\bB$ and $\bC$ after the QK-Norm. While the biases in the final Mamba-3 model are trainable, data-independent parameters and initialized to all ones, we explore various bias parameterizations in~\cref{tab:bc_init_abl}. We find our models are not very sensitive to the initialization of the biases as long as they are positive. We choose the all-ones initialization due to its simplicity.

We also explore the impact of removing the $B$ or $C$ bias on performance in~\cref{tab:bc_bias_presence} (bias is initialized with our default parameterization when utilized). Unlike in ~\citet{yu2025blockbiasedmamba}, which finds that $B$ bias by itself is able to improve performance on Mamba-1, our experiments find that only having $B$ bias hurts performance slightly and that $B$ and $C$ biases have synergistic properties.

\begin{table}[h]
    \centering
    \small
    \makebox[\textwidth][c]{%
        \begin{subtable}[t]{0.45\textwidth}
            \centering
            \begin{tabular}{l c c}
                \toprule
                \textbf{Bias Init.} & \textbf{Trainable} & \textbf{ppl $\downarrow$} \\
                \midrule
                1.0                 & $\checkmark$ & 15.72 \\
                0.0                 & $\checkmark$ & 16.57 \\
                1.0                 & $\times$     & 15.80 \\
                $\mathcal{U}(0,1)$  & $\checkmark$ & 15.76 \\
                $\mathcal{U}(-1,1)$ & $\checkmark$ & 16.07 \\
                \bottomrule
            \end{tabular}
            \caption{Effect of parameterization of the $B$ and $C$ bias on model performance, measured by pretraining perplexity. We find our default initialization of all-ones (first row) provides the best performance, but performance is not sensitive as long as biases are positive.}
            \label{tab:bc_init_abl}
        \end{subtable}%
        \hspace{0.03\textwidth}%
        \begin{subtable}[t]{0.45\textwidth}
            \centering
            \begin{tabular}{c c c}
                \toprule
                $\bB$ \textbf{Bias} & $\bC$ \textbf{Bias} & \textbf{ppl $\downarrow$} \\
                \midrule
                $\times$      & $\times$      & 16.52 \\
                $\checkmark$  & $\times$      & 16.68 \\
                $\times$      & $\checkmark$  & 15.98 \\
                $\checkmark$  & $\checkmark$  & 15.69 \\
                \bottomrule
            \end{tabular}
            \caption{Applying a bias to both $B$ and $C$ leads to the best performance. Only applying $B$ bias (Block-Biased~\citep{yu2025blockbiasedmamba} Mamba-3 variant) does not provide significant gains over the no-bias baseline.}
            \label{tab:bc_bias_presence}
        \end{subtable}%
    }
    \caption{Ablations on $B,C$ bias initialization (left) and presence (right) for Mamba-3.}
    \label{tab:bc_combined}
\end{table}

\section{Inference Kernel Latency Analysis}
\label{app:inference-details}

\subsection{Kernel Implementations and Fusion Structure}
In Table~\ref{tab:benchmark}, we detail the DSL (Triton, TileLang, CuTe, PyTorch) and the fusion level of the kernels used in our latency analysis.
For Mamba-2 and Gated DeltaNet (GDN), we directly use the publicly released Triton kernels from the respective authors. 
For Mamba-3, we implement new inference kernels with a comparable fusion structure: the forward SISO uses a Triton kernel fused with rotary position embeddings and the forward MIMO uses a TileLang kernel with the same fusion level while the decode path uses a CuTe kernel fused with gating and MIMO projection. 

In Tables~\ref{tab:kernel-fusion-forward} and~\ref{tab:kernel-fusion-decode}, 
we abbreviate IP = input projection, Conv = 1D convolution, Gate = gating, 
OP = output projection. Colors indicate implementation backend
(\torch{Torch}, \triton{Triton}, \tilelang{TileLang}, \cute{CuTe}).

\begin{table}[h!]
\centering
\caption{Kernel DSL and fusion structure for \textbf{forward} (prefill) kernels.}
\label{tab:kernel-fusion-forward}
\begin{tabular}{l l p{7.5cm}}
\toprule
\textbf{Model (Forward)} & \textbf{Kernel DSL} & \textbf{Fusion Level} \\
\midrule
Mamba-2        
& Triton 
& \torch{IP}, \triton{Conv}, \triton{SSM}, \triton{Gate}, \torch{OP} \\[3pt]

Gated DeltaNet 
& Triton 
& \torch{IP}, \triton{Conv}, \triton{Chunked Delta}, \triton{Gate}, \torch{OP} \\[3pt]

Mamba-3 (SISO) 
& Triton 
& \torch{IP}, \triton{SSM+Rotary+Gate}, \torch{OP} \\[3pt]

Mamba-3 (MIMO) 
& TileLang 
& \torch{IP}, \tilelang{SSM+Rotary+Gate}, \torch{OP} \\
\bottomrule
\end{tabular}
\end{table}

\begin{table}[h!]
\centering
\caption{Kernel DSL and fusion structure for \textbf{decode} kernels.}
\label{tab:kernel-fusion-decode}
\begin{tabular}{l l p{7.5cm}}
\toprule
\textbf{Model (Decode)} & \textbf{Kernel DSL} & \textbf{Fusion Level} \\
\midrule
Mamba-2        
& Triton        
& \torch{IP}, \triton{Conv}, \triton{SSM}, \triton{Gate}, \torch{OP} \\[3pt]

Gated DeltaNet 
& Triton        
& \torch{IP}, \triton{Conv}, \triton{Recurrent Delta}, \triton{Gate}, \torch{OP} \\[3pt]

Mamba-3 (SISO) 
& CuTe + Triton 
& \torch{IP}, \triton{Rotary}, \cute{SSM+Gate}, \torch{OP} \\[3pt]

Mamba-3 (MIMO) 
& CuTe + Triton 
& \torch{IP}, \triton{Rotary}, \cute{SSM+Gate}, \torch{OP} \\
\bottomrule
\end{tabular}
\end{table}

\subsection{Extended Prefill and Prefill+Decode Latency Measurements}

\paragraph{Models.}
We benchmark Mamba-3 1.5B (SISO), Mamba-2 1.5B, Gated DeltaNet 1.5B, and a strong Transformer baseline implemented via the vLLM engine (v0.11.0) with Llama-3.2~1B.\footnote{\url{https://huggingface.co/meta-llama/Llama-3.2-1B}.}  
All recurrent models are trained at the 1.5B scale with $d_{\text{model}} = 2048$ and $24$ layers. For Mamba variants we set state size as $128$ and head dimension $64$; for GDN we use QK head dimension as $128$.  

\paragraph{Setting.}

Sequence lengths were swept over $L \in \{512, 1024, 2048, 4096, 16384\}$ for prefill, with an equal number of tokens decoded.
For all sequence lengths, we use a batch size of 128. To report vLLM numbers at sequence length $16384$, we measure performance at the same sequence length with batch size 16.
We then scale the result by a factor of 8 to approximate performance at batch size 128 since direct measurement at this setting exceeds GPU memory.
This provides a reasonable estimate because each batch is processed independently by each SM on the GPU, so we expect performance of Transformer models to scale linearly with batch size.
For recurrent models, when the size of input and output tensors exceeds GPU memory at sequence length $16384$, we utilize a state passing approach that processes the sequence in two halves while propagating the hidden state between segments to avoid materializing the entire sequence at once.
We use a single H100-SXM 80GB GPU and report wall-clock times (in seconds) over three repetitions.

We observe that (i) Mamba-3 adds minimal forward-pass cost, showing that the exponential-trapezoidal update, complex state tracking, and MIMO parameterization remain lightweight; (ii) decode latency is competitive across recurrent models; and (iii) recurrent mixers scale more gently with context length than vLLM Llama-3.2-1B, which grows much faster with $L$ due to KV-cache overhead.

\end{document}